\newcommand{\comment}[1]{}
\newcommand{\et}{\emph{et al.}}
\newcommand{\eg}{\emph{e.g.}}
\newtheorem{thm}{Theorem}
\newtheorem{lem}{Lemma}
\newtheorem{defn}{Definition}
\newtheorem{assum}{Assumption}
\DeclareMathOperator*{\argmin}{argmin}
\newcommand{\led}[1]{\overset{\text{\ding{#1}}}{\leq}}
\newcommand{\lee}[1]{\overset{\text{\ding{#1}}}{=}}
\newcommand{\Rs}[1]{{\mathbb{R}^{#1}}}
\newcommand{\order}[1]{{\cal{O}}\left(#1\right)}
\newcommand{\Lt}[1]{\mathcal{L}_{#1}}
\newcommand{\Ltt}[1]{\widetilde{\mathcal{L}}_{#1}}
\newcommand{\alphamax}{\alphai{\mbox{\scriptsize{max}}}}
\newcommand{\Linfo}{\mathcal{L}_{\mbox{\scriptsize{n}}}}
\newcommand{\Lcon}{\mathcal{L}_{\mbox{\scriptsize{c}}}}
\newcommand{\nn}{\nonumber}
\newcommand{\betad}{\textsf{\footnotesize{Beta}}}
\newcommand{\xmfi}[1]{\bm{x}_{#1}'}
\newcommand{\ymfi}[1]{\bm{y}_{#1}'}
\newcommand{\bmbi}[1]{\bar{\bm{b}}_{#1}}
\newcommand{\yms}{\bm{y}^{*}}
\newcommand{\ymsi}[1]{\bm{y}_{#1}^{*}}
\newcommand{\yi}[1]{y_{#1}}
\newcommand{\Bmi}[1]{\bm{B}_{#1}}
\newcommand{\Bmb}{\bar{\bm{B}}}
\newcommand{\norm}[1]{\|{#1}\|_{2}}
\newcommand{\normin}[1]{\|{#1}\|_{\infty}}
\newcommand{\normf}[1]{\|{#1}\|_{F}}
\newcommand{\tin}[1]{\|{#1}\|_{\infty}}
\newcommand{\alphab}{\widetilde{\alpha}}
\newcommand{\bp}{\beta}
\newcommand{\bn}{\alpha}
\newcommand{\be}{\epsilon}
\newcommand{\pa}{{\partial}}
\newcommand{\distas}{\overset{\text{i.i.d.}}{\sim}}
\newcommand{\Sigm}{\bm{\Sigma}}
\newcommand{\Scp}{\mathcal{S}_+}
\newcommand{\Scn}{\mathcal{S}_-}
\newcommand{\wmi}[1]{\bm{w}_{#1}}
\newcommand{\eps}{\varepsilon}
\newcommand{\alphai}[1]{\alpha_{#1}}
\newcommand{\betai}[1]{\beta_{#1}}
\newcommand{\opnorm}[1]{\left\|#1\right\|}
\newcommand{\fronorm}[1]{\left\|#1\right\|_{F}}
\newcommand{\twonorm}[1]{\left\|#1\right\|_{\ell_2}}
\newcommand{\abs}[1]{\left|#1\right|}
\newcommand{\mtx}[1]{\bm{#1}}
\newcommand{\cb}{\mtx{c}}
\newcommand{\gradt}[2]{{\nabla\Lt{#1}(#2)}}
\newcommand{\xmi}[1]{\bm{x}_{#1}}
\newcommand{\xmti}[1]{\widetilde{\bm{x}}_{#1}}
\newcommand{\ymti}[1]{\widetilde{\bm{y}}_{#1}}
\newcommand{\ymt}{\widetilde{\bm{y}}}
\newcommand{\ymgi}[1]{\bm{y}_{#1}}
\newcommand{\ymai}[1]{\widetilde{\bm{y}}_{#1}}
\newcommand{\bmi}[1]{\bm{b}_{#1}}
\newcommand{\cmi}[1]{\bm{c}_{#1}}
\newcommand{\pmi}[1]{\bm{p}_{#1}}
\newcommand{\pmii}[2]{\bm{p}^{#1}_{#2}}
\newcommand{\qmi}[1]{\bm{q}_{#1}}
\newcommand{\qmii}[2]{\bm{q}^{#1}_{#2}}
\newcommand{\vmi}[1]{\bm{v}_{#1}}
\newcommand{\rmi}[1]{\bm{r}_{#1}}
\newcommand{\rmti}[1]{\widetilde{\bm{r}}_{#1}}
\newcommand{\rmbi}[1]{\widehat{\bm{r}}_{#1}}
\newcommand{\embi}[1]{\widehat{\bm{e}}_{#1}}
\newcommand{\emi}[1]{\bm{e}_{#1}}
\newcommand{\ymi}[1]{\bm{y}_{#1}}
\newcommand{\ymbi}[1]{\widehat{\bm{y}}_{#1}}
\newcommand{\ymbii}[2]{\bar{\bm{y}}^{#1}_{#2}}
\newcommand{\ymwii}[2]{\widetilde{\bm{y}}^{#1}_{#2}}
\newcommand{\LL}{\mathcal{L}}
\newcommand{\J}{\mathcal{J}}
\newcommand{\Ji}[1]{\mathcal{J}_{#1}}
\newcommand{\Jti}[1]{\widetilde{\mathcal{J}}_{#1}}
 \newcommand{\cm}{\bm{c}}
\newcommand{\emm}{\bm{e}}
\newcommand{\rmm}{\bm{r}}
\newcommand{\rmw}[1]{\widetilde{\bm{r}}}
\newcommand{\rmb}[1]{\widehat{\bm{r}}}
\newcommand{\wm}{\bm{w}}
\newcommand{\xm}{\bm{x}}
\newcommand{\xmt}[1]{\widetilde{\bm{x}}}
\newcommand{\ym}{\bm{y}}
\newcommand{\ymg}{\bm{y}}
\newcommand{\yma}{\widetilde{\bm{y}}}
\newcommand{\ymw}{\widetilde{\bm{y}}}
\newcommand{\um}{\bm{u}}
\newcommand{\vm}{\bm{v}}
\newcommand{\Bm}{\bm{B}}
\newcommand{\Cm}{\bm{C}}
\newcommand{\Imm}{\bm{I}}
\newcommand{\Wm}{\bm{W}}
\newcommand{\Wmt}{\widetilde{\bm{W}}}
\newcommand{\Gm}{\bm{G}}
\newcommand{\Xm}{\bm{X}}
\newcommand{\Xmt}{\widetilde{\bm{X}}}
\newcommand{\Um}{\bm{U}}
\newcommand{\Wmi}[1]{\bm{W}_{#1}}
\newcommand{\Wmti}[1]{\widetilde{\bm{W}}_{#1}}
\newcommand{\Xmti}[1]{\widetilde{\bm{X}}_{#1}}
\newcommand{\Oc}[1]{\mathcal{O}\left(#1\right)}
\newcommand{\R}{\mathbb{R}}
\newcommand{\W}{\mathcal{W}}
\newcommand{\EE}{\mathbb{E}}
\newcommand{\Pro}{\mathcal{P}}
\newcommand{\Ly}{L_y}
\newcommand{\xim}{\bm{\xi}}
\newcommand{\ft}{g}
\newcommand{\fo}{f}
\newcommand{\N}{\bm{\mathcal{N}}}
\newcommand{\Qge}{\bm{\mathcal{Q}}_{e}}
\newcommand{\Qgp}{\bm{\mathcal{Q}}}
\newcommand{\Qa}{\widetilde{\bm{\mathcal{Q}}}}
\newcommand{\X}{\bm{\mathcal{X}}}
\newcommand{\Dg}{\bm{\mathcal{D}}}
\newcommand{\Da}{\widetilde{\bm{\mathcal{D}}}}
\newcommand{\D}{\bm{\mathcal{D}}}
\newcommand{\SSS}{\bm{\mathcal{S}}}
\newcommand{\F}{\mathcal{F}}
\newcommand{\Y}{\bm{\mathcal{Y}}}
\newtheoremstyle{corstyle}
{3.2pt} 
{0.2pt} 
{} 
{} 
{\bfseries} 
{.} 
{.3em} 
{} 
\theoremstyle{corstyle} 
\title{A Theory-Driven Self-Labeling Refinement Method for Contrastive Representation Learning}
\author{ Pan Zhou$^{*}$,  \quad   \normalsize{Caiming Xiong$^{*}$},  \quad  \normalsize{Xiao-Tong Yuan$^{\dagger}$}, \quad   \normalsize{Steven  Hoi$^{*}$}\\
	{$^{*}$ Salesforce Research \qquad  $^{\dagger}$ Nanjing University of Information
		Science $\&$ Technology} \\ 
	{ \texttt panzhou3@gmail.com}    \ \   {\texttt\tt \{cxiong,  shoi\}@salesforce.com}  \ \  {\texttt \tt xtyuan@nuist.edu.cn} 
}
\begin{document}

\maketitle

\begin{abstract}
For an image  query, unsupervised contrastive learning  labels crops of  the same image as positives,  and other image crops as  negatives. Although intuitive, such a native label assignment strategy cannot reveal the underlying semantic similarity between a  query and  its positives and negatives, and impairs performance,  since some negatives are  semantically similar to  the query or even share the same semantic class as the query.  In this work, we first  prove that for  contrastive learning,  inaccurate label assignment heavily  impairs its generalization for semantic instance discrimination, while accurate labels  benefit its generalization.  Inspired by this theory, we  propose   a novel self-labeling refinement approach for contrastive learning. It improves the label quality via two complementary  modules:  (i)  self-labeling refinery (SLR) to  generate accurate labels and (ii)  momentum mixup (MM)  to enhance similarity between query and its positive. SLR uses a positive of a query to estimate  semantic similarity between  a query and its positive and negatives, and  combines estimated similarity with  vanilla label assignment in contrastive learning to  iteratively generate  more accurate and informative soft labels. We theoretically show that our SLR can exactly recover the true semantic  labels of  label-corrupted  data, and  supervises   networks to achieve zero prediction  error on classification tasks.  MM randomly  combines   queries and  positives to increase  semantic similarity between the generated virtual queries and their positives so as to improves label accuracy.  Experimental results on CIFAR10,  ImageNet, VOC and COCO show the effectiveness of our method. 
 Code  will be released online. 
\end{abstract}

\vspace{-1.0em}
\section{Introduction}\label{introduction}
\vspace{-0.5em}
Self-supervised learning (SSL)  
 is an effective approach to  learn features without  manual annotations, with great  success witnessed to many downstream tasks, e.g.  image classification and object detection~\cite{he2020momentum,chen2020improved,chen2020big,chen2020simple,caron2020unsupervised,grill2020bootstrap,li2020prototypical}.  The methodology of SSL is to construct a  pretext task that can obtain data labels via well designing the task  itself,  and then build a network to  learn from these tasks. For instance, by constructing  jigsaw puzzle~\cite{noroozi2016unsupervised},   spatial arrangement identification~\cite{doersch2017multi}, orientation~\cite{komodakis2018unsupervised}, or chromatic channels~\cite{zhang2016colorful} as a pretext task, SSL  learns high-qualified features  from the pretext task  that can  well transfer to downstream tasks. As it gets rid of  the manual annotation requirement  in  supervised deep learning, SSL has been widely attracted increasing researching interests~\cite{he2020momentum,aa2020}.

As a leading approach in SSL, 
contrastive learning~\cite{hadsell2006dimensionality, he2020momentum,chen2020simple,hjelm2018learning,oord2018representation,bachman2019learning}  constructs a novel    instance discrimination pretext task to
train a network so that the representations of different crops  (augmentations)  of the same instance    are  close, while representations of different instances are far from each other.  Specifically, for an  image crop query, it  randomly augments the same image to obtain   a positive, and view    other  image crops as negatives. Then it constructs a one-hot label over the positive and negatives   to pull the query together with its positive  and push the query away its negatives  in the feature space.

\noindent{\textbf{Motivation.}} But the  one-hot labels in contrastive learning are indeed  inaccurate and  uninformative.  It is because for a query, it could be   semantically similar or even more similar   to some negatives than its positives.  Indeed,  some negatives  even belong to the same semantic class as the query~\cite{arora2019theoretical,chuang2020debiased,wei2020co2}.  It holds in practice, as (i) to achieve good performance,  one often uses sufficient  negatives that are  much more than the semantic class number, e.g. 
in MoCo~\cite{he2020momentum},  unavoidably leading to the issue on negatives; (ii)   even for the same image,  especially for an  image  containing different objects which occurs in ImageNet, random augmentations, e.g. crop,  provide  crops with (slightly) different semantic information,  and thus some of the huge negatives could be  more similar to  query.   
So  the one-hot label cannot well reveal the semantic similarity between  query and its positives and ``negatives",  and  cannot guarantee the semantically similar samples to close each other,  leading to performance degradation.

\noindent{\textbf{Contributions.}}  In this work, we alleviate the above label 
issue,  and  derive some new results and alternatives   for contrastive learning. Particularly,  we theoretically  show that    inaccurate labels impair the performance of  contrastive learning. Then we propose a self-labeling refinement method  to  obtain more accurate  labels for contrastive learning.   Our  main contributions are highlighted below.

Our first contribution is proving that the generalization error of MoCo for instance discrimination linearly depends on the discrepancy between the estimated labels  (e.g. one-hot labels) in MoCo and the true labels that  really reflect  semantical similarity  between a query  and its positives and negatives.  Formally, given $n$ training queries $\Dg\!\!=$ $\{\xmi{i}\}_{i=1}^n$ with  estimated labels $\{\ymi{i}\}_{i=1}^n$ (e.g. one-hot labels in MoCo) and ground truth labels $\{\ymsi{i}\}_{i=1}^n$  on their corresponding positives and negatives,   the generalization error of MoCo for instance discrimination    is lower bounded by $\mathcal{O}\big( \EE_{\Dg} \! \left[\left\| \ymi{} \!-\! \ymsi{}\right\|_2 \right]\!\big)$ where $ \EE_{\Dg} [\left\| \ymi{} \!-\! \ymsi{}\right\|_2]\!=\!\frac{1}{n}\!\sum_{i=1}^n \!\| \ymi{i} \!-\! \ymsi{i}\|_2$, and  is upper bounded by  $\mathcal{O}\big(\!\sqrt{\!\ln(|\F|)/n} $ $+ \EE_{\Dg}  \left[\left\| \ymi{}\! -\! \ymsi{}\right\|_2 \right] \big)$, where $|\F|$ is    the covering number  of the network hypotheses  in MoCo. It means that the more accurate of the estimated labels $\{\ymi{i}\}_{i=1}^n$, the better generalization  of MoCo for instance discrimination.

Inspired by our theory, we propose a Self-lAbeliNg rEfinement (SANE) method which iteratively  employs the network  and data themselves to   generate more accurate and  informative soft labels for contrastive learning.  SANE has  two  complementary modules:   (i) \textit{Self-Labeling Refinery (SLR)}  to explicitly generate accurate labels, and (ii) \textit{Momentum Mixup (MM)} to  increase similarity between query and its positive and implicitly improve label accuracy.  Given a query, SLR uses  its one positive to estimate  semantic similarity between the query and its   keys (i.e. its positive and negatives)  by  computing  their feature  similarity, since a query and its positive   come from the same image  and should have close   similarity on the same  keys.  Then  SLR linearly combines the  estimated  similarity of a query with its vanilla one-hot  label  in contrastive learning to  iteratively generate  more accurate and informative soft labels. Our  strategy is that at the early training stage,  one-hot label has heavy combination weight to provide relatively accurate labels;  along with more training, the estimated  similarity becomes more accurate and informative, and its combination  weight becomes larger as it explores useful underlying semantic information between the query and its  keys that is missing in the one-hot  labels.  
Besides,  we prove that when the semantic labels in the instance discrimination task  are corrupted,  our  SLR    can exactly recover the true semantic  labels of training data, and  networks trained with  our SLR can exactly predict the true semantic labels of test samples.   

Moreover, we introduce MM for contrastive learning to further reduce the possible label noise  and  also increase augmentation diversity.  Specifically, we randomly  combines queries $\{\xmi{i}\}_{i=1}^{n}$ and their positives $\{\xmti{i}\}_{i=1}^{n}$ as  $
\xmfi{i} \!=\! \theta \xmi{i} + (1 \!-\!  \theta)  \xmti{k}$ and  estimate their  labels as  $
\ymfi{i} \!= $ $ \theta \ymbii{}{i}  +(1  \!- \!  \theta)  \ymbii{}{k}$, where indexes $i$ and $k$ are randomly selected, $\ymbii{}{i}$ is the  label of both $\xmi{i}$ and $\xmti{i}$ estimated by our label refinery,  and $\theta\!\in\!(0,1)$ is a random variable.  In this way, the component $ \xmti{k}$ in the virtual query $\xmfi{i}$ directly increases the similarity between the query $\xmfi{i}$  and the positive key $\xmti{k}$. So the label weight $(1-\theta)$ of  label $\ymfi{i}$ on  positive key $\xmti{i}$ to bring $\xmfi{i}$ and $\xmti{k}$ together  is relatively accurate, as $\xmfi{i}$ really contains the semantic information of $\xmti{k}$. Meanwhile, the possible noise at the remaining positions of label  $\ymfi{i}$ is scaled by $\theta$ and becomes smaller. In this way, MM also improves the label quality.  


\textbf{Other Related Work.}  
To estimate similarity between a query and its negatives,  Wei~\et~\cite{wei2020co2} approximated the  similarity by computing  cosine  similarity  between a positive  and  its negatives, and  directly replaced the one-hot label  for instance discrimination. Wang \et~\cite{aa2020}  used   similar similarity estimated on weak augmentations to supervise the learning of strong augmentations.  In contrast, we  respectively estimate the  similarities of the query on all contrastive keys ( its  positive and  negatives) and on only negatives,  and  linearly combines  two estimated similarities with vanilla one-hot label  to obtain more accurate and informative label with  provable performance guarantee.  Learning from noisy label, e.g.~\cite{reed2014training,bagherinezhad2018label},   
also uses soft labels generalized by a network to  
supervise representation learning, and often  focus on (semi-)supervised learning that differs from our self-supervised learning.   

Two relevant  works   \cite{kim2020mixco,lee2020mix}   performed vanilla mixup on all query instances to increase data diversity.  Differently, our momentum mixup mainly aims to reduce  label noise, as it randomly combines one query with one positive  (instead of one query) of other instances to increase the similarity between the query and its  its positive.  Verma \et~\cite{verma2020towards} showed that mixup is a better domain-agnostic noise than Gaussian noise for positive pair construction. But they did not perform mixup on labels, which is contrast  to \cite{kim2020mixco,lee2020mix} and ours.  See more  discussion in Sec.~\ref{secmomentummixup} and empirical comparison in Sec.~\ref{AblationStudy}. 

\vspace{-0.8em}
\section{Inspiration: A Generalization Analysis of MoCo}\label{analysissect}
\vspace{-0.4em}
In this section, we first briefly review the MoCo~\cite{he2020momentum} method popularly studied for contrastive learning, and then analyze the impact of inaccurate label assignment on its generalization ability.

\textbf{Review of MoCo.} The MoCo method contains  an online network $\fo_{\wm}$ and a target network  \scalebox{1}{$\ft_{\xim}$} receptively parameterized by $\wm$ and $\xim$.  Both $\fo_{\wm}$ and  \scalebox{1}{$\ft_{\xim}$} consists of a  feature encoder and   a  projection head (\eg~3-layered MLP). Given a  minibatch $\{\cmi{i}\}_{i=1}^s$ at each iteration, it first randomly augments each vanilla image $\cmi{i}$  into two views $(\xmi{i},\xmti{i})$ and  optimizes the following  contrastive loss:  
\begin{equation}\label{infoNCE}
	\Linfo(\wm) = - \frac{1}{s}  \sum\nolimits_{i=1}^{s}   \log  \Big( \frac{\sigma( \xmi{i} ,\xmti{i})}{\sigma( \xmi{i},\xmti{i}) + \sum_{l=1}^b \sigma( \xmi{i},\bmi{l})}\Big),
\end{equation}
where \scalebox{1}{$\sigma$}$(\xmi{i},\xmti{i}) \!=\! \exp\big(-  \!\frac{\langle\fo(\xmi{i}),\ft(\xmti{i}) \rangle}{\tau \|\fo(\xmi{i})\|_2 \cdot \|\ft(\xmti{i}) \|_2}\big)$ with a temperature  \scalebox{1}{$\tau$}. The dictionary  $\Bm\!=\!\{\bmi{i}\}_{i=1}^b$ denotes the negative keys   of current minibatch queries $\{\xmi{i}\}_{i=1}^s$, and is often of huge size  to achieve satisfactory  performance, e.g. 65,536 in MoCo.  In practice, $\Bm$ in MoCo is updated  by the  minibatch features $\{\ft(\xmti{i})\}_{i=1}^s$ in a first-in and first-out order. By  fixing   \scalebox{1}{$\ft_{\xim}$}   and updating  $\fo_{\wm}$  in Eqn.~\eqref{infoNCE}, MoCo   pushes the query  $\xmi{i}$ away from its negative keys in dictionary $\Bm$ while pulling together its positive  key  $\xmti{i}$. For \scalebox{1}{$\ft_{\xim}$},  it is updated  via exponential moving average, i.e. $\xim \!=\! (1 - $\scalebox{1.1}{$\iota$}) $\xim + $\scalebox{1.1}{$\iota$}$\wm$ with a constant \scalebox{1}{$\iota$} $\in (0,1)$.

From Eqn.~\eqref{infoNCE}, one can observe that MoCo views each image as an individual class and uses one-hot label $\ym\!\in\!\Rs{b+1}$ (its nonzero position is at the position of its positive key)  to train  $\fo_{\wm}$.
However, as mentioned in Sec.~\ref{introduction},  the one-hot labels  cannot reveal the semantic similarity between a query $\xmi{i}$ and its positive and negatives and thus  impair  representation learning. In the following, we theoretically analyze  the   effect of inaccurate labels to the generalization of MoCo for instance discrimination.

\textbf{Generalization Analysis.} We focus on analyzing MoCo in the final training stage where  the sample (key) distribution in the dictionary $\Bm$ is almost fixed. This simplified setup is reasonable because  (i) in the final training stage, the target network \scalebox{1}{$\ft_{\xim}$} almost does not change  due to the very small  momentum updating parameter \scalebox{1}{$\iota$} in practice and the oncoming convergence of  the online network $\fo_{\wm}$;  (ii)  dictionary is sufficient large  to cover different patterns in the dataset.  This fixed sample distribution  simplifies the analysis, and also  provides valuable insights.

Let $\Dg=\{(\xmi{i}, \xmti{i})\}_{i=1}^n$ denote  the training positive  pairs in  MoCo sampled from an unknown distribution $\SSS$. Moreover, the query $\xmi{i}$ has ground truth soft label $\ymsi{i}\!\in\!\Rs{b+1}$ over the key set $\Bmi{i}\!=\!\{\xmti{i}\cup \Bm\}$, where  the $t$-th entry $\ymsi{it}$ measures the semantic similarity between $\xmi{i}$ and the $t$-th key $\bmi{t}'$ in $\Bmi{i}$.  In practice, given query $\xmi{i}$ and dictionary $\Bmi{i}$,  MoCo estimates an one-hot label of $\xmi{i}$ as  $\ymi{i} \in \Rs{b+1}$ whose first entry is one and remaining entries are zero. So $\ymi{i}$ ignores the semantic similarity between  $\xmi{i}$ and  keys in $\Bmi{i}$, and  differs from $\ymsi{i}$.  Then MoCo  minimizes an empirical risk: 
\begin{equation}\label{empiricalrisk}
	\Qa(\fo_{\wm}) = \frac{1}{n} \sum\nolimits_{i=1}^{n} \ell(h(\fo_{\wm}(\xmi{i}), \Bmi{i}),\ymgi{i}),
\end{equation}
where $h(\fo_{\wm}(\xmi{i}),\! \Bmi{i})\!=\![\sigma(\xmi{i}, \!\xmti{i}), \sigma(\xmi{i}, \!\bmi{1}),\cdots\!, \sigma(\xmi{i}, \!\bmi{b})]$  denotes the predicted class probability, and $\ell(\cdot,\cdot)$ is cross-entropy loss.
Ideally, one should  sample sufficient pairs $(\xmi{i},\xmti{i})$ from the distribution $\SSS$ and use the ground truth label $\ymsi{i}$ of  $\xmi{i}$  to minimize the  population risk:
\begin{equation}\label{populationrisk}
	\Qgp(\fo_{\wm}) = \EE_{(\xmi{i},\xmti{i})\sim\SSS}\left[ \ell(h(\fo_{\wm}(\xmi{i}), \Bmi{i}),\ymsi{i})\right].
\end{equation}
 Here we assume the ground truth label $\ymsi{i}$ is soft which is indeed  more reasonable and stricter  than  the one-hot label setting especially for contrastive learning~\cite{hinton2015distilling}.  It is because soft label  requires the networks to capture the semantic similarity between  query and the instances in $\Bmi{i}$ and  bring semantically similar instances together, greatly helping  downstream tasks (e.g. classification) where global semantic information is needed, while one-hot label only needs networks to distinguish each instance from others and does not consider the global semantic structures in the data.   
 As both the data distribution $\SSS$ and the ground truth labels are unknown, MoCo   optimizes the empirical risk $\Qa(\fo_{\wm})$ in~\eqref{empiricalrisk} instead of the population risk $\Qgp(\fo_{\wm})$ in~\eqref{populationrisk}.
It is natural to ask whether $\fo_{\wm}$ by minimizing  $\Qa(\fo_{\wm}) $ can well perform instance discrimination task in contrastive learning, i.e. whether $\fo_{\wm}$ can capture the semantic similarity  ($\ymsi{i}$) between any test sample  $(\xmi{i},\xmti{i})\! \sim \!\SSS$ and the keys (samples) in $\Bmi{i}$.  To solve this issue, 
Theorem~\ref{proximatetrisk} analyzes the generalization error of $\fo_{\wm}$  for instance discrimination.  
\begin{thm} \label{proximatetrisk}
\!\! Suppose $\ell(h(\fo_{\wm}(\xmi{}), \!\Bmi{\xm}),\ymi{}) \!\in\! [a_1, a_2]$,  $\ell(\cdot,\ymi{})$ is $\Ly$-Lipschitz w.r.t. $\ym$. 	Let $\F$ be a finite class of hypotheses $\ell(h(\fo_{\wm}(\xmi{}),\! \Bmi{\xm}),\ymi{})\!: \! \X \!\times \!\Y \!\rightarrow \!\Rs{}$ and $|\F|$ be its covering number under $\|\!\cdot\!\|_{\infty}$ metric.  \\
	(1) Let $\EE_{\Dg\sim\SSS} \! \left[\left\| \ymi{} \!-\! \ymsi{}\right\|_2\right]=\EE_{\Dg\sim\SSS}\! \left[ \frac{1}{n}\!\sum_{i=1}^n\!\left\| \ymi{i}\! -\! \ymsi{i}\right\|_2\right]$. For any $\nu\! \in\!(0,1)$, it holds 
	\begin{equation*}
		\begin{split}
			\left| \Qgp(\fo_{\wm}) -\Qa(\fo_{\wm}) \right| \leq  \Ly  \EE_{\Dg\sim\SSS}  \left[\left\| \ymi{} - \ymsi{}\right\|_2 \right]  + \sqrt{\frac{2 (a_2-a_1)^2 V_{\Dg} \ln(2|\F|/\nu)}{n}} + \frac{7(a_2-a_1)^2  \ln(2|\F|/\nu)}{3(n-1)},
		\end{split}
	\end{equation*}
with probability at least $1\!-\!\nu$,	where  $V_{\Dg} $ is the variance of $\ell(h(\fo(\xm),\! \Bmi{\xm}),\yms{})$ on the data  $\Dg$. \\
(2) There exists a contrastive classification problem,  a class of hypotheses   $\ell(h(\fo_{\wm}(\xmi{}),  \Bmi{\xm}),\ymi{}) :  \X  \times  \Y \rightarrow \Rs{}$ and a constant $c_0$ such that the generalization error of $\fo_{\wm}$ is lower bounded
	\begin{equation*}
		\big|\Qgp(\fo_{\wm})  - \Qa(\fo_{\wm}) \big| \geq c_0 \cdot \EE_{\Dg\sim\SSS}  \left[\left\| \ymi{} - \ymsi{}\right\|_2 \right]. 
	\end{equation*}
\end{thm}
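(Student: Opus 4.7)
The plan is to prove (1) by a triangle-inequality decomposition that separates a label-mismatch term (controlled by Lipschitzness of $\ell$ in $\ym$) from a uniform-concentration term (controlled by empirical Bernstein over a finite $\|\cdot\|_\infty$-cover of $\F$), and to prove (2) by an explicit one-hypothesis, point-mass construction that realizes equality. For (1), introduce the intermediate empirical risk evaluated at the \emph{true} labels,
\begin{equation*}
\bar{\Qgp}(\fo_{\wm}) := \tfrac{1}{n}\sum_{i=1}^{n}\ell\big(h(\fo_{\wm}(\xmi{i}),\Bmi{i}),\,\ymsi{i}\big),
\end{equation*}
and write $\Qgp(\fo_{\wm})-\Qa(\fo_{\wm}) = \big(\Qgp(\fo_{\wm})-\bar{\Qgp}(\fo_{\wm})\big)+\big(\bar{\Qgp}(\fo_{\wm})-\Qa(\fo_{\wm})\big)$. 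The second summand depends only on the labels: by the $\Ly$-Lipschitz hypothesis, $|\bar{\Qgp}(\fo_{\wm})-\Qa(\fo_{\wm})| \leq \Ly \cdot \tfrac{1}{n}\sum_{i=1}^{n} \|\ymi{i}-\ymsi{i}\|_2$, which is exactly $\Ly\,\EE_{\Dg\sim\SSS}[\|\ymi{}-\ymsi{}\|_2]$ under the shorthand adopted in the statement.

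The first summand $|\Qgp(\fo_{\wm})-\bar{\Qgp}(\fo_{\wm})|$ is a standard empirical-process gap over the i.i.d.\ bounded loss values $\ell(h(\fo_{\wm}(\xmi{i})),\ymsi{i}) \in [a_1,a_2]$. For each fixed $\fo_{\wm}$ in an $\|\cdot\|_\infty$-cover of $\F$ of cardinality $|\F|$, apply the empirical Bernstein inequality (Maurer--Pontil) with empirical variance $V_{\Dg}$ to obtain, with probability at least $1-\nu/|\F|$,
\begin{equation*}
|\Qgp(\fo_{\wm})-\bar{\Qgp}(\fo_{\wm})| \leq \sqrt{\tfrac{2(a_2-a_1)^2 V_{\Dg}\ln(2|\F|/\nu)}{n}} + \tfrac{7(a_2-a_1)^2\ln(2|\F|/\nu)}{3(n-1)}.
\end{equation*}
A union bound over the cover, together with the standard discretization argument (any $\fo_{\wm}\in\F$ is within $\|\cdot\|_\infty$-resolution of some cover element, inducing loss perturbations absorbable into the constants), lifts this deviation uniformly over $\F$; summing with the Lipschitz term yields the claimed upper bound.

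For (2), construct a minimal instance. Let $\SSS$ be a point mass on a single positive pair $(\xmi{0},\xmti{0})$ with dictionary $\Bmi{0}$, true label $\ymsi{0}$, and estimated label $\ymi{0}=\ymsi{0}+\vm$ for a fixed $\vm$ with $\sum_t v_t=0$ chosen so that $\ymsi{0}+\vm$ remains a probability vector. Every training sample then coincides, so $\EE_{\Dg\sim\SSS}[\|\ymi{}-\ymsi{}\|_2]=\|\vm\|_2$. Let $\F$ consist of the single hypothesis outputting the constant prediction $p:=\exp(-c_0\vm/\|\vm\|_2)$ (coordinatewise); linearity of the cross-entropy in $\ym$ gives
\begin{equation*}
\Qa(\fo_{\wm})-\Qgp(\fo_{\wm}) = \ell(p,\ymi{0})-\ell(p,\ymsi{0}) = \langle -\log p,\vm\rangle = c_0\|\vm\|_2,
\end{equation*}
so $|\Qgp-\Qa|\ge c_0\,\EE_{\Dg\sim\SSS}[\|\ymi{}-\ymsi{}\|_2]$ with equality. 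The main obstacle is in part (1): while the Lipschitz reduction and the lower-bound construction are essentially algebraic, the concentration step must combine the empirical Bernstein bound (whose variance $V_{\Dg}$ is itself hypothesis-dependent) with the $\|\cdot\|_\infty$-cover of $\F$ so that \emph{both} losses and variances are preserved across the discretization, and all constants must be tracked to reproduce exactly the stated form $\tfrac{7(a_2-a_1)^2\ln(2|\F|/\nu)}{3(n-1)}$; this bookkeeping, though routine, is the most technical part of the argument.
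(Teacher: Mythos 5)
Your part (1) follows essentially the same route as the paper: the paper likewise introduces the intermediate empirical risk at the true labels ($\Qge$ in its notation), bounds $|\Qge-\Qa|$ by $\Ly\,\EE_{\Dg}\!\left[\|\ymi{}-\ymsi{}\|_2\right]$ via a mean-value/Lipschitz argument, and controls $|\Qgp-\Qge|$ by citing the Maurer--Pontil empirical Bernstein bound over the finite class (Lemma~\ref{perfectrisk}), which already packages the union bound over the cover; so that half is a match.

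Part (2) is where you genuinely diverge. The paper builds a two-class, two-sample instance with explicitly chosen numeric label vectors (e.g.\ $\ymi{1}=(0.4,0.4,0.2)$, $\ymsi{1}=(0.45,0.45,0.1)$), observes that $\Qgp-\Qge=0$ because the test distribution equals the training point masses, expands the cross-entropy difference term by term, and then splits into a ``perfectly fitting'' case and a ``non-perfectly fitting'' case resolved by a universal-approximation lemma (Lemma~\ref{approximation}) to argue the network can realize the needed prediction. Your one-point, one-hypothesis construction exploiting linearity of cross-entropy in the label, $\ell(p,\ymi{0})-\ell(p,\ymsi{0})=\langle -\log p,\vm\rangle$, is cleaner and gets equality rather than just a lower bound. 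Two small repairs are needed for it to be airtight: first, $p=\exp(-c_0\vm/\|\vm\|_2)$ is not a probability vector, so you must normalize, $p_t=\exp(-c_0 v_t/\|\vm\|_2)/Z$; fortunately the normalizer contributes $\log Z\cdot\sum_t v_t=0$ because $\sum_t v_t=0$, so the identity survives. Second, the theorem requires the hypothesis to be of the contrastive form $\ell(h(\fo_{\wm}(\cdot),\Bmi{\xm}),\cdot)$, so you should say a word about why your constant prediction is realizable as a softmax of similarities (any strictly positive normalized vector is; this is exactly the gap the paper fills with its approximation lemma). With those two sentences added, your construction is a valid and arguably tidier alternative to the paper's.
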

See its proof in Appendix~\ref{proofofapproximation}.
Theorem~\ref{proximatetrisk} shows that for the task of learning semantic similarity between a query and its positive and negatives which is important for downstream tasks (e.g., classification), the  generalization error of $\fo_{\wm}$  trained with the one-hot labels $\ym$ is upper bounded by  $\mathcal{O}\big(\EE_{\Dg\sim\SSS}\!  \left[\left\| \ymi{} - \ymsi{}\right\|_2 \right] \!+\!\! \sqrt{V_{\Dg} \ln(|\F|)/n}\big)$. It means  that large training sample number $n$   gives small generalization error, as  intuitively, model sees sufficient samples and can generalize  better. The loss variance $V_{\Dg}$ on the dataset $\Dg$   measures data diversity: the larger data diversity  $V_{\Dg}$,  the more challenging to learn a  model with good generalization. Here we are particularly interested in the factor $\EE_{\Dg\sim\SSS} \! \left[\left\| \ymi{} - \ymsi{}\right\|_2 \right] $ which reveals an important property: the higher accuracy of  the training label $\ymi{}$ to the ground truth label $\ymsi{}$, the smaller generalization error.  Moreover, Theorem~\ref{proximatetrisk}   proves that  there exists a contrastive classification problem such that the lower bound of generalization error   depends on $\EE_{\Dg\sim\SSS} \! \left[\left\| \ymi{} - \ymsi{}\right\|_2 \right] $. So the upper bound of generalization error is tight in terms of   $\EE_{\Dg\sim\SSS} \! \left[\left\| \ymi{} - \ymsi{}\right\|_2 \right] $. 
Thus, to better capture the underlying semantic similarity between query  $\xmi{i}$ and  samples in dictionary $\Bmi{i}$  to bring semantically similar samples together and better solve downstream tasks, one should provide   accurate  label $\ymi{i}$ to the soft true label $\ymsi{i}$.
In the following, we introduce our solution to estimate more accurate and informative soft labels   for   contrastive learning.
 
\vspace{-0.8em}
\section{Self-Labeling Refinement for Contrastive  Learning}
\vspace{-0.4em}
Our  Self-lAbeliNg rEfinement (SANE) approach for contrastive learning   contains  (i) \textit{Self-Labeling Refinery} (SLR for short) and (ii) \textit{Momentum  Mixup} (MM) which complementally refine  noisy labels  respectively from label estimation and positive pair construction.  SLR uses current training model and data to estimate more accurate and  informative soft labels, while MM increases  similarity between  virtual query and its positive,  and thus improves label accuracy. 

We begin by slightly modifying the instance discrimination task  in MoCo. Specifically, for the query $\xmi{i}$ in the current minibatch $\{(\xmi{i},\xmti{i})\}_{i=1}^s$, we maximize its similarity to its positive sample $\xmti{i}$ in the key set $\Bmb\! = \!\{\xmti{i}\}_{i=1}^s\! \cup \!\{\bmi{i}\}_{i=1}^b$ and minimize  its similarity to the remaining samples in $\Bmb$:
\begin{equation}\label{infoNCE2}
	\Lcon\big(\wm,  \{(\xmi{i},\ymi{i})\}\big)  =   - \frac{1}{s}  \sum\nolimits_{i=1}^{s}   \sum\nolimits_{k=1}^{s+b}  \ymi{ik} \log  \bigg(  \frac{\sigma( \xmi{i} ,\bmbi{k})}{\sum_{l=1}^{s+b} \sigma( \xmi{i}, \bmbi{l})}\bigg),
\end{equation}
where $\bmbi{k}$ is the $k$-th sample in $\Bmb$, and $\ymi{i}$ is the one-hot  label  of query $\xmi{i}$ whose   $i$-th entry $\ymi{ii}$ is one. In this way, the labels of current queries $\{\xmi{i}\}_{i=1}^s$ are defined on a shared set $\Bmb$, and can be linearly combined  which is key for SLR \&  MM. Next, we aim to improve the quality of label $\ymi{i}$ in~\eqref{infoNCE2} below.

\vspace{-0.5em}
\subsection{Self-Labeling Refinery}\label{sectionlabelrefine}
\vspace{-0.3em}
\textbf{Methodology.} As analyzed in Sec.~\ref{introduction} and~\ref{analysissect},  the one-hot labels in Eqn.~\eqref{infoNCE2}  could not well reveal the semantic similarity between $\xmi{i}$ and the instance keys in the set $\Bmb$, and thus impairs good representation learning. To alleviate this issue,  we introduce Self-Labeling Refinery (SLR) which employs  network and data themselves to  generate more accurate and  informative labels, and improves the performance of contrastive learning. Specifically, to refine the one-hot label $\ymi{i} $ of query $\xmi{i}$, SLR uses its positive instance $\xmti{i}$ to estimate  the  underlying semantic similarity between $\xmi{i}$ and  instances in $\Bmb\! = \!\{\xmti{i}\}_{i=1}^s\! \cup \!\{\bmi{i}\}_{i=1}^b$, since $\xmi{i}$ and $\xmti{i}$  come from the same image and should have close semantic similarity with instances in $\Bmb$. Let $\bmbi{k}$ be the $k$-th sample in $\Bmb$. Then  at the $t$-th iteration, SLR first estimates the instance-class probability $\pmii{t}{i}\!\in\!\Rs{s+b}$ of $\xmi{i}$ on the set $\Bmb$ whose $k$-th entry is defined as
\begin{equation*}\label{probability}
	\pmii{t}{ik} = \sigma^{1/\tau'}( \xmti{i} ,\bmbi{k})\big/\sum\nolimits_{l=1}^{s+b} \sigma^{1/\tau'}( \xmti{i},\bmbi{l}),\ \quad (\tau'\in(0, 1]).
\end{equation*}
The constant $\tau'$ sharpens $\pmii{t}{i}$ and removes some possible small noise,  since  
smooth labels cannot well distillate their knowledge to a network~\cite{muller2019does}.  
Then SLR uses $\pmii{t}{i}$ to approximate the  semantic similarity between $\xmi{i}$ and the  instances in $\Bmb$ and employs it as the soft label of $\xmi{i}$ for contrastive learning.

However, since $\xmti{i}$ is highly similar to itself, $\pmii{t}{ii}$ could be much larger than others and conceals the similarity of other semantically similar instances in $\Bmb$. To alleviate this artificial effect, SLR removes $\xmti{i}$ from the set $\Bmb$ and re-estimates the similarity between $\xmi{i}$ and the remaining instances in $\Bmb$:
\begin{equation*}\label{probability2}
	\qmii{t}{ik} \!=\!\sigma^{1/\tau'}( \xmti{i} ,\bmbi{k})\big/\!\sum\nolimits_{l=1,l\neq i}^{s+b}\! \sigma^{1/\tau'}( \xmti{i},\bmbi{l}),\ \ \qmii{t}{ii} = 0.
\end{equation*}
Finally, SLR linearly combines the one-hot label $\ymi{i}$ and two label estimations, i.e. $\pmi{i}$ and $\qmi{i}$, to obtain more accurate, robust and informative label $\ymbii{t}{i}$ of $\xmi{i}$ at the $t$-th iteration:
\begin{equation}\label{labelcombination}
	\ymbii{t}{i} = (1-\alphai{t}- \betai{t} )\ymi{i} + \alphai{t} \pmii{t}{i} + \betai{t} \qmii{t}{i},
\end{equation}
where $\alphai{t}$ and $\betai{t}$ are two constants. In our experiments, we set
$\alphai{t}\!=\!\mu \max_k \pmii{t}{ik}/z$ and $\betai{t}\!=\!\mu \max_k \qmii{t}{ik} /z$, where  $z\!=\!1\!+\!\mu\max_k \pmii{t}{ik} \!+\!\mu\max_k \qmii{t}{ik}$, the constants  $1$, $\max_k \pmii{t}{ik}$ and $\max_k \qmii{t}{ik} $ respectively denote the largest confidences of labels $\ymi{i}$, $\pmii{t}{i}$ and $\qmii{t}{i}$ on a certain class. Here hyperparameter $\mu$  controls the prior confidence of $\pmii{t}{}$ and $\qmii{t}{}$. So SLR only has  two parameters $\tau'$ and $\mu$ to tune.

\textbf{The Benefit Analysis of Label Refinery.} Now we  analyze the performance of our SLR on  label-corrupted data. We first describe the dataset.  Let  $\{\cb_i\}_{i=1}^K\!\!\subset\!\R^d$ be  $K$ vanilla samples belonging to $\bar{K}\!\le\! K$ semantic classes, and  $\{(\xmi{i},\ymi{i})\}_{i=1}^n\!\!\in\!\Rs{d}\!\times \!\Rs{}$ be  the random crops of $\{\cb_i\}_{i=1}^K$.   Since in practice, one often cares more the semantic class prediction performance of a model which often directly reflects the performance on the downstream tasks, we assume that the labels $\{\ymi{i}\}_{i=1}^n$ denote corrupted semantic-class labels. Accordingly, we will analyze whether SLR  can refine  the corrupted labels $\{\ymi{i}\}_{i=1}^n$ and whether it  helps a model learn the essential semantic-class knowledge of $\{\xmi{i}\}_{i=1}^n$. Finally, while allowing for multiple classes,  we assume the labels are scalars and take values in $[-1,1]$ interval for simplicity. We formally define our label-corrupted dataset  below. 
\begin{defn} [$(\rho, \eps,\delta)$-corrupted dataset] \label{cdata}  Let  $\{(\xmi{i},\ymsi{i})\}_{i=1}^n$ denote the pairs of crops (augmentations)  and ground-truth semantic label, where crop $\xmi{i}$ generated from  the $t$-th sample $\cmi{t}$ obeys $\norm{\xmi{i}-\cmi{t}}\!\le\! \eps$ with a constant $\eps$, and  $\ymsi{i}\!\in\!\{\gamma_t\}_{t=1}^{\bar{K}}$ of $\xmi{i}$ is the label of $\cmi{t}$.
	Moreover,   samples and the crops are normalized, i.e.  $\norm{\cmi{i}}\!=\!\norm{\xmi{k}}\!=\!1 (\forall i, k)$.  Each  $\cmi{i}$ has $n_i$ crops, where $c_{l}\frac{n}{K}\!\le\! n_i\!\le \!c_{u}\frac{n}{K}$ with two constants $c_{l}$ and $c_{u}$.
	Besides,  different classes are separated with a label separation $\delta$: 
	\begin{equation*}
		|\gamma_i-\gamma_k|\geq \delta, \quad \norm{\cmi{i}-\cmi{k}}\ge 2\eps,\ \ (\forall i\neq k).\label{alpha eq}
	\end{equation*}
	  A $(\rho,\eps,\delta)$-corrupted dataset $\{(\xmi{i},\ymi{i})\}_{i=1}^n$ obeys the above conditions but with corrupted label $\{\ymi{i}\}_{i=1}^n$. Spefically, for each sample $\cmi{i}$, at most $\rho n_i$ augmentations  are  assigned to  wrong labels in $\{\gamma_i\}_{i=1}^{\bar{K}}$.
\end{defn}

Then we study a network of one hidden layer as an example to investigate the label refining performance of our SLR. The network parameterized by $\Wm\!\!\in\!\R^{k\times d}$ and $ \vm\! \in\!\Rs{k}$ is defined as
\begin{equation}
	\xm \in \Rs{d} \mapsto f(\Wm,\xm)=\vm^\top \phi(\Wm\xm),\label{nn model}
\end{equation}
where  
$\phi$ is an activation function.  Following~\cite{li2018learning,shortest,li2020gradient} which  analyze convergence of networks or  robust learning of network, we  fix  $\vm$ to be a unit vector where half the entries are $1/\!\sqrt{k}$ and other half are $-1/\!\sqrt{k}$ to simplify exposition. So we  only optimize over  $\Wm$ that contains most  network parameters and will be shown to be sufficient for label refinery. Then given a $(\rho, \eps,\delta)$-corrupted dataset $\{(\xmi{i},\ymi{i})\}_{i=1}^n$, at the $t$-iteration we  train the network via minimizing the  quadratic loss:
\begin{equation}
	\Lt{t}(\Wm)\!=\!\frac{1}{2}\sum\nolimits_{i=1}^n (\ymbii{t}{i}-\!f(\Wm\!,\xmi{i}))^2 \!=\! \frac{1}{2} \norm{\ymbii{t}{}-\!f(\Wm,\Xm)}^2.\nn
\end{equation}
Here the label $\ymbii{t}{i}$ of sample $\xmi{i}$ is estimated by Eqn.~\eqref{labelcombination}  in which  $\pmii{t}{i}=f(\Wmi{t},\xmti{i})$ denotes predicted label by using the positive $\xmti{i}$ of $\xmi{i}$, i.e. $\|\xmti{i}- \cmi{l}\|_2\leq \eps$ if $\xmi{i}$ is augmented from vanilla sample $\cmi{l}$.  We set $\betai{t} \!=\!0$ and $\tau'\!=\!1$ for simplicity, as (i) performing nonlinear mapping on network output greatly increases analysis difficulty; (ii)
our refinery~\eqref{labelcombination} is  still provably sufficient to refine labels when $\betai{t} \!=\!0$ and $\tau'\!=\!1$. Then we  update $\Wm$ via gradient descent algorithm with a learning rate $\eta$:
\begin{equation}
	\Wmi{t+1}=\Wmi{t}-\eta \nabla \Lt{t}(\Wmi{t}). \label{grad it}
\end{equation}
 Following most  works on network convergence analysis~\cite{li2018learning,shortest,li2020gradient},  we use gradient descent and quadratic loss,  since (i) gradient descent is expectation version of stochastic one and often reveals similar convergence behaviors; (ii) one can expect similar results for other losses, e.g.  cross entropy, but  quadratic loss gives  simpler gradient computation.   For analysis, we  impose mild assumptions on  network~\eqref{nn model} and our  SLR, which are widely used in network  analysis~\cite{allen2019convergence,xie2017diverse,du2019gradient,du2018gradient}.
\begin{assum} \label{assumption1}
For   network~\eqref{nn model},  assume  $\phi$ and its first- and second-order derivatives obey  $|\phi(0)|,|\phi'(z)|,|\phi''(z)|\!\leq\! \Gamma$ for  $\forall z$ and some $\Gamma\!\geq\! 1$, 
the  entries of initialization  $\Wmi{0}$ obey  i.i.d.~$\mathcal{N}(0,1)$.
\end{assum}
\begin{assum} \label{assumption2} 
Define   network covariance matrix $\Sigm(\Cm)\!=\!(\Cm\Cm^\top)\odot \EE_{\um}[\phi'(\Cm\um)\phi'(\Cm\um)^{\top} ]$
	where $\Cm\!=\![\cmi{1}\dots\cmi{K}]^{\top}$,  $\um\!\sim\N(\bm{0},\Imm)$, $\odot$ is the elementwise product.  Let $\lambda(\Cm)\!>\!0$ be  the minimum eigenvalue of $\Sigm(\Cm)$. For  label refinery, assume  $3 \sqrt{n}  \sum_{t=0}^{t_0-1} \!  |\alphai{t} -  \alphai{t+1} | \leq  \psi_1   \norm{ f(\Wmi{0},\Xm)-\yms } $ and $3 \sqrt{n}\sum_{t=0}^{t_0-1}   \big(1- \frac{\eta\alpha^2}{4} \big)^{t_0-t} |\alphai{t} - \alphai{t+1} |\!\leq\! \psi_2 \norm{f(\Wmi{0},\Xm)-\yms\!}^2  $,  
	where $t_0\!=\! \!\frac{c_1K}{\eta n\lambda(\Cm)}\!\log\!\big( \frac{\Gamma\sqrt{n\!\log\! K}}{(1-\alphamax) \rho} \big)$ with three constants $\psi_1$, $\psi_2$ and $c_1$.  Here	$\alphai{\mbox{\scriptsize{max}}}$ is defined as $\alphai{\mbox{\scriptsize{max}}}\!=\!\max_{1\leq t  \leq t_0} \! \alphai{t}$.
\end{assum}

Assumption~\ref{assumption1} is mild, as most differential activation functions, e.g. softplus and sigmoid, satisfy it, and  the Gaussian  initialization     is used in practice. We assume Gaussian variance to be one for notation simplicity, but our technique is applicable to any constant variance. Assumption~\ref{assumption2}  requires that the discrepancy  between $\alphai{t}$ and $\alphai{t+1}$ until some iteration number $t_0$ are bounded,  which  holds by setting proper  $\alphai{t}$. 
For  $\lambda(\Cm)$, many works~\cite{allen2019convergence,xie2017diverse,du2019gradient,du2018gradient,li2020gradient}  empirically and theoretically show $\lambda(\Cm)\!>\!0$.  Based on these assumptions, we state our results  in Theorem~\ref{mainresults} with  constants  $c_1\!\sim\! c_6$. 
\begin{thm}\label{mainresults} Assume $\{(\xmi{i},\ymi{i})\}_{i=1}^n$ is a  $(\rho,\eps,\delta)$-corrupted dataset with noiseless labels $\{\ymsi{i}\!\}_{i=1}^n$.   Let $\xi\!=\!\log\big(\frac{\Gamma\!\sqrt{\!n\!\log\! K}}{\rho}\!\big).$
	Suppose  $\eps$ and the number $k$ of hidden nodes satisfy $\eps \!\leq\!  c_2\min(\frac{\lambda(\Cm)}{K\Gamma^2\xi^6},\frac{\rho}{\alphamax})$, $ k\! \geq\!  {\frac{c_3 K^2  \Gamma^{10}\xi^6 \|\Cm\|^4  }{\alphamax^2\lambda(\Cm)^4}}.$ 	Let $\psi'=1 + \frac{\psi_1}{2} +\sqrt{\psi_2}$.
	If step size $\eta\leq \frac{K}{2c_{up}n\Gamma^2\opnorm{\Cm}^2}$, with probability $1\!-\!3/K^{100}\!-\!K\exp(-100d)$, after $t\!\geq\! t_0\!=\! \frac{c_4 K}{\eta n\lambda(\Cm)} \log\big(\frac{\Gamma\sqrt{n\log K}}{(1-\alphamax) \rho}\big)$ iterations,  the gradient descent~\eqref{grad it} satisfies:\\
	(1) By defining $\zeta\!=\! 4\rho\!+\! c_5 \eps\psi' K \Gamma^3 \xi  \sqrt{\log K}/\lambda(\Cm)$ and  $\yms{}=[\ymsi{1},\!\cdots,\ymsi{n}]$, the discrepancy between  the  label $\ymbii{t}{}$ estimated by our SLR~\eqref{labelcombination} and the true label $\ymsi{}$ of the augmentation data $\{\xmi{i}\}_{i=1}^n$ is  bounded:
	\begin{equation*}
		\frac{1}{\sqrt{n}} \norm{\ymbii{t}{}-\yms} \leq  \frac{1-\alphai{t}}{\sqrt{n}} \norm{\ym-\yms} + \alphai{t} \zeta.
	\end{equation*}
	where $\ymbii{t}{}=[\ymbii{t}{1},\!\cdots,\ymbii{t}{n}]$. 
	Moreover, if $\rho\!\leq\!\frac{\delta}{32}$,  $\eps\!\leq\! c_6\delta  \min \big(\!\frac{ \lambda(\Cm)^2}{{\psi' \Gamma^5K^2}\xi^3}, \frac{1}{\Gamma\sqrt{d}}\big)$,    $1\!-\!  \frac{3}{4}\delta \leq\!  \alphai{t}$,  the  estimated label $\ymbii{t}{}$   predicts  true  label $\ymsi{i}$ of any crop     $\xmi{i}$:
	\begin{equation*}
		\gamma_{k^*} =\ymsi{i} \quad \text{with}\quad  k^*= \argmin\nolimits_{1\leq k \leq \bar{K}}| \ymbii{t}{i}-\gamma_k|.
	\end{equation*}
	(2) By using the refined label $\ymbii{t}{}$ in~\eqref{labelcombination} to train network and letting $f(\Wmi{t},\Xm) \!=\! [f(\Wmi{t},\xmi{1}), \cdots, f(\Wmi{t},\xmi{n})]$, the error of network  prediction on  $\{\xmi{i}\}_{i=1}^n$ is upper bounded
	\begin{equation*}
		\frac{1}{\sqrt{n}} \norm{f(\Wmi{t},\Xm)-\yms} \leq \zeta.
	\end{equation*}
	If  assumptions on $\rho$ and $\eps$ in (1)  hold, for vanilla sample $\cmi{k}$  ($\forall k =\!1\cdots K$),    network $f(\Wmi{\!t}, \cdot)$  predicts the true semantic label  $\gamma_{k}$ of its any augmentation  $\xm$ that satisfies  $\norm{\xm\!-\!\cmi{k}}\!\leq \!\eps$:
	\begin{equation*}
		\gamma_{k^*} =\gamma_{k} \quad \text{with}\quad  k^*= \argmin\nolimits_{1\leq i \leq \bar{K}}|f(\Wmi{t},\xm)-\gamma_i|.
	\end{equation*}
\end{thm}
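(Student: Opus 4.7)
The plan is to run a coupled induction that simultaneously tracks two quantities along the gradient descent trajectory: the network prediction residual $r_t := f(\Wmi{t},\Xm) - \ymbii{t}{}$ on the training crops and the label-estimation residual $e_t := \ymbii{t}{} - \yms$ against the ground-truth labels. Part (2) is essentially a network-optimization statement (given that the labels $\ymbii{t}{}$ stay in a good region), while Part (1) is a label-dynamics statement (given that the network stays close to its initialization). The key structural identity is $\ymbii{t}{} - \yms = (1-\alphai{t})(\ym-\yms) + \alphai{t}(f(\Wmi{t},\Xm') - \yms)$, where $\Xm'$ stacks the positives $\xmti{i}$. Because each $\xmti{i}$ shares a vanilla sample with $\xmi{i}$ and the network is $\Gamma$-smooth on the unit sphere, one has $\norm{f(\Wmi{t},\Xm') - f(\Wmi{t},\Xm)}\lesssim \eps\,\|\Wmi{t}\|\,\Gamma\sqrt{n}$, so controlling predictions on $\Xm$ controls predictions on $\Xm'$ up to an additive $\mathcal{O}(\eps)$ term.

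First I would set up the standard NTK-style analysis: under Assumption~\ref{assumption1} and the width bound $k \ge c_3 K^2 \Gamma^{10}\xi^6\|\Cm\|^4/(\alphamax^2\lambda(\Cm)^4)$, Gaussian concentration gives that the Jacobian $\J_t$ of the network on $\Xm$ has smallest singular value $\gtrsim \sqrt{n\lambda(\Cm)/K}$ with probability $1-3/K^{100} - K\exp(-100d)$, and that this holds uniformly in a ball of radius $R \asymp \|f(\Wmi{0},\Xm) - \yms\|/\sqrt{\lambda}$ around $\Wmi{0}$. A one-step recursion on the quadratic loss then yields $\norm{f(\Wmi{t+1},\Xm) - \ymbii{t+1}{}} \le (1 - \eta\lambda')\norm{f(\Wmi{t},\Xm) - \ymbii{t}{}} + \norm{\ymbii{t+1}{} - \ymbii{t}{}}$ for $\lambda' \asymp n\lambda(\Cm)/(2K)$, and the label drift term $\|\ymbii{t+1}{} - \ymbii{t}{}\|$ is exactly what Assumption~\ref{assumption2} is designed to absorb, via the two summability conditions on $|\alphai{t}-\alphai{t+1}|$. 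Summing over $t$ gives $\norm{f(\Wmi{t},\Xm) - \ymbii{t}{}} \le \psi' \norm{f(\Wmi{0},\Xm) - \yms}$, and Gaussian concentration at initialization gives $\norm{f(\Wmi{0},\Xm) - \yms}\lesssim \sqrt{n}\,\Gamma\sqrt{\log K}$.

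Combining the identity $e_t = (1-\alphai{t})(\ym-\yms) + \alphai{t}(f(\Wmi{t},\Xm')-\yms)$ with the $\mathcal{O}(\eps)$ Lipschitz transfer from $\Xm$ to $\Xm'$ and the residual bound from the previous paragraph yields exactly Part~(1)'s inequality with $\zeta = 4\rho + c_5\eps\psi' K\Gamma^3\xi\sqrt{\log K}/\lambda(\Cm)$; the $4\rho$ piece comes from the fact that on $(1-\rho)$-fraction of samples the positives agree in true label with the query and the mismatched fraction contributes $\mathcal{O}(\rho)$ in the normalized $\ell_2$ norm. For the exact-recovery clause, under $\rho\le \delta/32$ and $1 - \tfrac{3}{4}\delta \le \alphai{t}$ the bound on $\frac{1}{\sqrt{n}}\|e_t\|$ is strictly less than $\delta/2$, and using that labels are scalars with class separation $\delta$, a pointwise rounding argument (via Markov on the uniform bound for each $i$ after refining the argument to an $\ell_\infty$ statement using the concentration around the per-class mean) assigns every $\ymbii{t}{i}$ to the correct $\gamma_{k^*}$. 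Part~(2) is then immediate: $\norm{f(\Wmi{t},\Xm) - \yms} \le \norm{f(\Wmi{t},\Xm) - \ymbii{t}{}} + \norm{\ymbii{t}{} - \yms}$, both controlled above, and the pointwise prediction claim on any $\xm$ with $\norm{\xm-\cmi{k}}\le\eps$ follows from Lipschitzness of $f(\Wmi{t},\cdot)$ plus the same rounding argument.

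The main obstacle I expect is closing the coupling cleanly: the refinement uses $f(\Wmi{t},\Xm')$, yet the convergence estimate is most naturally stated for $f(\Wmi{t},\Xm)$, and the labels $\ymbii{t}{}$ being optimized are themselves moving targets driven by $f$. Handling this requires (i) a careful $\epsilon$-net / Lipschitz argument that transfers predictions on queries to predictions on positives without blowing up the constants $\psi_1,\psi_2$ in Assumption~\ref{assumption2}, and (ii) verifying that the moving-target perturbation $\|\ymbii{t+1}{}-\ymbii{t}{}\|$ is dominated by a geometric term so the GD contraction still wins. The rounding step in Part~(1) is the second delicate point: getting a pointwise $\ell_\infty$ statement from an averaged $\ell_2$ bound demands either a uniform per-crop bound (which the Jacobian analysis does supply via the width requirement) or an application of the separation $\delta$ that is tight up to absolute constants, and I would need to be careful that the constants $c_2$--$c_6$ chosen in the $\eps$, $k$, $\rho$ bounds propagate consistently across both claims.
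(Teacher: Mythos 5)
There is a genuine gap, and it sits at the heart of the argument. Your plan rests on a full-spectrum contraction $\norm{f(\Wmi{t+1},\Xm)-\ymbii{t+1}{}}\le(1-\eta\lambda')\norm{f(\Wmi{t},\Xm)-\ymbii{t}{}}+\norm{\ymbii{t+1}{}-\ymbii{t}{}}$, which requires $\sigma_{\min}(\J)\gtrsim\sqrt{n\lambda(\Cm)/K}$ over all of $\Rs{n}$. For data generated from $K$ cluster centers with perturbation $\eps$, the Jacobian is (approximately) rank $K$: it has a \emph{bimodal} spectrum with $K$ large singular values and the rest of order $\eps$. The paper's Theorem~\ref{JLcor} only lower-bounds the restricted singular value $\sigma_{\min}(\J,\Scp)$ over the $K$-dimensional support subspace $\Scp$ of Definition~\ref{supp space}. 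Consequently the contraction holds only for $\rmbi{t}=\Pro_{\Scp}(\rmi{t})$, while the component $\Pro_{\Scn}(\rmi{t})$ is essentially frozen. This is not a technicality but the mechanism of the theorem: the corruption $\emm=\ym-\yms$ lives almost entirely in $\Scn$, and the guarantee is an \emph{early-stopping} statement at $t_0$ --- by then the signal component has contracted but the noise component has not yet been fit. A genuine full-residual contraction would instead drive the network to interpolate the corrupted labels, which is exactly what must be avoided. Without the $\Scp/\Scn$ decomposition you also cannot obtain the $4\rho$ term: it does not come from ``a $(1-\rho)$ fraction of positives agreeing with the query,'' but from Lemma~\ref{label noise lem}, which shows $\normin{\Pro_{\Scp}(\ym-\yms)}\le 2\rho$ because within each cluster at most a $\rho$ fraction of crops are mislabeled, so the cluster-averaged corruption is small; the diffusedness of $\Scp$ then converts this into the entrywise bound $\normin{f(\Wmi{t})-\yms}\le 4\rho$ needed for the exact-recovery (rounding) clause. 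Your proposal has no route to an $\ell_\infty$ bound of order $\rho$, and Markov applied to the averaged $\ell_2$ bound cannot deliver a per-crop guarantee for \emph{every} $i$.

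A second, smaller divergence: the paper handles $\eps>0$ not by a direct Lipschitz transfer from $\Xm$ to the positives $\Xm'$, but by coupling the actual iterates $\Wmi{t}$ with an idealized trajectory $\Wmti{t}$ trained on the cluster-center inputs $\Xmt$ (Theorem~\ref{robust path}), for which the Jacobian is \emph{exactly} low-rank and Theorem~\ref{gradnoise} applies cleanly; the perturbation bounds $\norm{f(\Wmi{t},\Xm)-f(\Wmti{t},\Xmt)}$ and $\normf{\Wmi{t}-\Wmti{t}}$ then produce the $c_5\eps\psi' K\Gamma^3\xi\sqrt{\log K}/\lambda(\Cm)$ term in $\zeta$, and Theorem~\ref{double pert2} extends the prediction to arbitrary augmentations within $\eps$ of a center. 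Your direct-on-$\Xm$ analysis would have to control the residual dynamics in the approximately-null directions of the perturbed Jacobian, which is precisely what the coupling avoids. To repair the proposal you would need to (i) replace the full contraction by a contraction of $\Pro_{\Scp}(\rmi{t})$ only, (ii) track $\Pro_{\Scn}(\rmi{t})$ separately and show it only accumulates the $\sum_t|\alphai{t}-\alphai{t+1}|$ drift, and (iii) import the $\normin{\Pro_{\Scp}(\emm)}\le 2\rho$ projection bound --- at which point you have reconstructed the paper's argument.
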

See its \textit{proof roadmap} and  proof  in   Appendix~\ref{proofofmain}. The first  result in Theorem \ref{mainresults}  shows that after  training iterations $t_0$,  the discrepancy between  the  label $\ymbii{t}{}$ estimated by our label refinery~\eqref{labelcombination}, i.e. SLR, and  ground truth label $\ymsi{}$ of cropped training data $\{\xmi{i}\}_{i=1}^n$ is  upper  bounded by $\mathcal{O}\big( \norm{\ym-\yms}  + \zeta \big)$. Both factors $ \norm{\ym-\yms}$ and $\rho$ in the factor $\zeta$ reflect  the  label error of the provided corrupted label $\ym$.  Another important factor  in $\zeta$ is  the smallest eigenvalue $\lambda(\Cm)$ of  network covariance matrix $\Sigm(\Cm)$  in Assumption~\ref{assumption2}.  Typically, the performance of  a network heavily relies on the data diversity even without label corruption. For instance, if two samples are nearly the same but have different  labels,  the learning of a network   is difficult.  $\lambda(\Cm)$ can  quantify this data diversity, as one can think of $\lambda(\mtx{C})$ as a condition number associated with the  network which measures the  diversity of the vanilla samples $\{\cmi{i}\}_{i=1}^n$.  Intuitively,   if  there are two similar vanilla samples,  $\Sigm(\Cm)$ is trivially rank deficient and has small  minimum eigenvalue, meaning more challenges to distinguish  the augmentations $\xm$ generated from $\cmi{i}$. 
Moreover, when the label corruption ratio $ \rho$ and the augmentation  distance $\eps$ are small,  the   label $\ymbii{t}{i}$  estimated by our SLR  can   predict  the true semantic label $\ymsi{i}$ for any crop sample $\xmi{i}$, and thus can supervises a network to learn the essential  semantic-class knowledges from 
  $\{\xmi{i}\}_{i=1}^n$.

The second   result in Theorem~\ref{mainresults} shows that by  using the refined label $\ymbii{t}{}$ in our SLR~\eqref{labelcombination} to train network $f(\Wm, \cdot)$, the  error of network prediction on augmentations $\{\xmi{i}\}_{i=1}^n$ can be  upper bounded by $\zeta$.  Similarly, the factor $\rho$ and $\lambda(\Cm)$ in $\zeta$ respectively reflect the initial label error and the data diversity, which both reflect the learning difficulty for a model on the augmentation data $\{(\xmi{i},\ymi{i})\}_{i=1}^n$.  More importantly,  our results also guarantee the test performance of the trained network  $f(\Wmi{\!t}, \cdot)$.  Specifically,  when the    label corruption ratio $ \rho$ and  sample augmentation distance $\eps$ are small,  for any vanilla sample $\cmi{k}$  ($\forall k\!=\!1\cdots K$),  the  network $f(\Wmi{\!t}, \cdot)$ trained by our SLR can exactly  predict  the true semantic label  $\gamma_{k}$ of its any augmentation $\xm$  (i.e.  $\norm{\xm\!-\!\cmi{k}}\!\leq \!\eps$). These results accord with   Theorem~\ref{proximatetrisk} that shows the more accurate of training labels, the better generalization of the trained network. These  results show the effectiveness of the refined labels by our method.

\vspace{-0.5em}
\subsection{Momentum Mixup}\label{secmomentummixup}
\vspace{-0.3em}
Now we propose  momentum mixup (MM) to further reduce the possible label noise in realistic data and increase the data diversity  as well.   Similar to vanilla mixup~\cite{zhang2017mixup}, we  construct virtual instance as
\begin{equation}\label{mixup}
	\xmfi{i} = \theta \xmi{i} + (1 -  \theta)  \xmti{k}, \quad
	\ymfi{i} = \theta \ymbii{}{i} + (1 -  \theta)  \ymbii{}{k}, \quad \theta\sim \betad(\kappa,\kappa) \in [0, 1], 
\end{equation}
where $\xmti{k}$ is randomly sampled from the  key set $\{\xmti{i}\}_{i=1}^s$,   $\ymbii{}{i}$ denotes the refined label  by Eqn.~\eqref{labelcombination},   $\betad(\kappa,\kappa)$ is a  beta distribution. Here  $ \xmi{i}$ and $\xmti{i}$ share the same label $\ymbii{}{i}$ on the set $\Bmb\! = \!\{\xmti{i}\}_{i=1}^s  \cup  \{\bmi{i}\}_{i=1}^b$, as they come  from the same instance.  We call the mixup~\eqref{mixup} as ``momentum mixup",  since the sample $\xmti{k}$ is fed into the momentum-updated  network  \scalebox{1}{$\ft_{\xim}$}, and plays a contrastive key for  instance discrimination.  So MM differs from the vanilla mixup used in~\cite{kim2020mixco,lee2020mix} where $\xmti{k} $ is replaced with $ \xmi{k}$ and both are fed into online network $\fo_{\wm}$, and enjoys the following advantages. 

Firstly, MM can improve the accuracy of the label $\ymfi{i}$ compared with   vanilla mixup.
For   explanation,  assume $\ymbii{}{i}$ in~\eqref{mixup}  is one-hot label. Then  $\xmfi{i}$ has two positive keys  $ \xmti{i}$ and $\xmti{k}$ in $\Bmb$ decided by its label $\ymfi{i}$.  Accordingly,  the component $ \xmti{k}$ in  $\xmfi{i} \!=\! \theta \xmi{i} \!+\! (1 \!- \! \theta)  \xmti{k}$ directly increases the similarity between the query   $\xmfi{i}$  and its positive key  $\xmti{k}$ in  $\Bmb$. So the label weight $(1\!-\!\theta)$ of  label $\ymfi{i}$ on the key $\xmti{k}$ to bring $\xmfi{i}$ and $\xmti{k}$ together  is relatively accurate,  as $\xmfi{i}$ really contains the semantic information of $\xmti{k}$. Meanwhile, the sum of  label weights in $\ymfi{i}$ on remaining instance in $\Bmb \backslash \xmti{k}$ is scaled by $\theta$, which  also scales the possible  label noise  on instances in $\Bmb \backslash \xmti{k}$ smaller due to $\theta\!<\!1$.  By comparison, for vanilla mixup,  the label weight $(1\!-\!\theta)$ of  label $\ymfi{i}$ on the key $\xmti{i}$  does not improve label accuracy. It is because  the positive  pair $\xmi{k}$ and $\xmti{k}$ are  obtained via random augmentation, e.g. crop, and  may not be semantically similar, meaning that the component $ \xmi{k}$ in  $\xmfi{i}$ could not increase   similarity with  $\xmti{k}$.
So its label weight $(1-\!\theta)$ to push $\xmfi{i}$ close to the key $\xmti{k}$ is not as accurate  as the one in MM.

Another advantage of MM is that it  allows us to use strong augmentation. As observed in~\cite{aa2020}, directly using strong augmentation in contrastive learning, e.g. MoCo, leads to performance  degradation, since  the instance obtained by strong augmentation often heavily differs from the one with weak augmentation. As aforementioned, the component $ \xmti{k}$ in  $\xmfi{i}= \theta \xmi{i} + (1 -  \theta)  \xmti{k}$  increases the similarity between the query   $\xmfi{i}$  and the key  $\xmti{k}$ in $\Bmb$, even though $(\xmi{i}, \xmti{i})$ is obtained via  strong augmentation. So  MM could reduce the matching difficulty between positive instances.

With all the components in place, we are ready to define our proposed \textit{SANE model} as follows:
\begin{equation}\label{finalmodel}
	\mathcal{L}(\wm)= (1-\lambda) \Lcon\big(\wm, \{(\xmi{i},\ymi{i})\}\big) + \lambda \Lcon\big(\wm,\  \{(\xmfi{i},\ymfi{i})\}\big),
\end{equation}
where $\Lcon\big(\wm, \{(\xmi{i},\ymi{i})\}\big)$ defined in Eqn.~\eqref{infoNCE2} denotes the vanilla contrastive loss with one-hot label $\ymi{i}$, $\Lcon\big(\wm, \{(\xmfi{i},\ymfi{i})\}\big)$ denotes the momentum mixup loss with label $\ymfi{i}$ estimated by our self-labeling refinery~\eqref{labelcombination}, and $\lambda$ is a constant. Experimental results in Sec.~\ref{experiments} show the effectiveness of  both loss terms. 
See algorithm details in Algorithm~\ref{Elisealgorithm} of Appendix~\ref{morexp}.

\textbf{Limitation Discussion.}  SANE follows MoCo-alike framework and hopes to obtain a more accurate soft label of a query over its positive and negatives for instance discrimination.  So one limitation of SANE is that it does not apply to BYOL-alike methods~\cite{grill2020bootstrap} that only pulls positive pair together and does not require any labels. However, momentum mixup in SANE which increases the similarity of positive pair may also benefit BYOL, which is left as our future work to thoroughly   test. 


\vspace{-1.1em}
\section{Experiments}\label{experiments}
\vspace{-0.5em}
 
\subsection{Evaluation Results on CIFAR10 and ImageNet}\label{imagenetsec}
\begin{wraptable}{r}{0.365\linewidth}
	\caption{Classification accuracy ($\%$). \vspace{-1.5em}}
	\setlength{\tabcolsep}{1.5 pt}
	\renewcommand{\arraystretch}{0.86}
	\label{comparisontable3}  
	\centering
	{ \fontsize{8.1}{3}\selectfont{
			\begin{tabular}{c|cc}
				\toprule
				CIFAR10 dataset &   KNN &  linear evaluation \\
				\midrule
				MoCo v2~\cite{chen2020improved} & 92.5  & 93.9  \\
				SimCLR~\cite{chen2020big,chen2020simple}  & --- & 94.0 \\
				BYOL~\cite{grill2020bootstrap}  &92.4 & 93.9 \\
				DACL~\cite{verma2020towards} & --- & 94.4 \\
				CLSA (strong)~\cite{aa2020} &93.4 & 94.9\\
				i-Mix (+MoCo)~\cite{lee2020mix} & --- & 95.9 \\
				\midrule
				SANE & 95.2 & 96.1 \\
				SANE (strong)  &  95.5 & 96.5\\
				\midrule
				Supervised~\cite{lee2020mix}  & --- & 95.5  \\
				\toprule
			\end{tabular}
	}}
\end{wraptable}
\textbf{Settings.}  We use ResNet50~\cite{he2016deep} with a $3$-layered MLP  head  
for CIFAR10~\cite{krizhevsky2009learning} and ImageNet~\cite{deng2009imagenet}. We first pretrain SANE, and then train a linear classifier on top of $2048$-dimensional  frozen features  in  ResNet50. With  dictionary size $4,096$, we pretrain $2,000$ epochs on CIFAR10 instead of $4,000$ epochs of MoCo, BYOL, and i-Mix in~\cite{lee2020mix}. Dictionary size on ImageNet is   $65,536$. For linear classifier, we  train $200$/$100$ epochs on CIFAR10/ImageNet.  
See  all optimizer settings in Appendix~\ref{morexp}. 
We use standard data augmentations  in~\cite{he2020momentum} for pretraining and test unless otherwise stated. E.g., for test, we  perform  normalization on CIFAR10,  and  use   center crop and normalization on ImageNet. 
For SANE, we set $\tau\!=\!0.2,  \tau'\!=\!0.8,  \kappa\!=\!2$ in $\betad(\kappa,\kappa)$ on CIFAR10, and  $\tau\!=\!0.2$, $\tau'\!=\!1, \kappa\!=\!0.1$ on ImageNet.  For confidence $\mu$, we increase it as $\mu_t \!= \!m_{2}$ $-  (m_2-m_1)(\cos(\pi t/T)\!+\!1)/2$ with current iteration $t$ and total training iteration $T$. We set $m_1\!=\!0,$   $m_2\!=\!1$ on CIFAR10, and $m_1\!=\!0.5,$   $m_2\!=\!10$ on ImageNet.  For KNN on CIFAR10, its   neighborhood number is $50$  and its temperature is $0.05$.

For CIFAR10, to fairly compare with~\cite{lee2020mix}, we crop each image into two views to construct the loss~\eqref{finalmodel}. For ImageNet,   we follow~CLSA~\cite{aa2020} and  train  SANE in two settings.   SANE-Single  uses a single  crop in momentum mixup loss $\Lcon\big(\wm,\! \{\!(\xmfi{i},\ymfi{i})\!\}\big)$ in~\eqref{finalmodel} that crops each  image to a smaller size of $96\!\times\! 96$, without    much extra computational cost to process these small  images.    SANE-Multi  crop each image into five sizes $224\!\times\! 224$,  $192\!\times\!192$, $160\!\times\!160$, $128\!\times\!128$, and $96\!\times\!96$ and averages their  momentum mixup losses. This ensures a fair comparison with CLSA and SwAV. Moreover, we  use strong augmentation strategy in  CLSA. Spefically, for the above small image, we randomly select an operation from 14 augmentations used in   CLSA, and apply it to the image with a probability of 0.5, which is repeated 5 times.  
We use ``(strong)" to mark whether we use strong augmentations on the small images in momentum mixup loss. Thus, SANE has almost the same training cost with CLSA, i.e. about $75$ ($198$) hours with $8$  GPUs,  $200$ epochs,  batch size of $256$ for SANE-Single (-Multi).  For vanilla contrastive loss on ImageNet, we always use weak augmentations. See more details of the  augmentation, loss construction, and pretraining cost on CIFAR10 and ImageNet   in Appendix~\ref{morexp}.

\begin{wraptable}{r}{0.65\linewidth}
	\setlength{\tabcolsep}{1pt} 
	\renewcommand{\arraystretch}{1.1}
	\caption{Top-1 accuracy (\%) under linear evaluation on ImageNet.
		\vspace{-0.4em}}
	\label{comparisontable4}  
	\centering
	{ \fontsize{7.8}{3}\selectfont{
			\begin{tabular}{c|c|c|| c|c}
				\toprule
				augmentation &method (200 epochs) & Top 1 & method ($\geq$800 epochs) & Top 1  \\
				\midrule
				&	MoCo~\cite{he2020momentum} &  60.8 & PIRL-800epochs~\cite{misra2020self}  & 63.6\\
				&	SimCLR~\cite{chen2020simple} & 61.9 & CMC~\cite{tian2019contrastive} &  66.2\\
				&	CPC v2~\cite{henaff2019data} & 63.8 & SimCLR-800epochs~\cite{chen2020simple} & 70.0\\
				&	PCL~\cite{li2020prototypical} & 65.9 & MoCo v2-800epochs~\cite{chen2020improved}  & 71.1\\
				\multirow{2}{*}{{weak}}	&	MoCo v2~\cite{chen2020improved} &  67.5 &BYOL-1000epochs~\cite{grill2020bootstrap} & 74.3  \\
				&  CO2~\cite{wei2020co2} & 68.0 & SimSiam-800epochs~\cite{chen2020exploring} & 71.3\\
				&MixCo~\cite{kim2020mixco} &  68.4 &  i-Mix-800epochs~\cite{lee2020mix} & 71.3  \\
				&	SWAV-Multi~\cite{caron2020unsupervised} & 72.7 & SWAV-Multi-800epochs~\cite{caron2020unsupervised} & 75.3\\
				&	 	SANE-Single & 70.6   &  SANE-Single-800epochs & 73.0	\\
				&	 	SANE-Multi &  73.5   &  SANE-Multi-800epochs	&  75.7   \\
				\midrule	
				
				&	CLSA-Single~\cite{aa2020} &  69.4  & CLSA-Single-800epochs~\cite{aa2020}  & 72.2 \\
				\multirow{2}{*}{{strong}} &	CLSA-Multi~\cite{aa2020}  & 73.3 & CLSA-Multi-800epochs~\cite{aa2020} &  76.2 \\

				&	SANE-Single  &  70.1 &  SANE-Single-800epochs  & 73.5 \\
				&	SANE-Multi  & 73.7 & SANE-Multi-800epochs  & 76.4 \\
				\midrule	
				strong + JigSaw &	InfoMin Aug~\cite{tian2020makes}  & 70.1& InfoMin Aug-800epochs~\cite{tian2020makes}&73.0\\
				\midrule
				\multirow{2}{*}{{others}}	 &	InstDisc~\cite{wu2018unsupervised} & 54.0 & BigBiGAN~\cite{donahue2019large}  & 56.6 \\
				&		LocalAgg~\cite{zhuang2019local} &  58.8 & SeLa-400epochs~\cite{asano2019self}&  61.5 \\
				\midrule	
				&	Supervised~\cite{chen2020simple}  & 76.5 & Supervised~\cite{chen2020simple} & 76.5\\
				\bottomrule
			\end{tabular}
	}}
	\vspace{-1.0em}
\end{wraptable}
\textbf{Results.}    Table~\ref{comparisontable3} shows that with weak or strong augmentations,  SANE  always surpasses  the  baselines on  CIFAR10. Moreover,  SANE with strong (weak) augmentation   improves supervised  baseline  by    $1.0\%$ ($0.6\%$).

Table~\ref{comparisontable4}  also shows that for ImageNet under weak augmentation setting, for  $200$ ($800$) epochs SANE-Multi respectively  brings  $0.8\%$ ($0.6\%$)  improvements over  SwAV;   with $200$ ($800$) epochs, SANE-Single  also beats the runner-up MixCo (i-Mix and SimSiam). Note, BYOL outperforms SANE-Single but was trained $1,000$ epochs. With strong   augmentation,  SANE-Single and SANE-Multi  also respectively outperform   CLSA-Single and CLSA-Multi.   
Moreover,    our self-supervised accuracy $76.4\%$ is very close to  the accuracy $76.5\%$ of   supervised baseline, and still  improves $0.2\%$ over CLEAN-Multi even for this challenging case.    These results show the superiority and robustness of SANE, thanks to its  self-labeling refinery and momentum mixup which both  improve  label quality and thus bring semantically similar samples together.

\begin{table}[]
	\begin{minipage}[t]{0.45\textwidth}
		\centering
		
		\setlength{\tabcolsep}{0.1pt} 
		\renewcommand{\arraystretch}{0.96}
		\caption{Transfer learning results.  \vspace{-0.1em}}
		\label{objecttable}  
		\centering
		{ \fontsize{8.1}{3}\selectfont{
				\begin{tabular}{c|ccc}
					\toprule
					\multirow{3}{*}{ {method}} & {classification}  & \multicolumn{2}{c}{ {object detection}} \\
					&  {VOC07} &  {VOC07+12} & {  {COCO}}\\
					&  {Accuracy} &  {AP$_{50}$} &  {AP}  \\
					
					\midrule
					
					NPID++~\cite{wu2018unsupervised} & 76.6 & 79.1& ---  \\
					MoCo~\cite{he2020momentum} &  79.8 & 81.5 & --- \\
					PIRL~\cite{misra2020self}  & 81.1 & 80.7 & --- \\
					BoWNet~\cite{gidaris2020learning} & 79.3 & 81.3 & --- \\
					
					SimCLR~\cite{chen2020simple}  & 86.4 & --- & --- \\
					CO2~\cite{wei2020co2} &  85.2 &  82.7 & --- \\
					i-Mix~\cite{lee2020mix} &  --- &  82.7 & ---  \\
					MoCo v2~\cite{chen2020improved} &  87.1& 82.5 & 42.0   \\ 
					SWAV-Multi~\cite{caron2020unsupervised} & 88.9 & 82.6 & 42.1  \\
					CLSA-Multi(strong)\cite{aa2020}  &  93.6 & 83.2 & 42.3   \\
					\midrule	
					SANE-Multi & 92.9 & 82.9 & 42.2    \\
					SANE-Multi (strong) & 94.0 & 83.4 & 42.4    \\
					\midrule	
					Supervised~\cite{aa2020}  &  87.5 & 81.3 & 40.8 \\
					\toprule
		\end{tabular}}}
		\vspace{-1mm}
	\end{minipage}
	\hspace{1.5em}
	\begin{minipage}[t]{0.5\textwidth}
		\centering
		\caption{Effects of the components in SANE with strong augmentation on CIFAR10. \vspace{-0.1em}}
		\setlength{\tabcolsep}{1pt}
		\renewcommand{\arraystretch}{0.88}
		\label{comparisontable1}
		\centering
		{ \fontsize{8.1}{3}\selectfont{
				\begin{tabular}{c|c|c|c}
					\toprule
					label $\pmi{}$ in~\eqref{labelcombination}  &  label $\qmi{}$ in~\eqref{labelcombination}   & momentum mixup & accuracy (\%) \\
					\midrule
					&  & &  93.7   \\
					\checkmark  & &   & 94.6  \\
					& \checkmark & &94.5 \\
					& & 	\checkmark &94.8  \\
					\checkmark & \checkmark &  & 94.9 \\
					\checkmark  &  & \checkmark &95.2  \\
					& \checkmark & \checkmark &95.1  \\
					\checkmark & \checkmark & \checkmark &  95.9  \\
					\toprule
				\end{tabular}
		}}
		\vspace{0.1mm}
		\caption{Effects of  parameter $\lambda$ in SANE with strong augmentation on  CIFAR10. \vspace{-0.8em}}
		\setlength{\tabcolsep}{6pt}
		\renewcommand{\arraystretch}{0.88}
		\label{comparisonrobustlambda} 
		\centering
		{ \fontsize{8.1}{3}\selectfont{
				\begin{tabular}{c | c cc cc}
					\toprule
					regularization $\lambda$	& 0 & 0.25 & 0.5   &  0.75 &  1\\
					\midrule
					accuracy (\%)	& 	94.3 & 95.8  &   95.9 &  95.5  &   94.5 \\
					\toprule
				\end{tabular}
		}}
		\vspace{-1mm}
	\end{minipage}
	
\end{table}

\vspace{-0.4em}
\subsection{Transfer Results on Downstream Tasks} 
\textbf{Settings.} We evaluate the pretrained SANE model   on  VOC~\cite{everingham2010pascal} and COCO~\cite{lin2014microsoft}. For classification,  we train a linear classifier upon ResNet50 100 epochs by SGD.   For object detection, we use the same protocol in~\cite{he2020momentum} to fine-tune the pretrained ResNet50 based on detectron2~\cite{wu2019detectron2} for fairness. On VOC, we train  detection head with VOC07+12 trainval data and tested on VOC07 test data. On COCO, we train the head on train2017 set  and evaluate on the val2017. See  optimization settings in Appendix~\ref{morexp}.
 
\textbf{Results.}   Table~\ref{objecttable}  shows that SANE consistently outperforms
the compared state-of-the-art approaches on  both classification  and object detection tasks, and  enjoys better performance than supervised method pretrained on ImageNet. These results show the superior transferability  of  SANE.

\vspace{-0.4em}
\subsection{Ablation Study} \label{AblationStudy}
We train SANE 1,000 epochs on CIFAR10 to  investigate the effects of each component in SANE using  strong augmentation. Table~\ref{comparisontable1}  shows the  benefits of each    component, i.e. the label estimations $\pmii{}{}$ and $\qmii{}{}$ in   self-labeling refinery, and   momentum mixup.  Table~\ref{comparisonrobustlambda}  shows the stable performance (robustness) of SANE on CIAFR10 when regularization parameter  $\lambda$ in~\eqref{finalmodel} varies in a  large range.

\begin{wraptable}{r}{0.5\linewidth}
	\caption{Effects of various mixups  on  ImageNet. \vspace{-0.6em}}
	\setlength{\tabcolsep}{0.4pt}
	\renewcommand{\arraystretch}{0.88}
	\label{comparisonmixup} 
	\centering
	{ \fontsize{8.1}{3}\selectfont{
			\begin{tabular}{c|  c|c}
				\toprule
				Accuracy ($\%$)	&	MoCo+mixup     & MoCo+momentum mixup  \\
				\midrule
				CIFAR10 (weak) &		  93.7 & 94.2 \\	
				CIFAR10 (strong) &		  93.3 & 94.8  \\	
				ImageNet (weak) &		  68.4 {\tiny{\cite{kim2020mixco}}} & 69.0  \\
				\toprule
			\end{tabular}
	}}
\end{wraptable}	
Then we  compare our momentum mixup~\eqref{mixup} with vanilla mixup  in the  works~\cite{kim2020mixco,lee2020mix}. Specifically, we use one-hot label in MoCo and replace $\xmti{j}$ in  ~\eqref{mixup} with the query $\xmi{j}$ to obtain ``MoCo+ mixup",  and ours with one-hot label can be viewed as ``MoCo+momentum mixup". Then we  train them 1,000 epochs on CIFAR10 with weak/strong augmentation, and 200 epochs on ImageNet with weak augmentations.   Table~\ref{comparisonmixup} shows that with weak augmentation,  momentum mixup always outperforms  vanilla mixup in~\cite{kim2020mixco,lee2020mix}.  
Moreover,    momentum mixup using strong augmentation    
improves its weak augmentation version, while vanilla mixup with strong augmentation suffers from performance  degradation.  It is because as discussed in Sec.~\ref{secmomentummixup}, momentum mixup   well reduces the possible label noise, especially for strong augmentations, and   can enhance  the performance more. 

\vspace{-0.4em}
\section{Conclusion} 
\vspace{-0.3em}
In this work, we prove the benefits of accurate labels  to
the generalization of contrastive learning. Inspired by this theory, we propose SANE to improve   label quality in contrastive learning  via self-labeling refinery and momentum mixup.  The former   uses the positive of a query to generate   informative soft labels and  combines  with vanilla one-hot label to improve label quality. The latter randomly combines queries and positives to make   virtual queries more  similar to their corresponding positives,  improving label accuracy.   Experimental results testified the advantages of SANE.

{\small
{
	\bibliographystyle{unsrt}
	\bibliography{referen}

\begin{thebibliography}{10}

\bibitem{he2020momentum}
K.~He, H.~Fan, Y.~Wu, S.~Xie, and R.~Girshick.
\newblock Momentum contrast for unsupervised visual representation learning.
\newblock In {\em {Proc. IEEE Conf. Computer Vision and Pattern Recognition}},
  pages 9729--9738, 2020.

\bibitem{chen2020improved}
X.~Chen, H.~Fan, R.~Girshick, and K.~He.
\newblock Improved baselines with momentum contrastive learning.
\newblock {\em arXiv preprint arXiv:2003.04297}, 2020.

\bibitem{chen2020big}
T.~Chen, S.~Kornblith, K.~Swersky, M.~Norouzi, and G.~Hinton.
\newblock Big self-supervised models are strong semi-supervised learners.
\newblock In {\em {Proc. Conf. Neural Information Processing Systems}}, 2020.

\bibitem{chen2020simple}
T.~Chen, S.~Kornblith, M.~Norouzi, and G.~Hinton.
\newblock A simple framework for contrastive learning of visual
  representations.
\newblock In {\em {Proc. Int'l Conf. Machine Learning}}, 2020.

\bibitem{caron2020unsupervised}
M.~Caron, I.~Misra, J.~Mairal, P.~Goyal, P.~Bojanowski, and A.~Joulin.
\newblock Unsupervised learning of visual features by contrasting cluster
  assignments.
\newblock In {\em {Proc. Conf. Neural Information Processing Systems}}, 2020.

\bibitem{grill2020bootstrap}
J.~Grill, F.~Strub, F.~Altch{\'e}, C.~Tallec, P.~Richemond, E.~Buchatskaya,
  C.~Doersch, B.~Pires, Z.~Guo, and M.~Azar.
\newblock Bootstrap your own latent: A new approach to self-supervised
  learning.
\newblock In {\em {Proc. Conf. Neural Information Processing Systems}}, 2020.

\bibitem{li2020prototypical}
J.~Li, P.~Zhou, C.~Xiong, and S.~Hoi.
\newblock Prototypical contrastive learning of unsupervised representations.
\newblock In {\em {Int'l Conf. Learning Representations}}, 2020.

\bibitem{noroozi2016unsupervised}
M.~Noroozi and P.~Favaro.
\newblock Unsupervised learning of visual representations by solving jigsaw
  puzzles.
\newblock In {\em {Proc. European Conf. Computer Vision}}, pages 69--84.
  Springer, 2016.

\bibitem{doersch2017multi}
C.~Doersch and A.~Zisserman.
\newblock Multi-task self-supervised visual learning.
\newblock In {\em {IEEE International Conference on Computer Vision}}, pages
  2051--2060, 2017.

\bibitem{komodakis2018unsupervised}
N.~Komodakis and S.~Gidaris.
\newblock Unsupervised representation learning by predicting image rotations.
\newblock 2018.

\bibitem{zhang2016colorful}
R.~Zhang, P.~Isola, and A.~Efros.
\newblock Colorful image colorization.
\newblock In {\em {Proc. European Conf. Computer Vision}}, pages 649--666.
  Springer, 2016.

\bibitem{aa2020}
X.~Wang and G.~Qi.
\newblock Contrastive learning with stronger augmentations.
\newblock 2021.

\bibitem{hadsell2006dimensionality}
R.~Hadsell, S.~Chopra, and Y.~LeCun.
\newblock Dimensionality reduction by learning an invariant mapping.
\newblock In {\em {Proc. IEEE Conf. Computer Vision and Pattern Recognition}},
  volume~2, pages 1735--1742. IEEE, 2006.

\bibitem{hjelm2018learning}
R.~Hjelm, A.~Fedorov, S.~Lavoie-Marchildon, K.~Grewal, P.~Bachman,
  A.~Trischler, and Y.~Bengio.
\newblock Learning deep representations by mutual information estimation and
  maximization.
\newblock {\em arXiv preprint arXiv:1808.06670}, 2018.

\bibitem{oord2018representation}
A.~Oord, Y.~Li, and O.~Vinyals.
\newblock Representation learning with contrastive predictive coding.
\newblock {\em arXiv preprint arXiv:1807.03748}, 2018.

\bibitem{bachman2019learning}
P.~Bachman, R.~Hjelm, and W.~Buchwalter.
\newblock Learning representations by maximizing mutual information across
  views.
\newblock In {\em {Proc. Conf. Neural Information Processing Systems}}, pages
  15535--15545, 2019.

\bibitem{arora2019theoretical}
S.~Arora, H.~Khandeparkar, M.~Khodak, O.~Plevrakis, and N.~Saunshi.
\newblock A theoretical analysis of contrastive unsupervised representation
  learning.
\newblock In {\em {Proc. Int'l Conf. Machine Learning}}, pages 5628--5637,
  2019.

\bibitem{chuang2020debiased}
C.~Chuang, J.~Robinson, Y.~Lin, A.~Torralba, and S.~Jegelka.
\newblock Debiased contrastive learning.
\newblock In {\em {Proc. Conf. Neural Information Processing Systems}},
  volume~33, 2020.

\bibitem{wei2020co2}
C.~Wei, H.~Wang, W.~Shen, and A.~Yuille.
\newblock Co2: Consistent contrast for unsupervised visual representation
  learning.
\newblock {\em arXiv preprint arXiv:2010.02217}, 2020.

\bibitem{reed2014training}
S.~Reed, H.~Lee, D.~Anguelov, C.~Szegedy, D.~Erhan, and A.~Rabinovich.
\newblock Training deep neural networks on noisy labels with bootstrapping.
\newblock {\em arXiv preprint arXiv:1412.6596}, 2014.

\bibitem{bagherinezhad2018label}
H.~Bagherinezhad, M.~Horton, M.~Rastegari, and A.~Farhadi.
\newblock Label refinery: Improving imagenet classification through label
  progression.
\newblock {\em arXiv preprint arXiv:1805.02641}, 2018.

\bibitem{kim2020mixco}
S.~Kim, G.~Lee, S.~Bae, and S.~Yun.
\newblock Mixco: Mix-up contrastive learning for visual representation.
\newblock {\em arXiv preprint arXiv:2010.06300}, 2020.

\bibitem{lee2020mix}
K.~Lee, Y.~Zhu, K.~Sohn, C.~Li, J.~Shin, and H.~Lee.
\newblock i-mix: A strategy for regularizing contrastive representation
  learning.
\newblock {\em arXiv preprint arXiv:2010.08887}, 2020.

\bibitem{verma2020towards}
V.~Verma, M.~Luong, K.~Kawaguchi, H.~Pham, and Q.~Le.
\newblock Towards domain-agnostic contrastive learning.
\newblock {\em arXiv preprint arXiv:2011.04419}, 2020.

\bibitem{hinton2015distilling}
G.~Hinton, O.~Vinyals, and J.~Dean.
\newblock Distilling the knowledge in a neural network.
\newblock In {\em {Proc. Conf. Neural Information Processing Systems}}, 2015.

\bibitem{muller2019does}
R.~M{\"u}ller, S.~Kornblith, and G.~Hinton.
\newblock When does label smoothing help?
\newblock In {\em {Proc. Conf. Neural Information Processing Systems}}, pages
  4694--4703, 2019.

\bibitem{li2018learning}
Y.~Li and Y.~Liang.
\newblock Learning overparameterized neural networks via stochastic gradient
  descent on structured data.
\newblock In {\em {Proc. Conf. Neural Information Processing Systems}},
  volume~31, pages 8157--8166, 2018.

\bibitem{shortest}
S.~Oymak and M.~Soltanolkotabi.
\newblock Overparameterized nonlinear learning: Gradient descent takes the
  shortest path?
\newblock In {\em {Proc. Int'l Conf. Machine Learning}}, pages 4951--4960,
  2019.

\bibitem{li2020gradient}
M.~Li, M.~Soltanolkotabi, and S.~Oymak.
\newblock Gradient descent with early stopping is provably robust to label
  noise for overparameterized neural networks.
\newblock In {\em {Proc. Int'l Conf. Artificial Intelligence and Statistics}},
  pages 4313--4324, 2020.

\bibitem{allen2019convergence}
Z.~Allen-Zhu, Y.~Li, and Z.~Song.
\newblock A convergence theory for deep learning via over-parameterization.
\newblock In {\em {Proc. Int'l Conf. Machine Learning}}, pages 242--252, 2019.

\bibitem{xie2017diverse}
B.~Xie, Y.~Liang, and L.~Song.
\newblock Diverse neural network learns true target functions.
\newblock In {\em {Proc. Int'l Conf. Artificial Intelligence and Statistics}},
  pages 1216--1224, 2017.

\bibitem{du2019gradient}
S.~Du, J.~Lee, H.~Li, L.~Wang, and X.~Zhai.
\newblock Gradient descent finds global minima of deep neural networks.
\newblock In {\em {Proc. Int'l Conf. Machine Learning}}, pages 1675--1685,
  2019.

\bibitem{du2018gradient}
S.~Du, X.~Zhai, B.~Poczos, and A.~Singh.
\newblock Gradient descent provably optimizes over-parameterized neural
  networks.
\newblock {\em arXiv preprint arXiv:1810.02054}, 2018.

\bibitem{zhang2017mixup}
H.~Zhang, M.~Cisse, Y.~Dauphin, and D.~Lopez-Paz.
\newblock mixup: Beyond empirical risk minimization.
\newblock {\em arXiv preprint arXiv:1710.09412}, 2017.

\bibitem{miotto2018deep}
R.~Miotto, F.~Wang, S.~Wang, X.~Jiang, and J.~Dudley.
\newblock Deep learning for healthcare: review, opportunities and challenges.
\newblock {\em Briefings in bioinformatics}, 19(6):1236--1246, 2018.

\bibitem{he2016deep}
K.~He, X.~Zhang, S.~Ren, and J.~Sun.
\newblock Deep residual learning for image recognition.
\newblock In {\em {Proc. IEEE Conf. Computer Vision and Pattern Recognition}},
  pages 770--778, 2016.

\bibitem{krizhevsky2009learning}
A.~Krizhevsky and G.~Hinton.
\newblock Learning multiple layers of features from tiny images.
\newblock 2009.

\bibitem{deng2009imagenet}
J.~Deng, W.~Dong, R.~Socher, L.~Li, K.~Li, and F.~Li.
\newblock Imagenet: A large-scale hierarchical image database.
\newblock In {\em {Proc. IEEE Conf. Computer Vision and Pattern Recognition}},
  pages 248--255. IEEE, 2009.

\bibitem{misra2020self}
I.~Misra and L.~Maaten.
\newblock Self-supervised learning of pretext-invariant representations.
\newblock In {\em {Proc. IEEE Conf. Computer Vision and Pattern Recognition}},
  pages 6707--6717, 2020.

\bibitem{tian2019contrastive}
Y.~Tian, D.~Krishnan, and P.~Isola.
\newblock Contrastive multiview coding.
\newblock {\em arXiv preprint arXiv:1906.05849}, 2019.

\bibitem{henaff2019data}
O.~H{\'e}naff, A.~Srinivas, J.~De Fauw, A.~Razavi, C.~Doersch, S.~Eslami, and
  A.~Oord.
\newblock Data-efficient image recognition with contrastive predictive coding.
\newblock {\em arXiv preprint arXiv:1905.09272}, 2019.

\bibitem{chen2020exploring}
X.~Chen and K.~He.
\newblock Exploring simple siamese representation learning.
\newblock {\em arXiv preprint arXiv:2011.10566}, 2020.

\bibitem{tian2020makes}
Y.~Tian, C.~Sun, B.~Poole, D.~Krishnan, C.~Schmid, and P.~Isola.
\newblock What makes for good views for contrastive learning.
\newblock {\em arXiv preprint arXiv:2005.10243}, 2020.

\bibitem{wu2018unsupervised}
Z.~Wu, Y.~Xiong, S.~Yu, and D.~Lin.
\newblock Unsupervised feature learning via non-parametric instance
  discrimination.
\newblock In {\em {Proc. IEEE Conf. Computer Vision and Pattern Recognition}},
  pages 3733--3742, 2018.

\bibitem{donahue2019large}
J.~Donahue and K.~Simonyan.
\newblock Large scale adversarial representation learning.
\newblock In {\em {Proc. Conf. Neural Information Processing Systems}}, pages
  10542--10552, 2019.

\bibitem{zhuang2019local}
C.~Zhuang, A.~Lin, and D.~Yamins.
\newblock Local aggregation for unsupervised learning of visual embeddings.
\newblock In {\em {IEEE International Conference on Computer Vision}}, pages
  6002--6012, 2019.

\bibitem{asano2019self}
Y.~Asano, C.~Rupprecht, and A.~Vedaldi.
\newblock Self-labelling via simultaneous clustering and representation
  learning.
\newblock {\em arXiv preprint arXiv:1911.05371}, 2019.

\bibitem{gidaris2020learning}
S.~Gidaris, A.~Bursuc, N.~Komodakis, P.~P{\'e}rez, and M.~Cord.
\newblock Learning representations by predicting bags of visual words.
\newblock In {\em {Proc. IEEE Conf. Computer Vision and Pattern Recognition}},
  pages 6928--6938, 2020.

\bibitem{everingham2010pascal}
M.~Everingham, G.~Van, C.~Williams, J.~Winn, and A.~Zisserman.
\newblock The pascal visual object classes (voc) challenge.
\newblock {\em {Int'l. J. Computer Vision}}, 88(2):303--338, 2010.

\bibitem{lin2014microsoft}
T.~Lin, M.~Maire, S.~Belongie, J.~Hays, P.~Perona, D.~Ramanan, P.~Doll{\'a}r,
  and C.~Zitnick.
\newblock Microsoft coco: Common objects in context.
\newblock In {\em {Proc. European Conf. Computer Vision}}, pages 740--755.
  Springer, 2014.

\bibitem{wu2019detectron2}
Y.~Wu, A.~Kirillov, F.~Massa, W.~Lo, and R.~Girshick.
\newblock Detectron2, 2019.

\bibitem{loshchilov2016sgdr}
I.~Loshchilov and F.~Hutter.
\newblock {SGDR}: Stochastic gradient descent with warm restarts.
\newblock In {\em {Int'l Conf. Learning Representations}}, 2016.

\bibitem{kingma2014adam}
D.~Kingma and J.~Ba.
\newblock Adam: A method for stochastic optimization.
\newblock In {\em {Int'l Conf. Learning Representations}}, 2015.

\bibitem{maurer2009empirical}
A.~Maurer and M.~Pontil.
\newblock Empirical bernstein bounds and sample variance penalization.
\newblock {\em arXiv preprint arXiv:0907.3740}, 2009.

\bibitem{liang2016deep}
S.~Liang and R.~Srikant.
\newblock Why deep neural networks for function approximation?
\newblock {\em arXiv preprint arXiv:1610.04161}, 2016.

\bibitem{lu2017expressive}
Z.~Lu, H.~Pu, F.~Wang, Z.~Hu, and L.~Wang.
\newblock The expressive power of neural networks: A view from the width.
\newblock In {\em {Proc. Conf. Neural Information Processing Systems}}, pages
  6231--6239, 2017.

\bibitem{telgarsky2016benefits}
M.~Telgarsky.
\newblock Benefits of depth in neural networks.
\newblock In {\em {Conf. on Learning Theory}}, 2016.

\bibitem{cohen2016expressive}
N.~Cohen, O.~Sharir, and A.~Shashua.
\newblock On the expressive power of deep learning: A tensor analysis.
\newblock In {\em {Conf. on Learning Theory}}, pages 698--728, 2016.

\bibitem{eldan2016power}
R.~Eldan and O.~Shamir.
\newblock The power of depth for feedforward neural networks.
\newblock In {\em {Conf. on Learning Theory}}, pages 907--940, 2016.

\bibitem{sagun2017empirical}
L.~Sagun, U.~Evci, V.~Guney, Y.~Dauphin, and L.~Bottou.
\newblock Empirical analysis of the hessian of over-parametrized neural
  networks.
\newblock {\em arXiv preprint arXiv:1706.04454}, 2017.

\bibitem{hochreiter1997flat}
S.~Hochreiter and J.~Schmidhuber.
\newblock Flat minima.
\newblock {\em Neural Computation}, 9(1):1--42, 1997.

\bibitem{anon2019overparam}
S.~Oymak and M.~Soltanolkotabi.
\newblock Towards moderate overparameterization: global convergence guarantees
  for training shallow neural networks.
\newblock {\em IEEE Journal on Selected Areas in Information Theory}, 2020.

\end{thebibliography}
}}

 \newpage

\appendix

\section{Structure of This Document}
	This supplementary document contains more additional experimental details and the technical proofs of convergence results  of the manuscript entitled ``A Theory-Driven Self-Labeling Refinement Method for Contrastive Representation Learning''. It is structured as follows. In Appendix~\ref{morexp}, we provides more experimental    details, including training algorithm, network architecture, optimizer details, loss construction and training cost of SANE.  Appendix~\ref{proofofapproximation} presents  the proof and details of the main results, namely,  Theorem~\ref{proximatetrisk}, in Section~\ref{analysissect}, which analyzes the generalization performance of MoCo.
	
	Next, Appendix~\ref{proofoflabelrefine} introduces the  \textit{proof roadmap} and details  of the main results, i.e. Theorem~\ref{mainresults}, in Section~\ref{sectionlabelrefine}. Since the proof framework is relatively complex, we first
	introduce some necessary preliminaries, including notations, conceptions and assumptions that are verified in subsequent analysis in Appendix~\ref{ProofsofAuxiliaryLemmasandTheories}. Then we provide the proofs of Theorem~\ref{mainresults} in Appendix~\ref{proofofmain}. Specifically, we first introduce the \textit{proof roadmap of Theorem~\ref{mainresults}} in Appendix~\ref{Proofroadmap}. Then  we present several auxiliary theories in Appendix~\ref{AuxiliaryTheories}. Next, we prove our Theorem~\ref{mainresults} in Appendix~\ref{proofofmainust22}. Finally, we present all proof details of auxiliary theories in Appendix~\ref{ProofsofAuxiliaryLemmasandTheories}.

\section{More Experimental  Details}\label{morexp}
Due to space limitation, we defer more experimental   details   to this appendix. Here
we first introduce the training algorithm of SANE, and then present more setting details of optimizers,  architectures, loss construction for CIFAR10 and ImageNet.

\subsection{Algorithm Framework of SANE}
In this subsection, we introduce the training algorithm of SANE in details, which is summarized in Algorithm~\ref{Elisealgorithm}.  Same as  MoCo~\cite{he2020momentum} and CLSA~\cite{aa2020},  we  alternatively update  the online network $\fo_{\wm}$ and target network $\ft_{\xim}$ via SGD optimizer. Our codes are implemented based on MoCo and CLSA. The code of MoCo and CLSA satisfies ``Creative Commons Attribution-NonCommercial 4.0 International Public
License".
\begin{algorithm}[H]
	\caption{Algorithm Framework for SANE}
	\label{Elisealgorithm}
	\begin{algorithmic}
		\STATE {\bfseries Input: }   online network $\fo_{\wm}$, target network $\ft_{\xim}$, dictionary $\Bm$, temperature parameter $\tau$, momentum-update parameter $\iota$, sharpness parameter $\tau'$,  prior confidence $\mu$, regularization weight $\lambda$, parameter $\kappa$ for  $\betad(\kappa,\kappa)$, weak augmentation $T_1$, and weak or strong augmentation $T_2$
		\STATE {\bfseries Initialization:}  initialize online network $\fo_{\wm}$, target network $\ft_{\xim}$, dictionary $\Bm$ as MoCo.
		\FOR{$i=1 \cdots T$}
		\STATE 1.  sample a minibatch of vanilla samples $\{\cmi{i}\}_{i=1}^s$
		\STATE  2.  use $T_1$ to augment $\{\cmi{i}\}_{i=1}^s$ to obtain weak augmentations $\{(\xmi{i},
		\xmti{i})\}_{i=1}^s$, i.e. $\xmi{i}=T_1(\cmi{i})$ and $\xmti{i}=T_1(\cmi{i})$.
		\STATE  3. compute feature $\{f(\xmi{i})\}_{i=1}^s$ and  $\Bm'=\{\ft(\xmti{i})\}_{i=1}^s$
		\STATE  4. compute the  contrastive loss $ \Lcon\big(\wm,\! \{(\xmi{i},\ymi{i})\}\big)$ in Eqn.~\eqref{finalmodel}
		\STATE 5.   use $\xmti{i}$ to compute  the estimated labels $\ymbii{t}{i}$  of query $\xmi{i}$ by self-labeling refinery~\eqref{labelcombination} ($\forall i =1, \cdots, s$)
		\STATE  6.  if using strong augmentation for momentum mixup,  use $T_2$ to augment $\{\cmi{i}\}_{i=1}^s$ for obtaining strong augmentations $\{\xmti{i}\}_{i=1}^s$ to replace the previous  $\{\xmti{i}\}_{i=1}^s$  in  $\{(\xmi{i},
		\xmti{i})\}_{i=1}^s$
		\STATE 7. 	use  momentum mixup~\eqref{mixup} and samples $\{(\xmi{i},\xmti{i},\ymbii{t}{i})\}_{i=1}^s$ to obtain new virtual queries and labels $\{(\xmfi{i},\ymfi{i})\}_{i=1}^s$
		\STATE  8.  use $\{(\xmfi{i},\ymfi{i})\}_{i=1}^s$ to compute the  momentum mixup contrastive loss $ \Lcon\big(\wm,\! \{(\xmfi{i},\ymfi{i})\}\big)$ in Eqn.~\eqref{finalmodel}
		\STATE 9. 	update online network $\fo_{\wm}$ by minimizing $ (1\!-\!\lambda) \Lcon\big(\wm,\! \{(\xmi{i},\ymi{i})\}\big) \!+\! \lambda \Lcon\big(\wm,\! \{(\xmfi{i},\ymfi{i})\}\big)$
		\STATE  10.  update target network $\ft_{\xim}$ by exponential moving average
		\STATE  11.  update the dictionary $\Bm$ via minibatch feature $B'$ in a first-in first-out order.
		\ENDFOR
		\STATE {\bfseries Output: }
	\end{algorithmic}
\end{algorithm}

\subsection{Algorithm Parameter Settings}
\textbf{Experimental Settings for Linear Evaluation on CIFAR10 and ImageNet.} For CIFAR10 and ImageNet, we follow  \cite{he2020momentum,chen2020simple} and use ResNet50~\cite{he2016deep} as a backbone.  Then we first pretrain SANE on the corresponding training data, and then train a linear classifier on top of 2048-dimensional  frozen features provided by ResNet50. For pretraining on both datasets, we use  SGD  with an initial learning rate 0.03 (annealed down to zero via cosine decay~\cite{loshchilov2016sgdr}), a momentum of 0.9, and a weight decay of $10^{-4}$. Such optimizer parameters are the same with MoCo and CLSA.

Next, we pretrain 2,000 epochs on CIFAR10 with minibatch size 256 and dictionary size 4,096. For pretraining on Imagenet, the dictionary size is always 65,536;  the batch size is often 256 on a cluster of 8 GPUs and is linearly scaled together with learning rate on multiple clusters. For linear classifier training, we use ADAM~\cite{kingma2014adam} with a learning rate of 0.01  and without weight decay
to train 200 epochs on CIFAR10, and adopt  SGD with an initial learning 10 (cosine decayed to zero) and a momentum of 0.9 to train 100 epochs on ImageNet. We use standard data augmentations  in~\cite{he2020momentum} for pretraining unless otherwise stated.  Specifically, for pretraining on CIFAR10 and ImageNet, we follow MoCo and use  RandomResizedCrop,  ColorJitter, RandomGrayscale, GaussianBlur, RandomHorizontalFlip, and Normalization. For CIFAR10, please find its pretraining augmentation in the example\footnote{\url{https://colab.research.google.com/github/facebookresearch/moco/blob/colab-notebook/colab/moco_cifar10_demo.ipynb}}. Except the above random augmentation, we also use the proposed momentum mixup to generate the virtual instances for constructing the momentum mixup loss.

For CIFAR10, to fairly compare with~\cite{lee2020mix}, we crop each image into two views to construct the loss~\eqref{finalmodel}.  Specifically, for a minibatch of vanilla samples $\{\cmi{i}\}_{i=1}^s$, we  use weak augmentation $T_1$ to augment $\{\cmi{i}\}_{i=1}^s$ to obtain weak augmentations $\{(\xmi{i},\xmti{i})\}_{i=1}^s$, i.e. $\xmi{i}=T_1(\cmi{i})$ and $\xmti{i}=T_1(\cmi{i})$. Then same as MoCo, we can compute the contrastive loss by using   $\{(\xmi{i},\xmti{i})\}_{i=1}^s$. Meanwhile, we use $\xmti{i}$ to compute the soft label $\ymbii{t}{i}$ of $\xmti{i}$ via~\eqref{labelcombination}. Next, we use  momentum mixup~\eqref{mixup} and samples $\{(\xmi{i},\xmti{i},\ymbii{t}{i})\}_{i=1}^s$ to obtain new virtual queries and labels $\{(\xmfi{i},\ymfi{i})\}_{i=1}^s$, and then  use $\{(\xmfi{i},\ymfi{i})\}_{i=1}^s$ to compute the  momentum mixup contrastive loss $ \Lcon\big(\wm,\! \{(\xmfi{i},\ymfi{i})\}\big)$ in Eqn.~\eqref{finalmodel}. For strong augmentation, after we compute the vanilla contrastive loss in MoCo, and then use strong augmentation to augment $\{\cmi{i}\}_{i=1}^s$ to replace $\xmti{i}$ in  $\{(\xmi{i},\xmti{i},\ymbii{t}{i})\}_{i=1}^s$. Then we can generate virtual query instances  and their labels ($\{(\xmfi{i},\ymfi{i})\}_{i=1}^s$ ) by using  $\{(\xmi{i},\xmti{i},\ymbii{t}{i})\}_{i=1}^s$.  The training cost on CIFAR10 for 2,000 epochs is about 11 days on single V100 GPU.

For ImageNet, we follow CLSA for fair comparison. For SANE-Single, we use the same way to construct the contrastive loss, and then use  augmentation $T_1$ to augment $\{\cmi{i}\}_{i=1}^s$ to replace $\xmti{i}$ in  $\{(\xmi{i},\xmti{i},\ymbii{t}{i})\}_{i=1}^s$ to construct the momentum mixup loss. Indeed, we also can do not replace $\xmti{i}$ in  $\{(\xmi{i},\xmti{i},\ymbii{t}{i})\}_{i=1}^s$ for  momentum mixup loss, which actually did not affect the performance.  We do it, since SANE-Multi crops each image into five different crops for constructing momentum mixup loss, and thus SANE-Single and SANE-Multi will be more consistent, i.e. SANE-Multi uses 5 crops while SANE-Single uses one crop.   For strong augmentation, we replace the augmentation $T_1$ in momentum mixup with strong augmentation, which is the same on CIFAR10. As mentioned above, to construct the momentum mixup loss,  SANE-Multi crops each image into five sizes $224\!\times\! 224$,  $192\!\times\!192$, $160\!\times\!160$, $128\!\times\!128$, and $96\!\times\!96$ and averages their  momentum mixup losses. For the vanilla contrastive loss, SANE-Multi uses the same way in SANE-Single to compute. In this way, SANE-Single and SANE-Multi respectively have the same settings with CLSA-Single and CLSA-Multi.  Thus,  ELSE has almost the same training cost with CLSA,  i.e. about 75 (188) hours with 8  GPUs,  200 epochs,  batch size of 256 for SANE-Single (-Multi).  It should be mentioned that for vanilla contrastive loss in both  CLSA-Single and CLSA-Multi, we always use weak augmentations.

\textbf{Transfer Evaluation Settings.} We evaluate the pretrained model on ImageNet on  VOC~\cite{everingham2010pascal} and COCO~\cite{lin2014microsoft}. For VOC, similar to linear evaluation, we train a linear classifier upon ResNet50 100 epochs by SGD with a learning rate 0.05, a momentum 0.9, batch size 256, and without weight and learning rate decay.  For COCO, we adopt the same protocol in~\cite{he2020momentum} to fine-tune the pretrained ResNet50 based on detectron2~\cite{wu2019detectron2} for fairness.  We  evaluate the transfer ability of the  cells selected on CIFAR10 by testing them on   ImageNet. Following DARTS,  we use   momentum  SGD  with an initial learning $0.025$ (cosine decayed to zero), a momentum of 0.9,  a weight decay of $3\!\times\! 10^{-4}$,  and  gradient norm clipping parameter 5.0.

\section{Proofs of The Results in Section~\ref{analysissect}}\label{proofofapproximation}
\begin{lem}\cite{maurer2009empirical}\label{perfectrisk}
	Suppose the loss $\ell$ is bounded by the range $[a,b]$, namely $\ell(f(\xm;\wm), \ymg) \in [a, b]$. Then let $\F$ be a finite class of hypotheses $\ell(f(\xm;\wm), \ymg): \X\rightarrow \Rs{}$. Let
	\begin{equation*}
		\Qge(f) = \frac{1}{n} \sum_{i=1}^{n} \ell(f(\xmi{i};\wm), \ymi{i}),\quad  \Qgp(f) = \EE_{(\xm,\ym)\in\SSS} \left[ \ell(f(\xm;\wm), \ym)\right]
	\end{equation*}
	respectively denote the empirical and population risk, where $\SSS$ denote the unknown data distribution and the sampled dataset $\Dg=\{(\xmi{i},\ymi{i})\}_{i=1}^n \sim \SSS$ is of size $n$.  Then for any $\delta \in(0,1)$, with probability at least $1-\delta$ we have
	\begin{equation}
		\Qgp(f)  \leq \Qge(f)  + \sqrt{\frac{2 (b-a)^2 V_{\Dg} \ln(2|\F|/\delta)}{n}} + \frac{7(b-a)^2  \ln(2|\F|/\delta)}{3(n-1)},
	\end{equation}
	where $V_{\Dg}$ denotes the variance of the loss $\ell(f(\xm;\wm), \ymg) $ on the dataset $\Dg$, and $|\F|$ denotes the covering number of $\F$ in the uniform norm $\|\cdot\|_{\infty}$.
\end{lem}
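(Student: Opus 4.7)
The plan is to derive this uniform empirical Bernstein bound in two stages: first, establish the inequality for a single fixed hypothesis $f$, and then promote it to all of $\F$ by a union bound over a finite $\|\cdot\|_\infty$--cover. I would not try to re-derive the two-sided tail bound from scratch, since the result is essentially a packaging of the classical Maurer--Pontil empirical Bernstein inequality; the proposal is to reconstruct the packaging step-by-step and verify the constants line up.

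For the single-hypothesis stage, fix $f \in \F$ and write $Z_i = \ell(f(\xmi{i};\wm),\ymi{i}) \in [a,b]$ with true variance $\sigma^2$ and empirical variance $V_{\Dg}$. Classical Bernstein gives a deviation bound $\EE[Z] - \frac{1}{n}\sum_i Z_i \le \sqrt{2\sigma^2\ln(1/\delta)/n} + (b-a)\ln(1/\delta)/(3n)$, but the true variance $\sigma^2$ is not observable. To exchange $\sigma^2$ for $V_{\Dg}$, I would apply a second Bernstein-type concentration argument to the U--statistic representation of the sample variance, yielding $\sigma \le \sqrt{V_{\Dg}} + c_0 \sqrt{(b-a)^2 \ln(1/\delta)/n}$ with probability at least $1-\delta$. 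Substituting this into the first inequality, absorbing cross terms via $(x+y)^2 \le 2x^2 + 2y^2$, and folding residuals into the additive tail produces
\begin{equation*}
\EE[Z] - \frac{1}{n}\sum_{i=1}^n Z_i \;\le\; \sqrt{\frac{2 V_{\Dg} (b-a)^2 \ln(2/\delta)}{n}} + \frac{7(b-a)^2 \ln(2/\delta)}{3(n-1)},
\end{equation*}
where the factor $2$ inside the logarithm and the denominator $n-1$ both trace back to the two separate Bernstein applications being combined by a union bound at level $\delta/2$.

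For the uniform stage, I would let $\{\tilde f_j\}_{j=1}^{|\F|}$ be a $\|\cdot\|_\infty$--cover of the loss class $\F$. Since the covering is in the uniform norm, for every $f \in \F$ there is a representative $\tilde f_j$ such that $\ell(f(\cdot;\wm),\cdot)$ and $\ell(\tilde f_j(\cdot;\wm),\cdot)$ differ by an arbitrarily small constant at every sample. Taking the cover fine enough, this perturbation can be driven into all three of the empirical mean, the population mean, and the empirical variance without changing the displayed constants. Apply the single-hypothesis bound to each $\tilde f_j$ at confidence level $\delta/|\F|$ and take a union bound; this replaces $\ln(2/\delta)$ by $\ln(2|\F|/\delta)$, delivering exactly the inequality in the lemma statement and matching $\Qge(f)$ and $\Qgp(f)$ to the empirical and population risks defined in the hypothesis.

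The main obstacle is the self-bounding step that swaps the unobservable $\sigma^2$ for $V_{\Dg}$ while keeping the constants as tight as $\sqrt{2}$ in the leading term and $7/3$ in the additive term; this is the technical core of Maurer and Pontil's argument and requires a careful Bernstein application to the paired squared differences rather than a naive McDiarmid bound, which would give worse constants. The covering step is by comparison routine, and the $(b-a)^2$ scaling is immediate from homogeneity of the empirical Bernstein inequality under affine rescaling of $Z_i$ into $[0,1]$.
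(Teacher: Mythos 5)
This lemma is imported verbatim from Maurer and Pontil~\cite{maurer2009empirical}; the paper supplies no proof of its own, so there is nothing internal to compare against. Your reconstruction matches the standard derivation of that cited result: Bennett/Bernstein for a fixed hypothesis, a second concentration step to replace the true variance by the sample variance (which is where the $\sqrt{2}$, the $7/3$, the $n-1$ denominator, and the factor $2$ inside the logarithm all originate, exactly as you say), and a union bound over the finite $\|\cdot\|_{\infty}$-cover to turn $\ln(2/\delta)$ into $\ln(2|\F|/\delta)$. The sketch is correct and consistent with the source.
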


\begin{lem}~\cite{liang2016deep} \label{approximation}
	For any polynomials $f(x) = \sum_{i=0}^{p} a_i x^i$, $x\in[0,]$ and $\sum_{i=1}^p |a_i|<1$, there exists a multilayer
	neural network $\hat{f}(x)$ with $\Oc{p+\log\frac{p}{\epsilon}}$ layers, $O(\log\frac{p}{\epsilon})$ binary step units and $O(p\log\frac{p}{\epsilon})$ rectifier linear units such that $|f(x) - \hat{f}(x)|\leq \epsilon,\ \forall x\in[0,1]$.\\
	Assume that function $f$ is continuous on $[0,1]$ and $\lceil \log\frac{2}{\epsilon}\rceil +1$ times differential in
	$(0,1)$. Let $f^{(n)}$ denote the derivative of $f$ of $n-$th order and $\|f\|=\max_{x\in[0,1]} f(x)$. If $\|f^{(n)}\|\leq n!$ holds for all $n \in [\lceil \log\frac{2}{\epsilon}\rceil +1]$, then there exists a deep network $f$ with  $\Oc{\log\frac{1}{\epsilon}}$ layers, $O(\log\frac{1}{\epsilon})$ binary step units and $O(\log^2\frac{1}{\epsilon})$ rectifier linear units such that $|f(x) - \hat{f}(x)|\leq \epsilon,\ \forall x\in[0,1]$.
\end{lem}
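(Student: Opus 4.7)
The plan is to establish both statements by constructing a sufficiently accurate ReLU+step-unit circuit for multiplication, and then composing such circuits.

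First I would build an \emph{approximate multiplier} $M_\delta : [0,1]^2 \to \mathbb{R}$ satisfying $|M_\delta(x,y)-xy|\le \delta$. The classical construction uses binary step units to extract the first $m=\lceil \log(1/\delta)\rceil$ binary digits $b_i(x),b_i(y)$ of $x$ and $y$ (one step unit per bit, applied to a shifted linear combination of the input and previously extracted bits), and then forms $xy\approx\sum_{i,j\le m}b_i(x)b_j(y)\,2^{-i-j}$. Because each AND $b_i(x)b_j(y)$ of two $\{0,1\}$ values is representable by a single ReLU gate $\mathrm{ReLU}(b_i(x)+b_j(y)-1)$, the multiplier uses $O(\log(1/\delta))$ step units, $O(\log^2(1/\delta))$ ReLUs, and $O(\log(1/\delta))$ depth.

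For the polynomial statement, I would compute the powers $x^2,\dots,x^p$ by a cascade $x^{k+1}=M_\delta(x,x^k)$. A crucial implementation point is that the bit extraction of the shared argument $x$ can be carried out \emph{once} at the bottom of the network; the $p$ subsequent multiplier stages only need fresh ReLU AND-gates on top of those already-computed bits and the bits of the current partial product $x^k$. This is what produces additive rather than multiplicative dependence on $p$ in the depth. Setting $\delta=\Theta(\epsilon/p)$ and propagating the accumulated error through the cascade (the condition $\sum_{i\ge 1}|a_i|<1$ ensures $|\sum a_i(\hat x^i - x^i)|\le\epsilon$), one obtains depth $O(p+\log(p/\epsilon))$, $O(\log(p/\epsilon))$ step units (shared bit extraction), and $O(p\log(p/\epsilon))$ ReLUs (one block of $O(\log(p/\epsilon))$ gates per power), which matches the stated bounds.

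For the smooth-function statement, I would Taylor-expand about the midpoint $1/2$. Taking $N=\lceil\log(2/\epsilon)\rceil+1$, the hypothesis $\|f^{(n)}\|\le n!$ gives Taylor remainder at most $\sum_{n\ge N}(1/2)^n\le 2^{1-N}\le\epsilon/2$. The truncated Taylor polynomial $T_N$ has degree $p=N-1=O(\log(1/\epsilon))$ and coefficients $|a_i|\le 1$, so after a cheap renormalization it satisfies $\sum_{i\ge 1}|a_i|<1$ and the first part can be applied with error budget $\epsilon/2$. Substituting $p=O(\log(1/\epsilon))$ into the bounds of the first part gives depth $O(\log(1/\epsilon))$, $O(\log(1/\epsilon))$ step units, and $O(\log^2(1/\epsilon))$ ReLUs, exactly as claimed.

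The hardest part I anticipate is justifying the additive $O(p)$ rather than multiplicative $O(p\log(1/\epsilon))$ contribution to the depth: one must make the cascaded multipliers share the bit-extraction sub-circuit and keep each downstream stage at constant depth per new power. A secondary technical nuisance is ensuring all intermediate values remain in $[0,1]$ so that the step units correctly extract bits; this probably requires a constant-size clipping stage (two ReLUs: $x\mapsto\mathrm{ReLU}(x)-\mathrm{ReLU}(x-1)$) before each multiplier invocation, plus a careful accounting of how the per-stage error $\delta$ compounds through the cascade so that the final error is controlled by $\sum|a_i|\,\delta\le\epsilon$.
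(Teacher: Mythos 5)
This lemma is imported verbatim from \cite{liang2016deep}; the paper gives no proof of it, so your attempt can only be measured against the construction in that source. Your overall architecture (bit extraction with step units, ReLU-based multiplication gadgets, a cascade for the powers $x^k$, and a truncated Taylor expansion for the smooth case) is the right one and matches the source. The smooth-function half of your argument is essentially fine: with $N=\lceil\log\frac{2}{\epsilon}\rceil+1$ the Lagrange remainder is bounded by $\frac{\|f^{(N)}\|}{N!}\,2^{-N}\le 2^{-N}\le\epsilon/2$, and the renormalization you invoke only perturbs the resource counts by constants inside the $O(\cdot)$.

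The genuine gap is in your step-unit accounting for the polynomial cascade. You choose a \emph{symmetric} multiplier that requires the binary digits of \emph{both} operands, and then assert that the downstream stages "only need fresh ReLU AND-gates on top of those already-computed bits and the bits of the current partial product $x^k$." But the bits of the intermediate product $\widehat{x^k}$ are not already computed: $\widehat{x^k}$ is produced as a real-valued sum $\sum_{i,j}b_ib_j2^{-i-j}$, and extracting its $m$ bits costs $m$ fresh binary step units at every one of the $p$ stages. That yields $O(p\log\frac{p}{\epsilon})$ step units, not the claimed $O(\log\frac{p}{\epsilon})$. The repair — and what \cite{liang2016deep} actually does — is to use an \emph{asymmetric} multiplier in which only $x$ is ever bit-expanded: for $y\in[0,1]$ and a bit $b_i\in\{0,1\}$ one has $b_i\,y=\max(0,\,y+b_i-1)$, a single ReLU, so $\tilde{x}\cdot y=\sum_{i\le m}2^{-i}\max(0,\,y+b_i-1)$ costs $m$ ReLUs, constant depth, and \emph{zero} new step units per stage. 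With that gadget the counts close: one bit extraction of $x$ ($O(\log\frac{p}{\epsilon})$ step units), $m$ ReLUs per power ($O(p\log\frac{p}{\epsilon})$ total), depth $O(p+\log\frac{p}{\epsilon})$, and the accumulated error $\sum_k|a_k|\,k\,2^{-m}\le p\,2^{-m}$ is controlled by $m=\Theta(\log\frac{p}{\epsilon})$.
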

For expression power analysis of deep network,  more stronger results can be found in  \cite{lu2017expressive,telgarsky2016benefits,cohen2016expressive,eldan2016power} and all show that any function can be approximately can be approximated by a deep network to arbitrary accuracy.

\subsection{Proof of Theorem~\ref{proximatetrisk}}
\begin{proof}
	Here we use two steps to prove our results in Theorem~\ref{proximatetrisk}.
	\begin{equation}\label{empiricalrisk}
		\setlength{\abovedisplayskip}{4.5pt}
		\setlength{\belowdisplayskip}{4.5pt}
		\setlength{\abovedisplayshortskip}{4.5pt}
		\setlength{\belowdisplayshortskip}{4.5pt}
		\Qa(\fo_{\wm}) = \frac{1}{n} \sum\nolimits_{i=1}^{n} \ell(h(\fo_{\wm}(\xmi{i}), \Bmi{i}),\ymgi{i}),
	\end{equation}
	\textbf{Step 1. proof for first part results.}  To begin with, we first define an empirical risk  $\Qge(f)$:
	\begin{equation*}
		\Qge(f) = \frac{1}{n} \sum_{i=1}^{n} \ell(h(\fo_{\wm}(\xmi{i}),  \Bmi{i}),\ymsi{i}),
	\end{equation*}
	where  $\Qge(f)$ uses the ground truth label $\ymsi{i}$ for training.   	From Lemma~\ref{perfectrisk}, with probability at least $1-\delta$, we have
	\begin{equation*}
		\Qgp(f)  \leq \Qge(f)  + \sqrt{\frac{2 (b-a)^2 V_{\Dg} \ln(2|\F|/\delta)}{n}} + \frac{7(b-a)^2  \ln(2|\F|/\delta)}{3(n-1)},
	\end{equation*}
	where $\Qgp(f) $ is the population risk, and $\Qge(f)$ is the empirical risk. Both are  trained with the ground truth $\ymsi{i}$. So the remaining work is to  upper bound $ \Qge(f) $ via $ \Qa(f) $.  Towards this end, we can bound it as follows
	\begin{equation*}
		\begin{split}
			\Qge(f)  - \Qa(f) = & \frac{1}{n} \sum_{i=1}^{n} \left( \ell(h(\fo_{\wm}(\xmi{i}),  \Bmi{i}),\ymsi{i})  - \ell(h(\fo_{\wm}(\xmi{i}),  \Bmi{i}),\ymi{i}) \right)   \\
			\led{172} & \frac{1}{n} \sum_{i=1}^{n}  \| \nabla_{\ym} \ell(h(\fo_{\wm}(\xmi{i}),  \Bmi{i}),\ym)  \| \cdot  \left\| \ymsi{i} - \ymi{i}\right\|_2 \\
			\led{173} &  \Ly \EE_i \left\| \ymsi{i} - \ymi{i}\right\|_2 \\
			\led{173} &  \Ly  \EE_{\Dg\sim\SSS}   \left[\left\| \yms - \ym \right\|_2\right], \\
		\end{split}
	\end{equation*}
	where \ding{172} holds by using  $\ym = \ymi{i} + \theta(\ymsi{i} - \ymi{i})$ for certain $\theta \in (0,1)$; \ding{173} holds since we use the $\Ly$-Lipschitz property of $\ell(h(\fo_{\wm}(\xmi{i}),  \Bmi{i}),\ymi{i}) $.  Then combining these results together, we can obtain the desired results:
	\begin{equation*}
		| \Qgp(f) -  \Qa(f) | \leq +\Ly  \EE_{\Dg\sim\SSS}  \left\| \ymsi{i} - \ymi{i}\right\|_2 + \sqrt{\frac{2 (b-a)^2 V_{\Dg} \ln(2|\F|/\delta)}{n}} + \frac{7(b-a)^2  \ln(2|\F|/\delta)}{3(n-1)}.
	\end{equation*}

	\textbf{Step 2. proof for second part results.}   Here we can construct a simple two-classification problem for clarity.  Suppose we have two classes: class one with training data $\D_1 = \{(\xm_1, \xm_1,\yms_1)\}_{i=1}^{n/2}$ and class two with training data $\D_2= \{(\xm_2, \xm_2, \yms_2)\}_{i=1}^{n/2}$, where $\yms_1$ denotes the ground truth label of $\xm_1$ on the set $\Bmi{1}=\{\xm_1  \cup \Bm\}$, and $\yms_2$    denotes the ground truth label of $\xm_2$ on the set $\Bmi{2}=\{\xm_2 \cup  \Bm\}$.  Both training datasets $\D_1$ and $\D_2$ have $\frac{n}{2}$ samples. Here we assume there is no data augmentation which means $\xmi{i}=\xmti{i}$ in the manuscript.   In $\D_1 $, its samples are the same, namely $(\xm_1, \xm_1,\yms_1)$. Similarly, $\D_2 $ also has  the same samples, namely $(\xm_2, \xm_2,\yms_2)$.
	Then the predicted class probability $\ymi{ij}$ of sample $\xmi{i}$ on class $j$ is as follows:
	\begin{equation}\label{classprobablity}
		\ymi{i0}= \frac{e^{\delta( \xmi{i} ,\xmi{i})/t}}{e^{\delta( \xmi{i},\xmi{i})/\tau} + \sum_{j=1}^k e^{\delta( \xmi{i},\bmi{j})/\tau}}, \quad \ymi{ij}= \frac{e^{\delta( \xmi{i} ,\bmi{j})/\tau}}{e^{\delta( \xmi{i},\xmi{i})/\tau} + \sum_{j=1}^k e^{\delta( \xmi{i},\bmi{j})/\tau}}\ (j=1,\cdots, k),
	\end{equation}
	where $\delta(\xmi{i},\xmti{i}) = -  \frac{\langle\fo(\xmi{i}),\ft(\xmti{i}) \rangle}{\|\fo(\xmi{i})\|_2 \cdot \|\ft(\xmti{i}) \|_2} $, $\tau$ denotes a temperature. For simplicity, we let  dictionary $\Bm=\{\xmi{1}, \xmi{2}\}$. In this way, we have for both ground truth label $\ymsi{1}$ and $\ymsi{2}$ that satisfy  	$\ymsi{10} = \ymsi{11}$, $\ymsi{10} + \ymsi{11} + \ymsi{12} =1$,   	$\ymsi{20} = \ymsi{22}$, $\ymsi{20} + \ymsi{21} + \ymsi{22} =1$. For this setting, here we assume the training labels are denoted by $\ymi{1}$ and $\ymi{2}$. Moreover, they satisfy  	$\ymi{10} = \ymi{11}>0$, $\ymi{10} + \ymi{11} + \ymi{12} =1$,   	$\ymi{20} = \ymi{22}>0$, $\ymi{20} + \ymi{21} + \ymi{22} =1$. The reason that we do not use one-hot labels. This is because for dictionary $\Bm=\{\xmi{1}, \xmi{2}\}$, given a sample $\xmi{i} \ (i=1,2)$,    $\xmi{i} $ needs to predict  the labels on the set $\{\xmi{i} \cup \Bm\}=\{\xmi{i}, \xmi{1}, \xmi{2}\}$, where the labels are  not one-hot obviously and satisfy $\ymi{i1}=\ymi{ii}>0$.   In the following, we will train the model on the training data  $\Da=\Da_1\cup \Da_2$ where $\Da_1 =\{(\xm_1,\xm_1, \ym_1)\}$ and  $\Da_2 =\{(\xm_2,\xm_2,\ym_2)\}$.   We use $\ymti{i}$ to denote  the model predicted label  of $\xmi{i}$.

	Then for the test samples, we assume that  half of samples are $(\xm_1, \xm_1,\yms_1)$ and remaining samples are $(\xm_2, \xm_2,\yms_2)$. Then for any network $f$,  we always have
	\begin{equation*}
		\begin{split}
			\Qgp(f)  - \Qge(f)  =    \frac{1}{n} \sum_{i=1}^{n} \left( \EE [ \ell(h(\fo_{\wm}(\xmi{i}), \Bmi{i}),\ymsi{i})  ]-   \ell(h(\fo_{\wm}(\xmi{i}), \Bmi{i}),\ymsi{i}) \right) =0.
		\end{split}
	\end{equation*}
	Then we attempt to lower bound $\Qge(f)  - \Qa(f)$.  Our training dataset is  $\Da=\Da_1\cup \Da_2$ where $\Da_1 =\{(\xm_1,\xm_1, \ym_1)\}$ and  $\Da_2 =\{(\xm_2,\xm_2,\ym_2)\}$.   	Then we discuss whether the network  $f$ can perfectly fit the labels  \eqref{classprobablity} of data $\Da$.  For both cases, our results can hold.

	\textbf{Perfectly fitting.} Network $f$   has the capacity to perfectly fit the label $\yma_1$ in $\Da_1$ and the label $\yma_2$ in $\Da_2$ when $\xm_1$ are different $\xm_2$.  In this case, we have
	\begin{equation*}
		\begin{split}
			& \Qge(f)  - \Qa(f) \\
			= & \frac{1}{n} \sum_{i=1}^{n} \left(\ell(h(\fo_{\wm}(\xmi{i}), \Bmi{i}),\ymsi{i})- \ell(h(\fo_{\wm}(\xmi{i}), \Bmi{i}),\ymi{i})\right)   \\
			= & \frac{1}{n} \sum_{i=1}^{n} \sum_{s=1}^{k}(\ymsi{i,s} \log(h(\fo_{\wm}(\xmi{i}), \Bmi{i})) - \ymi{i,s} \log(h(\fo_{\wm}(\xmi{i}), \Bmi{i})) )   \\
			= & \frac{1}{n} \sum_{i=1}^{n}  \sum_{s=1}^{k}(\ymsi{i,s}- \ymi{i,s}) \log(\yma_{i,s})\\
			\lee{172} & \frac{1}{n} \sum_{i=1}^{n}  \sum_{s=1}^{k}(\ymsi{i,s}- \ymi{i,s}) \log(\ymi{i,s})\\
			= & \frac{1}{6}  \left[(\ymsi{10}- \ymi{10}) \log(\ymi{10}) + (\ymsi{11}- \ymi{11}) \log(\ymi{11}) + (\ymsi{12}- \ymi{12}) \log(\ymi{12}) \right.\\
			& \left.+(\ymsi{20}- \ymi{20}) \log(\ymi{20}) + (\ymsi{21}- \ymi{21}) \log(\ymi{21}) + (\ymsi{22}- \ymi{22}) \log(\ymi{22}) \right]\\
			= & \frac{1}{6}  \left[2(\ymsi{10}- \ymi{10}) \log(\ymi{10}) + (\ymsi{12}- \ymi{12}) \log(\ymi{12})  +2(\ymsi{20}- \ymi{20}) \log(\ymi{20})   + (\ymsi{21}- \ymi{21}) \log(\ymi{21}) \right]\\
			\lee{173} & \frac{1}{3}  \left[(\ymsi{10}- \ymi{10}) \log \frac{\ymi{10}}{1-2 \ymi{10}}  +(\ymsi{20}- \ymi{20}) \log \frac{\ymi{20}}{1-2 \ymi{20}}  \right],
		\end{split}
	\end{equation*}
	where \ding{172} holds since $\yma_{i,s}=\ymi{i,s}$, and \ding{173} uses $\ymsi{10} = \ymsi{11}$, $\ymsi{10} + \ymsi{11} + \ymsi{12} =1$,   	$\ymsi{20} = \ymsi{22}$, $\ymsi{20} + \ymsi{21} + \ymsi{22} =1$,  	$\ymi{10} = \ymi{11}$, $\ymi{10} + \ymi{11} + \ymi{12} =1$,   	$\ymi{20} = \ymi{22}$, $\ymi{20} + \ymi{21} + \ymi{22} =1$.  Then we can choose proper values such that
	\begin{equation*}
		\begin{split}
			\ymsi{10}= \ymsi{11} > \ymi{10} = \ymi{11} >\frac{1}{3},  \ymsi{20}= \ymsi{22} > \ymi{20} = \ymi{22} >\frac{1}{3}.
		\end{split}
	\end{equation*}
	For example, we can let $\ymi{1}=(0.4, 0.4, 0.2)$,  $\ymsi{1}=(0.45, 0.45, 0.1)$, $\ymi{2}=(0.4,  0.2 , 0.4)$,  $\ymsi{2}=(0.45, 0.1, 0.45)$.  In this way,  we   have $(\ymsi{10}- \ymi{10}) \log \frac{\ymi{10}}{1-2 \ymi{10}}   \geq c_1(\ymsi{10}- \ymi{10})>0$ and  $(\ymsi{20}- \ymi{20}) \log \frac{\ymi{20}}{1-2 \ymi{20}}  \geq c_2 (\ymsi{20}- \ymi{20})>0$. So this means that there exists a constant $C$ such that
	\begin{equation*}
		\begin{split}
			\Qge(f)  - \Qa(f) \geq C \cdot  \EE_{i} \left[\left\| \ymsi{i} - \ymi{i}\right\|_2\right] =  C\cdot \EE_{\Dg\sim\SSS} \left[\left\| \yms - \ym\right\|_2\right].
		\end{split}
	\end{equation*}
	So combining the above results gives the following desired result:
	\begin{equation*}
		\begin{split}
			\Qgp(f)  - \Qa(f) \geq C\cdot \EE_{\Dg\sim\SSS} \left[\left\| \yms - \ym\right\|_2\right].
		\end{split}
	\end{equation*}

	\textbf{Non-perfectly fitting.} From Lemma~\ref{approximation}  (other more results in \cite{lu2017expressive,telgarsky2016benefits,cohen2016expressive,eldan2016power}), one can  approximate any function by a deep network to arbitrary accuracy. Specifically, for the polynomial function in Eqn.~\eqref{classprobablity},  there exists a multilayer neural network $\hat{f}(x)$ with proper width and depth such that $\|\ymi{1}-\yma_1\|_{1} \leq \epsilon$ and $\|\ymi{2}-\yma_2\|_{1} \leq \epsilon$, where $\yma_1$ and $\yma_2$ are the predicted labels of samples $\xmi{1}$ and $\xmi{2}$ by using \eqref{classprobablity}. The labels $\ym_1$ and $\ym_2$ are associated with our training dataset   $\Da=\Da_1\cup \Da_2$ where $\Da_1 =\{(\xm_1,\xm_1, \ym_1)\}$ and  $\Da_2 =\{(\xm_2,\xm_2,\ym_2)\}$.
	In this case, we have
	\begin{equation*}
		\begin{split}
			& \Qge(f)  - \Qa(f)\\
			= & \frac{1}{n} \sum_{i=1}^{n} \left( \ell(h(\fo_{\wm}(\xmi{i}), \Bmi{i}),\ymsi{i}) - \ell(h(\fo_{\wm}(\xmi{i}), \Bmi{i}),\ymi{i}) \right)   \\
			= & \frac{1}{n} \sum_{i=1}^{n} \sum_{s=1}^{k}(\ymsi{i,s} \log(h(\fo_{\wm}(\xmi{i}), \Bmi{i})) - \ymi{i,s} \log(h(\fo_{\wm}(\xmi{i}), \Bmi{i})) )   \\
			= & \frac{1}{n} \sum_{i=1}^{n}  \sum_{s=1}^{k}(\ymsi{i,s}- \ymi{i,s}) \log(\ymai{i,s})\\
			= & \frac{1}{6}  \left[(\ymsi{10}- \ymi{10}) \log(\ymai{10}) + (\ymsi{11}- \ymi{11}) \log(\ymai{11}) + (\ymsi{12}- \ymi{12}) \log(\ymai{12}) \right.\\
			& \left.+(\ymsi{20}- \ymi{20}) \log(\ymai{20}) + (\ymsi{21}- \ymai{21}) \log(\ymai{21}) + (\ymsi{22}- \ymi{22}) \log(\ymai{22}) \right]\\
			= & \frac{1}{6}  \left[2(\ymsi{10}- \ymi{10}) \log(\ymai{10}) + (\ymsi{12}- \ymi{12}) \log(\ymai{12})  +2(\ymsi{20}- \ymi{20}) \log(\ymai{20})   + (\ymsi{21}- \ymi{21}) \log(\ymai{21}) \right]\\
			\lee{172} & \frac{1}{3}  \left[(\ymsi{10}- \ymi{10}) \log \frac{\ymai{10}}{1-2 \ymai{10}}  +(\ymsi{20}- \ymi{20}) \log \frac{\ymai{20}}{1-2 \ymai{20}}  \right],
		\end{split}
	\end{equation*}
	where \ding{172} uses $\ymsi{10} = \ymsi{11}$, $\ymsi{10} + \ymsi{11} + \ymsi{12} =1$,   	$\ymsi{20} = \ymsi{21}$, $\ymsi{20} + \ymsi{21} + \ymsi{22} =1$,  	$\ymi{10} = \ymi{11}$, $\ymi{10} + \ymi{11} + \ymi{12} =1$,   	$\ymi{20} = \ymi{22}$, $\ymi{20} + \ymi{21} + \ymi{22} =1$.  Then we can choose proper values such that
	\begin{equation*}
		\begin{split}
			\ymsi{10}= \ymsi{11} > \ymi{10} = \ymi{11} >\frac{1}{3} + \epsilon,  \ymsi{20}= \ymsi{22} > \ymi{20} = \ymi{22} >\frac{1}{3} + \epsilon.
		\end{split}
	\end{equation*}
	For example, we can let $\ymi{1}=(0.4, 0.4, 0.2)$,  $\ymsi{1}=(0.45, 0.45, 0.1)$, $\ymi{2}=(0.4, 0.2, 0.4)$,  $\ymsi{2}=(0.45, 0.1, 0.45)$, and $ \epsilon=0.0001$.  In this way,  we   have $(\ymsi{10}- \ymi{10}) \log \frac{\ymai{10}}{1-2 \ymai{10}}   \geq c_1(\ymsi{10}- \ymi{10})>0$ and  $(\ymsi{20}- \ymi{20}) \log \frac{\ymai{20}}{1-2 \ymai{20}}  \geq c_2 (\ymsi{20}- \ymi{20})>0$. So this means that there exists a constant $C$ such that
	\begin{equation*}
		\begin{split}
			\Qge(f)  - \Qa(f) \geq C \cdot  \EE_{i} \left[\left\| \ymsi{i} - \ymi{i}\right\|_2\right] =  C\cdot \EE_{\Dg\sim\SSS} \left[\left\| \yms - \ym\right\|_2\right].
		\end{split}
	\end{equation*}
	So combining the above results gives the following desired result:
	\begin{equation*}
		\begin{split}
			\Qgp(f)  - \Qa(f) \geq C\cdot \EE_{\Dg\sim\SSS} \left[\left\| \yms - \ym\right\|_2\right].
		\end{split}
	\end{equation*}
	
	The proof is completed.
\end{proof}

\section{Proof of Results in Section~\ref{sectionlabelrefine}}\label{proofoflabelrefine}
In this section, we first introduce some necessary preliminaries, including notations, conceptions and assumptions that are verified in subseqent analysis in Appendix~\ref{ProofsofAuxiliaryLemmasandTheories}. Then we provide the proofs of Theorem~\ref{mainresults} in Appendix~\ref{proofofmain}. Specifically, we first introduce the proof roadmap in Appendix~\ref{Proofroadmap}. Then  we present several auxiliary theories in Appendix~\ref{AuxiliaryTheories}. Next, we prove our Theorem~\ref{mainresults} in Appendix~\ref{proofofmainust22}. Finally, we present all proof details of auxiliary theories in Appendix~\ref{AuxiliaryTheories}.

\subsection{Preliminaries}\label{sec gen thy}
\subsubsection{General Model Formulation}\label{generalmodel}
In this section, we outline our approach to proving robustness of overparameterized neural networks. Towards this goal, we consider a general formulation where we aim to fit a general nonlinear model of the form $\xm \mapsto f(\wm,\xm)$ with $\wm\in\R^p$ denoting the parameters of the model. For instance in the case of neural networks $\wm$ represents its weights. Given a data set of $n$ input/label pairs $\{(\xmi{i},\ymi{i})\}_{i=1}^n\subset\Rs{d}\times \R$, we fit to this data by minimizing a nonlinear least-squares loss of the form
\begin{align*}
	\Lt{t}(\wm)=\frac{1}{2}\sum_{i=1}^n (\ymbii{t}{i} -f(\wm,\xmi{i}))^2.
\end{align*}
where $\ymbii{t}{i} = (1-\alphai{t}) \ymi{i} + \alphai{t} \pmii{t}{}=  (1-\alphai{t}) \ymi{i} + \alphai{t}f(\wmi{t},\xmi{i})$
denotes the estimated label of sample $\xmi{i}$.  In Assumption~\ref{assumption2}   we assume $\betai{t} \!=\!0$ and $\tau'=1$ for simplicity, since performing nonlinear mapping on network output greatly increases analysis difficulty. But  we will show that   even though $\betai{t} \!=\!0$ and $\tau'=1$,  our refinery~\eqref{labelcombination} is  still sufficient to refine labels.   It can also be written in the more compact form
\begin{align}\label{generalproblems}
	\Lt{t}(\wm)=\frac{1}{2}\twonorm{f(\wm)-\ymbii{t}{}}^2\quad\text{with}\quad f(\wm):=
	\begin{bmatrix}
		f(\wm,\xmi{1})\\
		f(\wm,\xmi{2})\\
		\vdots\\
		f(\wm,\xmi{n})
	\end{bmatrix}.
\end{align}
To solve this problem we run gradient descent iterations with a constant learning rate $\eta$ starting from an initial point $\wmi{0}$. These iterations take the form
\begin{align}
	\label{GD}
	\wmi{t+1}=\wmi{t}-\eta\nabla\Lt{t}(\wmi{t})\quad\text{with}\quad\nabla\LL(\wm)=\J^T(\wm)\left(f(\wm)-\ymbii{t}{}\right).
\end{align}
Here,  $\J(\wm)$ is the $n\times p$ Jacobian matrix associated with the nonlinear mapping $f$ defined via
\begin{align}
	\J(\wm)=\begin{bmatrix}\frac{\pa f(\wm,\xmi{1})}{\pa \wm}~\dots~\frac{\pa f(\wm,\xmi{n})}{\pa \wm}\end{bmatrix}^T.\label{jacob eq}
\end{align}

Define the $n$-dimensional residual vector $\rm$ and corrupted  residual vector  $\emm$ where
\begin{equation*}
	\rmi{t}=\rmi{t}(\wm)=\begin{bmatrix}f(\xmi{1},\wmi{t})-\ymbii{t}{1} &\ldots&f(\xmi{n},\wmi{t})-\ymbii{t}{n}\end{bmatrix}^T\quad \text{and} \quad \emi{t} = \ymbii{t}{}-\yms .
\end{equation*}
A key idea in our approach is that we argue that (1) in the absence of any corruption $\rmm(\wm)$ approximately lies on the subspace $\Scp$ and (2) if the labels are corrupted by a vector $\emm$, then $\emm$ approximately lies on the complement space.

Throughout, $\sigma_{\min}(\cdot)$ denotes the smallest singular value of a given matrix. We first introduce helpful definitions that will be used in our proofs. Given a matrix $\Xm\in\R^{n\times d}$ and a subspace $\SSS \subset\R^n$, we define the minimum singular value of the matrix over this subspace by $\sigma_{\min}(\Xm,\SSS)$ which is defined as
\[
\sigma_{\min}(\Xm,\SSS)=\sup_{\norm{\vm}=1,\Um\Um^T=\Pro_{\SSS}} \norm{\vm^T \Um^T \Xm}.
\]
Here, $\Pro_{\SSS}\in\R^{n\times n}$ is the projection operator to the subspace. Hence, this definition essentially projects the matrix on $\SSS$ and then takes the minimum singular value over that projected subspace.

Since augmentations are produced by using the vanilla sample $\cmi{i}$ and the augmentation $\xm$ obeys $\norm{\xm -\cmi{i}}\leq \epsilon_0$. So in this sense, we often call the vanilla sample and its augmentations as cluster, and  call the vanilla sample as cluster center.

\subsubsection{Definitions and Assumptions}
To begin with, we define $(\eps,\delta)$-clusterable dataset. As aforementioned,  we often call the vanilla sample and its augmentations as cluster, and  call the vanilla sample as cluster center, because  augmentations are produced by using the vanilla sample $\cmi{i}$ and the augmentation $\xm$ obeys $\norm{\xm -\cmi{i}}\leq \epsilon_0$.

\begin{defn} [$(\eps,\delta)$-clusterable dataset] \label{clusterdata}  Suppose $\{(\xmi{i},\ymsi{i})\}_{i=1}^n$ denote the pairs of augmentation  and ground-truth label, where augmentation $\xmi{i}$ generated from  the $t$-th sample $\cmi{t}$ obeys $\norm{\xm-\cmi{t}}\!\le\! \eps$ with a constant $\eps$, and  $\ymsi{i}\!\in\!\{\gamma_1,\gamma_2,\ldots,\gamma_{\bar{K}}\}$ of $\xmi{i}$ is the label of $\cmi{t}$.
	Moreover,   samples and its augmentations are normalized, i.e.  $\norm{\cmi{i}}\!=\!\norm{\xmi{i}}\!=\!1$.  Each vanilla sample $\cmi{i}$ has $n_i$ augmentations, where $c_{l}\frac{n}{K}\le n_i\le c_{u}\frac{n}{K}$ with two constants $c_{l}$ and $c_{u}$.
	Moreover, the classes are separated such that
	\begin{align}
		|\gamma_r-\gamma_s|\geq \delta, \quad \norm{\cmi{r}-\cmi{s}}\ge 2\eps,\ \ (\forall r\neq s),\nn
	\end{align}
	where  $\delta$ is  the label separation.
\end{defn}\vspace{-6pt}

Our approach is based on the hypothesis that the nonlinear model has a Jacobian matrix with {\em{bimodal spectrum where few singular values are large and remaining singular values are small}}. This assumption is inspired by the fact that realistic datasets are clusterable in a proper, possibly nonlinear, representation space. Indeed, one may argue that one reason for using neural networks is to automate the learning of such a representation (essentially the input to the softmax layer). We formalize the notion of bimodal spectrum below.
\begin{assum}[Bimodal Jacobian] \label{lrank2} Let $\bp\geq\bn\geq \be>0$ be scalars. Let $f:\Rs{p}\rightarrow\Rs{n}$ be a nonlinear mapping and consider a set $\mathcal{D}\subset\Rs{p}$ containing the initial point $\wmi{0}$ (i.e.~$\wmi{0}\in\mathcal{D}$). Let $\Scp\subset\R^n$ be a subspace  and $\Scn$ be its complement. We say the mapping $f$ has a Bimodal Jacobian with respect to the complementary subpspaces $\Scp$ and $\Scn$ as long as the following two assumptions hold for all $\wm\in\mathcal{D}$.
	\begin{itemize}
		\item {\bf{Spectrum over $\Scp$:}} For all $\vm\in\Scp$ with unit Euclidian norm we have
		\[
		\bn \leq \twonorm{\J^T(\wm)\vm}\leq \bp.
		\]
		\item {\bf{Spectrum over $\Scn$:}} For all $\vm\in\Scn$ with unit Euclidian norm we have
		\[
		\twonorm{\J^T(\wm)\vm}\leq \be.
		\]
	\end{itemize}
	We will refer to $\Scp$ as the {\em{signal subspace}} and $\Scn$ as the {\em{noise subspace}}.
\end{assum}
When $\epsilon<<\alpha$ the Jacobian is approximately low-rank. An extreme special case of this assumption is where $\epsilon=0$ so that the Jacobian matrix is exactly low-rank. We formalize this assumption below for later reference.

\begin{assum}[Low-rank Jacobian] \label{lrank} Let $\bp\geq\bn>0$ be scalars. Consider a set $\mathcal{D}\subset\R^p$ containing the initial point $\wmi{0}$ (i.e.~$\wmi{0}\in\mathcal{D}$). Let $\Scp\subset\R^n$ be a subspace and $\Scn$ be its complement. For all $\wm\in\mathcal{D}$, $\vm\in\Scp$ and $\vm'\in\Scn$ with unit Euclidian norm, we have that
	\begin{align}
		\bn\leq \twonorm{\J^T(\wm)\vm}\leq \bp\quad\text{and}\quad \twonorm{\J^T(\wm)\vm'}=0. \nn
	\end{align}
\end{assum}
In Theorem~\ref{mainthmrobust}, we verify that the Jacobian matrix of real datasets indeed have a bimodal structure i.e.~there are few large singular values and the remaining singular values are small which further motivate Assumption \ref{lrank}. This is inline with earlier papers which observed that Hessian matrices of deep networks have bimodal spectrum (approximately low-rank) \cite{sagun2017empirical} and is related to various results demonstrating that there are flat directions in the loss landscape \cite{hochreiter1997flat}.

Our dataset model in Definition \ref{cdata} naturally has a low-rank Jacobian when $\epsilon_0=0$ and each augmentation is equal to one of the $K$ centers (vanilla samples) $\{\cmi{\ell}\}_{\ell=1}^K$. In this case, the Jacobian will be at most rank $K$ since each row will be in the span of $\big\{\frac{\pa f(\cmi{\ell},\wm)}{\pa\wm}\big\}_{\ell=1}^K$. The subspace $\Scp$ is dictated by the {\em{membership}} of each cluster center (vanilla example) as follows: Let $\Lambda_{\ell}\subset\{1,\dots,n\}$ be the set of coordinates $i$ such that $\xmi{i}=\cmi{\ell}$. Then, subspace is characterized by $
\Scp=\{\vm\in\Rs{n} \big| \vmi{i_1}=\vmi{i_2}~~\text{for all}~~ i_1,i_2\in\Lambda_{\ell}~~\text{and}~~1\leq \ell\leq K\}.$ 
When $\epsilon_0>0$ and the augmentation points of each cluster (vanilla sample )  are not the same as the cluster we have the bimodal Jacobian structure of Assumption \ref{lrank2} where over $\Scn$ the spectral norm is small but nonzero.
\begin{defn}[Support subspace] \label{supp space}Let $\{\xmi{i}\}_{i=1}^n$ be an input dataset generated according to Definition \ref{cdata}. Also let $\{\xmti{i}\}_{i=1}^n$ be the associated vanilla samples, that is, $\xmti{i}=\cmi{\ell}$ iff $\xmi{i}$ is from the $\ell$th vanilla sample. We define the support subspace $\Scp$ as a subspace of dimension $K$, dictated by the cluster center membership as follows. Let $\Lambda_{\ell}\subset\{1,\dots,n\}$ be the set of coordinates $i$ such that $\xmti{i}=\cmi{\ell}$. Then, $\Scp$ is characterized by
	\[
	\Scp=\{\vm\in\Rs{n} \big| \vmi{i_1}=\vmi{i_2}\quad\text{for all}\quad i_1,i_2\in\Lambda_{\ell}\quad\text{and for all}~1\leq \ell\leq K\}.
	\]
\end{defn}

Before we state our general result we need to discuss another assumption and definition.
\begin{assum}[Smoothness] \label{spert}   The Jacobian mapping $\J(\wm)$ associated to a nonlinear mapping $f:\Rs{p}\rightarrow\Rs{n}$ is $L$-smooth if for all $\wmi{1}, \wmi{2}\in\Rs{p}$ we have $\opnorm{\J(\wmi{2})-\J(\wmi{1})}\le L \twonorm{\wmi{2}-\wmi{1}}$.
\end{assum}
In Theorem~\ref{mainthmrobust}, we verify this assumption.
Note that, if $\frac{\partial \J(\wm)}{\partial \wm}$ is continuous, the smoothness condition holds over any compact domain (albeit for a possibly large $L$.

Additionally, to connect our results to the number of corrupted labels, we introduce the notion of subspace diffusedness defined below.
\begin{defn}[Diffusedness] \label{diff scp} $\Scp$ is $\zeta$ diffused if for any vector $\vm\in\Scp$
	\begin{align*}
		\tin{\vm}\leq\sqrt{\zeta/n}\norm{\vm},
	\end{align*}
	holds for some $\zeta>0$.
\end{defn}
We begin by defining the average Jacobian which will be used throughout our analysis.
\begin{defn} [Average Jacobian] \label{avg jacob}We define the average Jacobian along the path connecting two points $\xm,\ym\in\Rs{p}$ as
	\begin{align}
		&\J(\ym,\xm):=\int_0^1 \J(\xm+\alpha(\ym-\xm))d\alpha.\nn
	\end{align}
\end{defn}

\begin{defn}[Neural Net Jacobian] \label{nnj def}Given input samples $(\xmi{i})_{i=1}^n$, form the input matrix $\Xm=[\xmi{1}~\dots~\xmi{n}]^T\in\R^{n\times d}$. The Jacobian of our learning problem, i.e. $
	\xm \mapsto f(\Wm,\xm)=\vm^T\phi(\Wm\xm)$  and $\Lt{t}(\Wm)=\frac{1}{2}\sum_{i=1}^n (\ymi{ti}-f(\Wm,\xmi{i}))^2$, at a matrix $\Wm$ is denoted by $\J(\Wm,\Xm)\in\R^{n\times kd}$ and is given by
	\[
	\J(\Wm,\Xm)^T=(\text{diag}(\vm)\phi'(\Wm\Xm^T))*\Xm^T.
	\]
	Here $*$ denotes the Khatri-Rao product.
\end{defn}

\subsubsection{Auxiliary Lemmas}
\begin{lem} [Linearization of the residual]\label{lin res} For the general problem~\eqref{generalproblems} in Appendix~\ref{generalmodel}, we define
	\[
	\Gm(\wmi{t})=\J(\wmi{t+1},\wmi{t})\J(\wmi{t})^T.
	\]
	where $\J(\wmi{t})$ denotes the Jacobian matrix defined in Eqn.~\eqref{jacob eq}, and $\J(\wmi{t+1},\wmi{t})=\int_0^1 \J(\wmi{t}+\alpha(\wmi{t+1}-\wmi{t}))d\alpha$ denotes the average Jacobian matrix defined in Definition~\eqref{avg jacob}.
	When using the  gradient descent iterate $\wmi{t+1}=\wmi{t}-\eta\gradt{t}{\wmi{t}}$,
	then residuals
	$$\rmi{t+1}=f(\wmi{t+1})-\ymbii{t+1}{}, \quad \rmi{t}=f(\wmi{t})-\ymbii{t}{}$$ obey the following equation
	\[
	\rmi{t+1}=(\Imm-\eta\Gm(\wmi{t}))\rmi{t} + \ymbii{t}{}-\ymbii{t+1}{}.
	\]
\end{lem}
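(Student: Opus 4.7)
The plan is to prove this by a direct application of the fundamental theorem of calculus together with the explicit form of the gradient under the least-squares loss. The key observation is that the average Jacobian $\J(\wmi{t+1},\wmi{t})$ is defined precisely so as to exactly linearize the map $\wm \mapsto f(\wm)$ between two iterates, and the Gauss--Newton-like matrix $\Gm(\wmi{t})$ arises naturally when we substitute the gradient-descent update into that linearization.

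First I would write $\rmi{t+1} = f(\wmi{t+1}) - \ymbii{t+1}{}$ and add and subtract $\ymbii{t}{}$ together with $f(\wmi{t})$ to split it as
\[
\rmi{t+1} = \big(f(\wmi{t+1}) - f(\wmi{t})\big) + \big(f(\wmi{t}) - \ymbii{t}{}\big) + \big(\ymbii{t}{} - \ymbii{t+1}{}\big)
         = \big(f(\wmi{t+1}) - f(\wmi{t})\big) + \rmi{t} + \big(\ymbii{t}{} - \ymbii{t+1}{}\big).
\]
The first term is handled by the fundamental theorem of calculus applied coordinatewise to the mapping $\alpha \mapsto f(\wmi{t} + \alpha(\wmi{t+1}-\wmi{t}))$, which yields
\[
f(\wmi{t+1}) - f(\wmi{t}) = \Big(\int_0^1 \J(\wmi{t}+\alpha(\wmi{t+1}-\wmi{t}))\,d\alpha\Big)(\wmi{t+1}-\wmi{t}) = \J(\wmi{t+1},\wmi{t})(\wmi{t+1}-\wmi{t}),
\]
using the Definition \ref{avg jacob} of the average Jacobian.

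Next I would substitute the gradient-descent recursion $\wmi{t+1} - \wmi{t} = -\eta\,\gradt{t}{\wmi{t}}$. Since $\Lt{t}(\wm) = \tfrac{1}{2}\twonorm{f(\wm)-\ymbii{t}{}}^2$, the chain rule gives $\gradt{t}{\wm} = \J(\wm)^T(f(\wm)-\ymbii{t}{})$, so at $\wm=\wmi{t}$ we get $\gradt{t}{\wmi{t}} = \J(\wmi{t})^T \rmi{t}$. Plugging this in yields
\[
f(\wmi{t+1}) - f(\wmi{t}) = -\eta\,\J(\wmi{t+1},\wmi{t})\J(\wmi{t})^T \rmi{t} = -\eta\,\Gm(\wmi{t})\rmi{t},
\]
by the definition of $\Gm(\wmi{t})$. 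Substituting back into the decomposition of $\rmi{t+1}$ gives $\rmi{t+1} = -\eta\,\Gm(\wmi{t})\rmi{t} + \rmi{t} + \ymbii{t}{} - \ymbii{t+1}{} = (\Imm - \eta\,\Gm(\wmi{t}))\rmi{t} + \ymbii{t}{} - \ymbii{t+1}{}$, which is the claim.

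There is no real obstacle here: the statement is essentially a bookkeeping identity, and the only care needed is (i) to use the average-Jacobian form so that the linearization is exact rather than first-order approximate, and (ii) to track the extra $\ymbii{t}{} - \ymbii{t+1}{}$ term that appears because, unlike in standard analyses, the targets themselves are changing across iterations as the self-labeling refinery updates them. This residual target-drift term is precisely what the remainder of the convergence analysis will need to control via Assumption \ref{assumption2}.
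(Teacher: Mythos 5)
Your proposal is correct and follows essentially the same route as the paper: decompose $\rmi{t+1}$ into $\rmi{t}$ plus the increment $f(\wmi{t+1})-f(\wmi{t})$ plus the target-drift term $\ymbii{t}{}-\ymbii{t+1}{}$, replace the increment exactly by $\J(\wmi{t+1},\wmi{t})(\wmi{t+1}-\wmi{t})$ via the average Jacobian, and substitute $\gradt{t}{\wmi{t}}=\J(\wmi{t})^T\rmi{t}$. No gaps.
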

\begin{proof} Here we follow~\cite{li2020gradient} to prove our result. 	Following Definition \ref{avg jacob}, denoting $\rmi{t+1}=f(\wmi{t+1})-\ymbii{t+1}{}$ and $\rmi{t}=f(\wmi{t})-\ymbii{t}{}$ , we find that
	\begin{align}
		\rmi{t+1}=&\rmi{t}-f(\wmi{t})+f(\wmi{t+1}) + \ymbii{t}{} -\ymbii{t+1}{}\nn\\
		\lee{172} &\rmi{t}+\J(\wmi{t+1},\wmi{t})(\wmi{t+1}-\wmi{t}) + \ymbii{t}{} -\ymbii{t+1}{}\nn\\
		\lee{173}&\rmi{t} -\eta\J(\wmi{t+1},\wmi{t})\J(\wmi{t})^T\rmi{t} +\ymbii{t}{} -\ymbii{t+1}{}\nn\\
		=&~(\Imm-\eta\Gm(\wmi{t}))\rmi{t} + \ymbii{t}{} -\ymbii{t+1}{}. \nn
	\end{align}
	where \ding{172} uses the fact that Jacobian is the derivative of $f$ and \ding{173} uses the fact that $\gradt{t}{\wm}=\J(\wm)^T\rmi{t}$.
\end{proof}
Using Assumption \ref{diff scp}, one can show that sparse vectors have small projection on $\Scp$.
\begin{lem}~\cite{li2020gradient}\label{lem sp proj} Suppose Assumption \ref{diff scp} holds. If $\rmm\in\R^n$ is a vector with $s$ nonzero entries, we have that
	\begin{align}
		\normin{\Pro_{\Scp}(\rmm)}\leq  \frac{\zeta\sqrt{s}}{n}\norm{\rmm}, \nn
	\end{align}
	where $\Pro_{\Scp}(\rmm)$ projects $\rmm$ onto the space $\Scp$.
\end{lem}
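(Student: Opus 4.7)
My plan is to apply the diffusedness condition from Assumption~\ref{diff scp} twice. Let $P$ denote the orthogonal projection operator onto $\Scp$, so $\Pro_{\Scp}(\rmm) = P\rmm$. Since $P\rmm \in \Scp$, diffusedness immediately yields the first inequality
\[
\normin{P\rmm} \leq \sqrt{\zeta/n}\, \norm{P\rmm}.
\]
The whole content of the lemma therefore comes from showing the sharper inequality $\norm{P\rmm} \leq \sqrt{s\zeta/n}\,\norm{\rmm}$, after which the two estimates combine to give exactly $\frac{\zeta\sqrt{s}}{n}\norm{\rmm}$.

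To establish this intermediate bound, I would write the $\ell_2$-norm of the projection in its variational form
\[
\norm{P\rmm} \;=\; \sup_{\vm\in\Scp,\ \norm{\vm}=1} \langle \vm, \rmm\rangle,
\]
and then exploit the $s$-sparsity of $\rmm$: letting $S \subset \{1,\dots,n\}$ be the support of $\rmm$ with $|S|=s$, we have $\langle \vm, \rmm\rangle = \sum_{j\in S} \vm_j \rmm_j$. Cauchy--Schwarz on the index set $S$ gives
\[
|\langle \vm, \rmm\rangle| \;\leq\; \Bigl(\sum_{j\in S} \vm_j^2\Bigr)^{1/2}\norm{\rmm} \;\leq\; \sqrt{s}\,\normin{\vm}\,\norm{\rmm}.
\]
Applying the diffusedness of $\Scp$ to the unit vector $\vm$ gives $\normin{\vm}\leq \sqrt{\zeta/n}$, and taking the supremum over $\vm$ yields $\norm{P\rmm} \leq \sqrt{s\zeta/n}\,\norm{\rmm}$, as required. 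Substituting this back into the first estimate produces $\normin{P\rmm} \leq \frac{\zeta\sqrt{s}}{n}\norm{\rmm}$.

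There is no real obstacle here: the argument is a two-line chain that uses diffusedness once to pass from $\ell_2$ to $\ell_\infty$ on $\Scp$, and once more (via the support of $\rmm$) to tighten the $\ell_2$ projection bound by a factor of $\sqrt{s\zeta/n}$ instead of the trivial $1$. The only point that requires a moment's care is remembering that the second use of diffusedness applies to the dual vector $\vm\in\Scp$, not to $\rmm$ itself (which need not lie in $\Scp$), so one must work on the variational side rather than trying to bound $P_{ij}$ entrywise.
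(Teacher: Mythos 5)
Your proof is correct: both invocations of diffusedness are legitimate (first to $P\rmm\in\Scp$, then to the unit dual vector $\vm\in\Scp$ in the variational formula for $\norm{P\rmm}$), and the Cauchy--Schwarz step restricted to the support of $\rmm$ is exactly what converts $s$-sparsity into the factor $\sqrt{s}$, so the constants multiply to $\sqrt{\zeta/n}\cdot\sqrt{s\zeta/n}=\zeta\sqrt{s}/n$ as claimed. The paper states this lemma only as an imported result from the cited reference and supplies no proof of its own, so there is nothing internal to compare against; your variational argument is the standard derivation (an equivalent entrywise route bounds each row $p_i = P e_i$ of the projection matrix, using $\norm{p_i}\le\sqrt{\zeta/n}$ and hence $\normin{p_i}\le\zeta/n$, and reaches the same bound, so your closing caveat that one \emph{must} avoid the entrywise approach is slightly overstated but harmless).
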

\begin{lem}\label{next inside} For the general problem~\eqref{generalproblems} in Appendix~\ref{generalmodel}, let $\rmi{t}=f(\wmi{t})-\ymbii{t}{} $ and $\rmbi{t}=\Pro_{\Scp}(\rmi{t})$. Suppose  Assumption \ref{lrank} holds  and $\eta\le \frac{1}{\bp^2}$. If $\norm{\wmi{t}-\wmi{0}}+\frac{\norm{\rmbi{t}}}{\bn}\leq \frac{4(1+\psi)\norm{\rmi{0}}}{\bn}$, then $$\wmi{t+1}\in\D=\big\{\wm\in\Rs{p} \ \big| \ \norm{\wm - \wmi{0}} \leq  \frac{4(1+\psi)\norm{\rmi{0}}}{\bn} \big\}.$$
\end{lem}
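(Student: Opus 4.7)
\textbf{Proof proposal for Lemma~\ref{next inside}.}  The plan is to control the step-size of the gradient descent update directly using the triangle inequality, and then use the bimodal (here, low-rank) structure of the Jacobian to bound the gradient by $\norm{\rmbi{t}}/\bn$ rather than by $\norm{\rmi{t}}$, which is precisely why the hypothesis involves $\norm{\rmbi{t}}$ and not the full residual norm.

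First, I would write the gradient descent iterate in the form $\wmi{t+1}-\wmi{0} = (\wmi{t}-\wmi{0}) - \eta\,\J(\wmi{t})^T \rmi{t}$ (using $\nabla\Lt{t}(\wm)=\J(\wm)^T\rmi{t}$ from Eq.~\eqref{GD}), and then apply the triangle inequality
\begin{equation*}
    \norm{\wmi{t+1}-\wmi{0}} \le \norm{\wmi{t}-\wmi{0}} + \eta\,\norm{\J(\wmi{t})^T \rmi{t}}.
\end{equation*}
Thus the whole task reduces to bounding $\eta\norm{\J(\wmi{t})^T \rmi{t}}$ by $\norm{\rmbi{t}}/\bn$.

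To do this, I would decompose the residual into its projections on the signal and noise subspaces, $\rmi{t} = \Pro_{\Scp}(\rmi{t}) + \Pro_{\Scn}(\rmi{t}) = \rmbi{t} + \rmi{t}^{\perp}$. Under Assumption~\ref{lrank}, $\J(\wmi{t})^T$ annihilates every vector in $\Scn$, so $\J(\wmi{t})^T \rmi{t} = \J(\wmi{t})^T \rmbi{t}$, and moreover $\norm{\J(\wmi{t})^T \vm} \le \bp\norm{\vm}$ for all $\vm\in\Scp$. Hence $\norm{\J(\wmi{t})^T\rmi{t}} \le \bp\norm{\rmbi{t}}$. Combining with the step size condition $\eta \le 1/\bp^2$ and the ordering $\bp \ge \bn$ gives
\begin{equation*}
    \eta\,\norm{\J(\wmi{t})^T\rmi{t}} \le \tfrac{1}{\bp^2}\cdot \bp\,\norm{\rmbi{t}} = \tfrac{\norm{\rmbi{t}}}{\bp} \le \tfrac{\norm{\rmbi{t}}}{\bn}.
\end{equation*}

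Plugging this bound back into the triangle inequality and then invoking the hypothesis $\norm{\wmi{t}-\wmi{0}} + \norm{\rmbi{t}}/\bn \le 4(1+\psi)\norm{\rmi{0}}/\bn$ yields $\norm{\wmi{t+1}-\wmi{0}}\le 4(1+\psi)\norm{\rmi{0}}/\bn$, i.e.\ $\wmi{t+1}\in\D$, as desired. There is no real obstacle here; the only subtlety is to notice that the exact low-rank structure in Assumption~\ref{lrank} (as opposed to the approximate bimodal version in Assumption~\ref{lrank2}) lets us pass from $\J(\wmi{t})^T\rmi{t}$ to $\J(\wmi{t})^T\rmbi{t}$ with no residual error term, which is why the conclusion depends only on $\norm{\rmbi{t}}$ and not on $\norm{\rmi{t}^{\perp}}$. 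If one later needed the analogous statement under Assumption~\ref{lrank2}, the same argument would go through but would pick up an additional $\eta\be\norm{\rmi{t}^{\perp}}$ term that must be absorbed separately.
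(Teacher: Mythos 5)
Your proof is correct and follows essentially the same route as the paper's: bound $\norm{\wmi{t+1}-\wmi{t}}=\eta\norm{\J^T(\wmi{t})\rmi{t}}$ by $\norm{\rmbi{t}}/\bn$ using the exact low-rank structure of Assumption~\ref{lrank} (so that $\J^T(\wmi{t})\rmi{t}=\J^T(\wmi{t})\rmbi{t}$), the spectral bound $\bp$, the step-size condition $\eta\le 1/\bp^2$, and $\bn\le\bp$, then conclude by the triangle inequality and the hypothesis. No gaps.
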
 
\begin{proof}
	Since range space of Jacobian is in $\Scp$ and $\eta\leq 1/\beta^2$, we can easily   obtain
	\begin{equation*}
		\label{lem85temp2}
		\begin{split}
			\norm{\wmi{t+1}-\wmi{t}}&=\eta \norm{\J^T(\wmi{t})\left(f(\wmi{t})-\ymbii{t}{}\right)}\\
			&\lee{172} \eta \norm{\J^T(\wmi{t})\left(\Pro_{\Scp}(f(\wmi{t})-\ymbii{t}{})\right)}\\
			&\lee{173}\eta \norm{\J^T(\wmi{t})\rmbi{t}}\\
			&\led{174} \eta \bp\norm{\rmbi{t}}\\
			&\led{175} \frac{\norm{\rmbi{t}}}{\bp}\\
			&\led{176} \frac{\norm{\rmbi{t}}}{\bn}
		\end{split}
	\end{equation*}
	In the above, \ding{172} follows from the fact that row range space of Jacobian is subset of $\Scp$ via Assumption \ref{lrank}. \ding{173} follows from the definition of $\rmbi{t}=\Pro_{\Scp}(f(\wmi{t})-\ymbii{t}{})$. \ding{174} follows from the upper bound on the spectral norm of the Jacobian over $\mathcal{D}$ per Assumption \ref{lrank}, \ding{175} from the fact that $\eta\le \frac{1}{\bp^2}$, \ding{176} from $\bn\le\bp$. The latter combined with the triangular inequality and the assumption
	\begin{align*}
		\norm{\wmi{t+1}-\wmi{0}}\leq \norm{\wmi{t+1}-\wmi{t}}+\norm{\wmi{0}-\wmi{t}}\le \norm{\wmi{t}-\wmi{0}}+\frac{\norm{\rmbi{t}}}{\bn}\leq \frac{4(1+\psi)\norm{\rmi{0}}}{\bn},
	\end{align*}
	concluding the proof of $\rmi{t+1}\in \D$.
\end{proof}

\begin{lem}\label{propertydefinitematrix}~\cite{li2020gradient} Let $\Pro_{\Scp}\in\R^{n\times n}$ be the projection matrix to $\Scp$ i.e.~it is a positive semi-definite matrix whose eigenvectors over $\Scp$ is $1$ and its complement is $0$. Let $\rmi{t}=f(\wmi{t})-\ymi{t}$, $\rmbi{t}=\Pro_{\Scp}(\rmi{t})$, and $\Gm(\wmi{t})=\J(\wmi{t+1},\wmi{t})\J(\wmi{t})^T$. Suppose Assumptions~\ref{lrank} and~\ref{spert} hold,  the learning rate $\eta$ satisfies $\eta \leq \frac{\alpha}{L\beta \norm{\rmi{0}}}$, $\norm{\rmbi{t}} \leq \norm{\rmbi{0}}$, then it holds
	\begin{align}
		{\bp^2} \Pro_{\Scp}\succeq \Gm(\wmi{t})\succeq \frac{1}{2}\J(\wmi{t})\J(\wmi{t})^T\succeq \frac{\bn^2}{2} \Pro_{\Scp}. \nn
	\end{align}
\end{lem}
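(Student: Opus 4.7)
The proof hinges on viewing $\Gm(\wmi{t})$ as a small perturbation of $\J(\wmi{t})\J(\wmi{t})^T$, with the perturbation controlled by smoothness of the Jacobian and the gradient-descent step size. My plan is as follows.

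First, I would dispatch the rightmost inequality $\frac{1}{2}\J(\wmi{t})\J(\wmi{t})^T \succeq \frac{\bn^2}{2}\Pro_{\Scp}$ directly from the low-rank Jacobian Assumption~\ref{lrank}. Since $\twonorm{\J^T(\wm)\vm'}=0$ for every $\vm'\in\Scn$, the range of $\J(\wm)$ lies in $\Scp$, so $\J(\wm)\J(\wm)^T$ has range in $\Scp$ and, restricted to $\Scp$, its eigenvalues lie in $[\bn^2,\bp^2]$. This yields both $\bp^2\Pro_{\Scp}\succeq \J\J^T \succeq \bn^2\Pro_{\Scp}$ at once; the second of these will reappear in the final sandwich, and the first is used in step three to obtain the outer upper bound.

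Second, I would write $\J(\wmi{t+1},\wmi{t})=\J(\wmi{t})+E$, where $E:=\int_0^1\bigl[\J(\wmi{t}+\theta(\wmi{t+1}-\wmi{t}))-\J(\wmi{t})\bigr]d\theta$. The smoothness Assumption~\ref{spert} yields $\opnorm{E}\leq \tfrac{L}{2}\twonorm{\wmi{t+1}-\wmi{t}}$. The GD update gives $\wmi{t+1}-\wmi{t}=-\eta\J(\wmi{t})^T\rmi{t}$, and because $\J(\wmi{t})^T$ annihilates $\Scn$ we have $\twonorm{\wmi{t+1}-\wmi{t}}=\eta\twonorm{\J(\wmi{t})^T\rmbi{t}}\leq \eta\bp\twonorm{\rmbi{t}}\leq \eta\bp\twonorm{\rmbi{0}}\leq \eta\bp\twonorm{\rmi{0}}$. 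Substituting the hypothesis $\eta\leq \alpha/(L\bp\twonorm{\rmi{0}})$ then collapses this to $\opnorm{E}\leq \bn/2$.

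Third, I would decompose $\Gm(\wmi{t})=\J(\wmi{t})\J(\wmi{t})^T+E\,\J(\wmi{t})^T$ and analyze the quadratic form. For any $\vm\in\R^n$, set $\um=\J(\wmi{t})^T\vm=\J(\wmi{t})^T\Pro_{\Scp}\vm$ (using $\Scn\subset\ker\J^T$). Then $\vm^T\Gm(\wmi{t})\vm=\twonorm{\um}^2+\vm^T E\um$. Since the range of $E$ also lies in $\Scp$, one has $\lvert\vm^T E\um\rvert\leq \twonorm{\Pro_{\Scp}\vm}\opnorm{E}\twonorm{\um}$. The critical move is to trade $\twonorm{\Pro_{\Scp}\vm}$ for $\twonorm{\um}$ via the Jacobian lower bound $\twonorm{\um}\geq \bn\twonorm{\Pro_{\Scp}\vm}$, giving $\lvert\vm^T E\um\rvert\leq (\bn/2)(1/\bn)\twonorm{\um}^2=\tfrac{1}{2}\twonorm{\um}^2$. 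Hence $\vm^T\Gm(\wmi{t})\vm\geq \tfrac{1}{2}\twonorm{\um}^2=\tfrac{1}{2}\vm^T\J\J^T\vm$, which is the middle inequality. For the outer upper bound, $\vm^T\Gm(\wmi{t})\vm\leq \tfrac{3}{2}\twonorm{\um}^2\leq \tfrac{3}{2}\bp^2\twonorm{\Pro_{\Scp}\vm}^2$, matching the stated $\bp^2\Pro_{\Scp}\succeq \Gm$ up to an absolute constant (which one may absorb into the $\eta$ bound or interpret as a looseness in the constant).

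The main obstacle is precisely the swap $\twonorm{\Pro_{\Scp}\vm}\leq \twonorm{\um}/\bn$ in step three: without it, the cross term $\lvert\vm^T E\um\rvert$ would scale like $(\bn/2)\bp\twonorm{\Pro_{\Scp}\vm}^2$, which is not comparable to $\bn^2\twonorm{\Pro_{\Scp}\vm}^2$ in the poorly conditioned regime $\bp\gg\bn$, and one would fail to obtain the clean factor $1/2$ in the lower sandwich. All remaining ingredients — the smoothness expansion of $E$, the projection structure from Assumption~\ref{lrank}, and the monotonicity $\twonorm{\rmbi{t}}\leq \twonorm{\rmbi{0}}$ — are then combined in a straightforward manner.
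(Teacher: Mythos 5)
The paper itself offers no proof of this lemma; it is imported verbatim from \cite{li2020gradient}. Judged on its own merits, your argument is essentially the standard one and is correct in its two nontrivial parts: the perturbation decomposition $\J(\wmi{t+1},\wmi{t})=\J(\wmi{t})+E$ with $\opnorm{E}\le \tfrac{L}{2}\twonorm{\wmi{t+1}-\wmi{t}}\le \tfrac{L}{2}\eta\bp\twonorm{\rmbi{0}}\le \bn/2$ is right, and the key swap $\twonorm{\Pro_{\Scp}\vm}\le \twonorm{\um}/\bn$ is exactly what makes the cross term absorbable into $\tfrac12\twonorm{\um}^2$ and yields $\Gm(\wmi{t})\succeq\tfrac12\J(\wmi{t})\J(\wmi{t})^T$. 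The rightmost inequality from Assumption~\ref{lrank} is also fine. One housekeeping point you should make explicit: the spectral bounds of Assumption~\ref{lrank} are only guaranteed on $\D$, so you need the whole segment $\wmi{t}+\theta(\wmi{t+1}-\wmi{t})$, $\theta\in[0,1]$, to lie in $\D$; this holds because $\D$ is a Euclidean ball and the surrounding induction (via Lemma~\ref{next inside}) places both endpoints in $\D$.

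The one genuine shortfall is the leftmost inequality. Your route through $E$ gives $\vm^T\Gm(\wmi{t})\vm\le\tfrac32\twonorm{\um}^2\le\tfrac32\bp^2\twonorm{\Pro_{\Scp}\vm}^2$, i.e.\ $\tfrac32\bp^2\Pro_{\Scp}\succeq\Gm(\wmi{t})$, which is not the stated $\bp^2\Pro_{\Scp}\succeq\Gm(\wmi{t})$; waving at "absorbing the constant" does not prove the lemma as written, and the constant $\bp^2$ is the one used downstream (e.g.\ to conclude $\eta\Gm\preceq\Pro_{\Scp}$ from $\eta\le 1/\bp^2$). The fix is to abandon the perturbation decomposition for this direction and bound the quadratic form directly from the integral representation: $\vm^T\Gm(\wmi{t})\vm=\int_0^1\langle \J(\wm_\theta)^T\vm,\ \J(\wmi{t})^T\vm\rangle\,d\theta\le\int_0^1\twonorm{\J(\wm_\theta)^T\Pro_{\Scp}\vm}\,\twonorm{\J(\wmi{t})^T\Pro_{\Scp}\vm}\,d\theta\le\bp^2\twonorm{\Pro_{\Scp}\vm}^2$, using only the upper spectral bound of Assumption~\ref{lrank} at each $\wm_\theta\in\D$. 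With that substitution your proof is complete.
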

In the above context, we focus on introducing theoretical results for the general problem~\eqref{generalproblems} in Appendix~\ref{generalmodel}. Now we introduce lemmas and theories for our network learning problem, i.e. $
\xm \mapsto f(\Wm,\xm)=\vm^T\phi(\Wm\xm)$  and $\Lt{t}(\Wm)=\frac{1}{2}\sum_{i=1}^n (\ymbii{t}{i}-f(\Wm,\xmi{i}))^2$ used in our manuscript. Specifically,  we introduce some theoretical results in \cite{anon2019overparam} and characterizes three key properties of the neural network Jacobian. These are smoothness, spectral norm, and minimum singular value at initialization which correspond to Lemmas 6.6, 6.7, and 6.8 in that paper.
\begin{thm}[Jacobian Properties at Cluster Center]\cite{anon2019overparam}\label{JLlem} Suppose $\Xm=[\xmi{1}~\dots~\xmi{n}]^T\in\R^{n\times d}$ be an input dataset satisfying $\lambda(\Xm)>0$, where $\lambda(\Xm)$ denotes the smallest eigenvalue of matrix $\Xm$. Suppose $|\phi'|,|\phi''|\leq \Gamma$ where $\phi'$ and $\phi''$ respectively denotes the first and second order derivatives. The Jacobian mapping with respect to the input-to-hidden weights obey the following properties. Let $\J(\Wm,\Xm)$ denote the neural net Jacobian defined in Definition~\ref{nnj def}.
	
	\begin{itemize}
		\item[(1)] Smoothness is bounded by
		\begin{align*}
			\opnorm{\J(\widetilde{\Wm},\Xm)-\J(\Wm,\Xm)}\le \frac{\Gamma}{\sqrt{k}}\opnorm{\Xm}\fronorm{\Wmt-\Wm}\quad\text{for all}\quad \Wmt,\Wm\in\R^{k\times d}.
		\end{align*}
		\item[(2)] Top singular value is bounded by
		\begin{align*}
			\opnorm{\J(\Wm,\Xm)}\le \Gamma\opnorm{\Xm}.
		\end{align*}
		\item[(3)] Let $C>0$ be an absolute constant. As long as
		\begin{align*}
			k\ge \frac{C\Gamma^2{\log n\opnorm{\Xm}^2}}{\lambda(\Xm)}
		\end{align*}
		At random Gaussian initialization $\Wmi{0}\sim\N(0,1)^{k\times d}$, with probability at least $1-1/K^{100}$, we have
		\begin{align*}
			\sigma_{\min}\left(\J(\Wmi{0},\Xm)\right)\ge  \sqrt{\lambda(\Xm)/2}.
		\end{align*}
	\end{itemize}
\end{thm}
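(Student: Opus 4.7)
The plan is to analyze the Jacobian $\mathbf{J}(\mathbf{W},\mathbf{X})$ through its block structure. For the network $f(\mathbf{x},\mathbf{W})=\mathbf{v}^\top\phi(\mathbf{W}\mathbf{x})$, the $j$-th block of the $i$-th row of $\mathbf{J}$ equals $v_j\phi'(\mathbf{w}_j^\top\mathbf{x}_i)\mathbf{x}_i^\top$, and since $\mathbf{v}$ has entries $\pm 1/\sqrt{k}$, the Khatri--Rao weighting $v_j^2=1/k$ is what produces all three bounds. I would prove (1) and (2) by deterministic manipulation of this block structure, and (3) by matrix concentration of $\mathbf{J}(\mathbf{W}_0,\mathbf{X})\mathbf{J}(\mathbf{W}_0,\mathbf{X})^\top$ around its expectation, which turns out to be exactly the covariance $\mathbf{\Sigma}(\mathbf{X})$ from Assumption~\ref{assumption2} whose minimum eigenvalue $\lambda(\mathbf{X})$ is assumed positive.

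For (1), decompose $\mathbf{J}(\mathbf{W},\mathbf{X})=[\mathbf{J}_1\ \cdots\ \mathbf{J}_k]$ with $\mathbf{J}_j=v_j\,\mathrm{diag}(\phi'(\mathbf{X}\mathbf{w}_j))\mathbf{X}\in\mathbb{R}^{n\times d}$, so that $\mathbf{D}_j:=\mathbf{J}_j(\widetilde{\mathbf{W}})-\mathbf{J}_j(\mathbf{W})=v_j[\mathrm{diag}(\phi'(\mathbf{X}\widetilde{\mathbf{w}}_j))-\mathrm{diag}(\phi'(\mathbf{X}\mathbf{w}_j))]\mathbf{X}$. Using $|\phi''|\leq\Gamma$ and $\|\mathbf{x}_i\|\le 1$, one shows $\|\mathbf{D}_j\|_F^2\leq\tfrac{\Gamma^2}{k}\|\mathbf{X}(\widetilde{\mathbf{w}}_j-\mathbf{w}_j)\|_2^2\leq\tfrac{\Gamma^2}{k}\|\mathbf{X}\|^2\|\widetilde{\mathbf{w}}_j-\mathbf{w}_j\|_2^2$; summing $\|\mathbf{J}(\widetilde{\mathbf{W}},\mathbf{X})-\mathbf{J}(\mathbf{W},\mathbf{X})\|_F^2=\sum_j\|\mathbf{D}_j\|_F^2$ and bounding operator by Frobenius norm gives the stated inequality. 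For (2), a direct computation yields $\mathbf{J}\mathbf{J}^\top=(\mathbf{X}\mathbf{X}^\top)\odot\mathbf{B}$ where $\mathbf{B}_{il}=\tfrac{1}{k}\sum_j\phi'(\mathbf{w}_j^\top\mathbf{x}_i)\phi'(\mathbf{w}_j^\top\mathbf{x}_l)$ is positive semidefinite with $\mathbf{B}_{ii}\leq\Gamma^2$; Schur's product inequality then delivers $\|\mathbf{J}\mathbf{J}^\top\|\leq\Gamma^2\|\mathbf{X}\mathbf{X}^\top\|=\Gamma^2\|\mathbf{X}\|^2$.

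Part (3) is the technical core. Write $\mathbf{J}(\mathbf{W}_0,\mathbf{X})\mathbf{J}(\mathbf{W}_0,\mathbf{X})^\top=\sum_{j=1}^k\mathbf{Z}_j$ where the i.i.d.\ PSD summands $\mathbf{Z}_j=\tfrac{1}{k}(\mathbf{X}\mathbf{X}^\top)\odot[\phi'(\mathbf{X}\mathbf{w}_j)\phi'(\mathbf{X}\mathbf{w}_j)^\top]$ satisfy $\|\mathbf{Z}_j\|\leq\tfrac{\Gamma^2\|\mathbf{X}\|^2}{k}$ by the Schur bound applied pointwise, and $\mathbb{E}[\mathbf{Z}_j]=\tfrac{1}{k}\mathbf{\Sigma}(\mathbf{X})$ by the very definition of $\mathbf{\Sigma}$ with $\mathbf{w}_j\sim\mathcal{N}(0,\mathbf{I})$, so $\mathbb{E}[\mathbf{J}\mathbf{J}^\top]=\mathbf{\Sigma}(\mathbf{X})$. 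I would then apply a matrix Chernoff/Bernstein inequality to $\sum_j\mathbf{Z}_j$: bounding the variance proxy $\|\sum_j\mathbb{E}[\mathbf{Z}_j^2]\|\leq\tfrac{\Gamma^2\|\mathbf{X}\|^2}{k}\|\mathbf{\Sigma}(\mathbf{X})\|$ and using the a.s.\ bound $\tfrac{\Gamma^2\|\mathbf{X}\|^2}{k}$, the hypothesis $k\gtrsim\tfrac{\Gamma^2\|\mathbf{X}\|^2\log n}{\lambda(\mathbf{X})}$ forces $\|\mathbf{J}\mathbf{J}^\top-\mathbf{\Sigma}(\mathbf{X})\|\leq\lambda(\mathbf{X})/2$ with probability at least $1-1/K^{100}$. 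Weyl's inequality then gives $\lambda_{\min}(\mathbf{J}\mathbf{J}^\top)\geq\lambda(\mathbf{X})/2$, hence $\sigma_{\min}(\mathbf{J})\geq\sqrt{\lambda(\mathbf{X})/2}$.

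The main obstacle is the sharp dependence on $k$ in (3): a naive scalar Hoeffding union bound over the $n^2$ entries of $\mathbf{J}\mathbf{J}^\top$ would yield the correct $\log n$ factor only with loose constants, while proving the stated clean form requires the intrinsic-dimension version of matrix Bernstein (or equivalently, exploiting that each $\mathbf{Z}_j$ has rank-one Hadamard factor, so the effective dimension is controlled by $n$ rather than $kd$) together with the positivity of $\mathbf{\Sigma}(\mathbf{X})$ to absorb the mean. A secondary subtlety in (1) is that one needs the Frobenius-norm difference $\|\widetilde{\mathbf{W}}-\mathbf{W}\|_F$ rather than a spectral-norm difference, which is essential because the downstream analysis invokes this smoothness bound after Frobenius-norm displacements of the iterates.
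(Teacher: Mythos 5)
Your proposal is correct, and it reconstructs essentially the standard argument from the cited source \cite{anon2019overparam}: the paper itself imports this theorem without proof, and your block decomposition of the Khatri--Rao Jacobian for (1)--(2) together with matrix concentration of $\J\J^\top=\sum_j\Zm_j$ around $\Sigm(\Xm)$ for (3) is exactly how the original reference proves it. The only points worth flagging are cosmetic: part (1) silently uses $\norm{\xmi{i}}\le 1$ (which holds in the paper's normalized data model but is not listed in the theorem's hypotheses), and here $\lambda(\Xm)$ must be read as the minimum eigenvalue of the covariance $\Sigm(\Xm)$ rather than of $\Xm$ itself, as you correctly do.
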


The following theorem states the properties of the Jacobian at a  $(\epsilon_0,\delta)$ clusterable dataset defined in Definition~\ref{clusterdata}.  That is,  $(\xmi{i})_{i=1}^n$ are generated from $(\cmi{i})_{i=1}^K$, and their augmentation distance is at most $\epsilon_0$ and label separation is at least $\delta$.

\begin{thm}[Jacobian Properties at Cluster Center]\cite{li2020gradient}\label{JLcor} Let input samples $(\xmi{i})_{i=1}^n$ be generated according to $(\epsilon_0,\delta)$ clusterable dataset model of Definition \ref{clusterdata}.   Define $\Xm=[\xmi{1}~\dots~\xmi{n}]^T$ and $\Cm=[\cmi{1}~\dots~\cmi{k}]^T$. Let $\Scp$ be the support space and $(\xmti{i})_{i=1}^n$ be the associated clean dataset as described by Definition \ref{supp space}. Set $\Xmt=[\xmti{1}~\dots~\xmti{n}]^T$. Assume $|\phi'|,|\phi''|\leq \Gamma$ and $\lambda(\Cm)>0$. Let $\J(\Wm,\Xm)$ denote the neural net Jacobian defined in Definition~\ref{nnj def}.  The Jacobian mapping at $\Xmt$ with respect to the input-to-hidden weights obey the following properties.
	\begin{itemize}
		\item[(1)] Smoothness is bounded by
		\begin{align*}
			\opnorm{J(\Wmt,\Xmt)-\J(\Wm,\Xmt)}\le \Gamma\sqrt{\frac{c_{up} n}{{kK}}}\opnorm{\Cm}\fronorm{\Wmt-\Wm}\quad\text{for all}\quad \Wmt,\Wm\in\R^{k\times d}.
		\end{align*}
		\item[(2)] Top singular value is bounded by
		\begin{align*}
			\opnorm{\J(\Wm,\Xmt)}\le \sqrt{\frac{c_{up}n}{K}}\Gamma\opnorm{\Cm}.
		\end{align*}
		\item[(3)] As long as
		\begin{align*}
			k\ge \frac{C \Gamma^2{\log K\opnorm{\Cm}}^2}{\lambda(\Cm)}
		\end{align*}
		At random Gaussian initialization $\Wmi{0}\sim\N(0,1)^{k\times d}$, with probability at least $1-1/K^{100}$, we have
		\begin{align*}
			\sigma_{\min}\left(\J(\Wmi{0},\Xmt),\Scp\right)\ge   \sqrt{\frac{c_{low}n\lambda(\Cm)}{2K}}
		\end{align*}
		\item[(4)] The range space obeys $\text{range}(\J(\Wmi{0},\Xmt))\subset\Scp$ where $\Scp$ is given by Definition \ref{supp space}.
	\end{itemize}
\end{thm}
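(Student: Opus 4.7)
The plan rests on one structural observation. Because $\Xmt$ replaces each augmentation $\xmi{i}$ by its cluster center $\cmi{\ell_i}$, it factors as $\Xmt = \Pi \Cm$, where $\Pi \in \R^{n \times K}$ is the cluster-assignment matrix whose $i$-th row is $\emm_{\ell_i}^T$. Since every row of the neural-net Jacobian depends only on its corresponding input row, this lifts to the identity $\J(\Wm, \Xmt) = \Pi \J(\Wm, \Cm)$. The columns of $\Pi$ are the indicator vectors of the clusters $\Lambda_\ell$, so $\Pi^T \Pi = \diag{n_1, \ldots, n_K}$, which yields $\opnorm{\Pi} = \sqrt{\max_\ell n_\ell} \leq \sqrt{c_{up} n / K}$ and $\sigma_{\min}(\Pi) = \sqrt{\min_\ell n_\ell} \geq \sqrt{c_{low} n / K}$ from the cluster-size assumption. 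Moreover, $\range{\Pi}$ equals the support subspace $\Scp$ of Definition~\ref{supp space}, because a vector is constant on every $\Lambda_\ell$ iff it lies in the column span of $\Pi$.

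Granted this reduction, properties (1), (2), and (4) are immediate. For (1), submultiplicativity gives $\opnorm{\J(\Wmt, \Xmt) - \J(\Wm, \Xmt)} = \opnorm{\Pi(\J(\Wmt, \Cm) - \J(\Wm, \Cm))} \leq \opnorm{\Pi} \cdot \frac{\Gamma}{\sqrt{k}} \opnorm{\Cm} \fronorm{\Wmt - \Wm}$ after invoking Theorem~\ref{JLlem}(1) on the $K \times d$ matrix $\Cm$; plugging in the $\opnorm{\Pi}$ bound produces the stated $\Gamma \sqrt{c_{up} n / (kK)}$ prefactor. Property (2) is the same computation with Theorem~\ref{JLlem}(2). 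For (4), I argue $\range{\J(\Wmi{0}, \Xmt)} = \Pi \cdot \range{\J(\Wmi{0}, \Cm)} \subseteq \range{\Pi} = \Scp$.

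The one substantive step is (3). Any unit vector $\vm \in \Scp$ is uniquely of the form $\vm = \Pi \wm$, and the constraint $\norm{\vm} = 1$ becomes $\wm^T \Pi^T \Pi \wm = 1$. Setting $\um = (\Pi^T \Pi)^{1/2} \wm$, so that $\norm{\um} = 1$, I obtain
\begin{align*}
\J(\Wmi{0}, \Xmt)^T \vm = \J(\Wmi{0}, \Cm)^T \Pi^T \Pi \wm = \J(\Wmi{0}, \Cm)^T (\Pi^T \Pi)^{1/2} \um,
\end{align*}
so $\sigma_{\min}(\J(\Wmi{0}, \Xmt), \Scp) \geq \sigma_{\min}(\J(\Wmi{0}, \Cm)) \cdot \sigma_{\min}((\Pi^T \Pi)^{1/2})$. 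Theorem~\ref{JLlem}(3) applied to $\Cm$ (with sample count $K$ rather than $n$, which is precisely why the width hypothesis reads $k \geq C \Gamma^2 \log K \opnorm{\Cm}^2 / \lambda(\Cm)$ and the failure probability is $1/K^{100}$) lower-bounds the first factor by $\sqrt{\lambda(\Cm)/2}$, while the cluster-size assumption lower-bounds the second by $\sqrt{c_{low} n/K}$. Multiplying delivers the claimed $\sqrt{c_{low} n \lambda(\Cm)/(2K)}$. The only real care needed is in the $\vm = \Pi \wm$ parametrization, so that the weighted norm on $\wm$ reduces to an ordinary norm on $\um$; once that is in place, everything collapses to a single application of submultiplicativity of minimum singular values.
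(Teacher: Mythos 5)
Your argument is correct, and it is the canonical reduction: the paper itself does not prove this theorem (it is imported by citation from the reference for Theorem~\ref{JLlem}'s companion results), but your factorization $\Xmt=\Pi\Cm$, hence $\J(\Wm,\Xmt)=\Pi\,\J(\Wm,\Cm)$ with $\Pi^T\Pi=\diagop(n_1,\dots,n_K)$ and $\rangeop(\Pi)=\Scp$, is exactly how all four properties follow from Theorem~\ref{JLlem} applied to the $K\times d$ center matrix, and your handling of the weighted norm in part (3) via $\um=(\Pi^T\Pi)^{1/2}\wm$ is the right way to get the clean product of minimum singular values.
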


\begin{lem}[Upper bound on initial misfit]\cite{li2020gradient}\label{upresz} Consider a one-hidden layer neural network model of the form $\xm\mapsto \vm^T\phi\left(\Wm\xm\right)$ where the activation $\phi$ has bounded derivatives obeying $|\phi(0)|,|\phi'(z)|\le \Gamma$. Suppose entries of $\vm\in\R^k$ are half $1/\sqrt{k}$ and half $-1/\sqrt{k}$ so that $\norm{\vm}=1$. Also assume we have $n$ data points $\xmi{1}, \xmi{2},\ldots,\xmi{n}\in\R^d$ with unit euclidean norm ($\norm{\xmi{i}}=1$) aggregated as rows of a matrix $\Xm\in\R^{n\times d}$ and the corresponding labels given by $\ym\in\R^n$ generated accoring to $(\rho,\eps=0,\delta)$ noisy dataset (Definition \ref{cdata}). Then for $\Wmi{0}\in\R^{k\times d}$ with i.i.d.~$\mathcal{N}(0,1)$ entries
	\begin{align*}
		\norm{\vm^T\phi\left(\Wmi{0}\Xm^T\right)-\ym}\le\order{\Gamma\sqrt{n\log K}},
	\end{align*}
	holds with probability at least $1-K^{-100}$.
\end{lem}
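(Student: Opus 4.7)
\textbf{Proof Proposal for Lemma~\ref{upresz}.}

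My plan is to split the misfit by the triangle inequality as $\norm{\vm^\top\phi(\Wmi{0}\Xm^\top)-\ym}\le \norm{\vm^\top\phi(\Wmi{0}\Xm^\top)}+\norm{\ym}$, handle the two pieces separately, and exploit the fact that $\eps=0$ forces the augmentations to collapse onto the $K$ cluster centers so that the network output takes at most $K$ distinct values. The label term is trivial: by Definition~\ref{cdata} each $\ymsi{i}\in[-1,1]$ and hence so is the corrupted $\ymi{i}$, giving $\norm{\ym}\le\sqrt{n}$, which is already of the claimed order. The entire work therefore goes into controlling the network output at initialization.

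For the network term, let me write $f_i=\vm^\top\phi(\Wmi{0}\xmi{i})=\frac{1}{\sqrt{k}}\sum_{j=1}^k\varepsilon_j\phi(\langle\wmi{j}^{(0)},\xmi{i}\rangle)$ with $\varepsilon_j\in\{\pm 1\}$ balanced. Because $\norm{\xmi{i}}=1$, each inner product $\langle\wmi{j}^{(0)},\xmi{i}\rangle$ is a standard Gaussian, and the Lipschitz-with-constant-$\Gamma$ property of $\phi$ implies $\phi(\langle\wmi{j}^{(0)},\xmi{i}\rangle)$ is sub-Gaussian with proxy $O(\Gamma)$ by the Gaussian concentration of Lipschitz functions. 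The crucial trick is that $\sum_j\varepsilon_j=0$, so I can subtract the common mean $\mu_i=\mathbb{E}[\phi(\langle\wmi{j}^{(0)},\xmi{i}\rangle)]$ and rewrite $f_i=\frac{1}{\sqrt{k}}\sum_j\varepsilon_j\bigl(\phi(\langle\wmi{j}^{(0)},\xmi{i}\rangle)-\mu_i\bigr)$, a sum of $k$ independent, zero-mean, $O(\Gamma/\sqrt{k})$-sub-Gaussian random variables. Hoeffding/Hoeffding-type concentration then gives $|f_i|\le O(\Gamma\sqrt{\log K})$ with probability $1-2/K^{101}$ at a single point.

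Now I invoke the structural assumption $\eps=0$: every $\xmi{i}$ coincides with one of the $K$ cluster centers $\cmi{1},\dots,\cmi{K}$, so the collection $\{f_i\}_{i=1}^n$ contains at most $K$ distinct values. Union-bounding the above concentration inequality over these $K$ centers gives $\max_i|f_i|\le O(\Gamma\sqrt{\log K})$ uniformly with probability at least $1-K^{-100}$. Squaring and summing yields $\norm{\vm^\top\phi(\Wmi{0}\Xm^\top)}^2\le n\cdot O(\Gamma^2\log K)$, i.e.\ $\norm{\vm^\top\phi(\Wmi{0}\Xm^\top)}\le O(\Gamma\sqrt{n\log K})$. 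Combining with $\norm{\ym}\le\sqrt{n}\le\Gamma\sqrt{n\log K}$ (using $\Gamma\ge 1$ and $K\ge 2$) closes the bound.

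The main obstacle I foresee is getting the $\sqrt{\log K}$ rather than $\sqrt{\log n}$ factor: a naive union bound over all $n$ data points would give the weaker rate $\sqrt{n\log n}$. This is avoided precisely by the observation that $\eps=0$ reduces the set of distinct queries to $K$, which is why the lemma is stated in the noise-free (but label-corrupted) clusterable regime. A secondary delicate point is verifying that $\phi(\langle\wmi{j}^{(0)},\xmi{i}\rangle)$ is genuinely sub-Gaussian with the right proxy despite only a one-sided bound $|\phi(0)|\le\Gamma$; here the Lipschitz-Gaussian concentration $\Pr(|\phi(g)-\mathbb{E}\phi(g)|\ge t)\le 2\exp(-t^2/(2\Gamma^2))$ for $g\sim\mathcal{N}(0,1)$ is exactly the tool, and the $|\phi(0)|\le\Gamma$ hypothesis is only needed to keep $\mu_i$ finite, not in the tail calculation.
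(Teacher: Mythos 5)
Your proof is correct, and it is essentially the standard argument: the paper itself states Lemma~\ref{upresz} as an imported result from \cite{li2020gradient} without reproducing a proof, and the route you take (triangle inequality, $\norm{\ym}\le\sqrt{n}$ from labels in $[-1,1]$, cancellation of the common mean via $\sum_j\varepsilon_j=0$, sub-Gaussianity of $\phi(\langle\wmi{j}^{(0)},\xmi{i}\rangle)$ from Gaussian concentration of $\Gamma$-Lipschitz functions, and a union bound over the $K$ distinct cluster centers forced by $\eps=0$ to get $\sqrt{\log K}$ rather than $\sqrt{\log n}$) is exactly how the cited reference obtains it. The only cosmetic slack is that a union bound of $K$ events each failing with probability $2K^{-101}$ yields $1-2K^{-100}$ rather than $1-K^{-100}$, which is absorbed by tightening the constant in the single-point tail bound.
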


Then we introduce a lemma regarding the projection of label noise on the vanilla sample (cluster) induced subspace. Since augmentations are produced by using the vanilla sample $\cmi{i}$ and the augmentation $\xm$ obeys $\norm{\xm -\cmi{i}}\leq \epsilon_0$. So in this sense, we sometimes call the vanilla sample and its augmentations as cluster, and  call the vanilla sample as cluster center.
\begin{lem}\label{label noise lem}~\cite{li2020gradient} Let $\{(\xmi{i},\ymi{i})\}_{i=1}^n$ be an $(\rho,\eps=0,\delta)$ clusterable noisy dataset as described in Definition \ref{cdata}. Let $\{\ymsi{i}\}_{i=1}^n$ be the corresponding ground truth labels. Let $\J(\Wm,\Cm)$ be the Jacobian at the cluster center matrix which is rank $K$ and $\Scp$ be its column space. Then, the difference between noiseless and noisy labels satisfy the bound
	\[
	\normin{\Pro_{\Scp}(\ym-\yms)}\leq 2\rho.
	\]
\end{lem}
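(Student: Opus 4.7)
The plan is to exploit the explicit block structure of the support subspace $\Scp$ from Definition~\ref{supp space}. By construction, $\Scp$ consists of vectors in $\Rs{n}$ that are constant on each cluster block $\Lambda_\ell$ (the indices of augmentations whose vanilla sample is $\cmi{\ell}$). A direct orthogonal decomposition gives that the projection $\Pro_{\Scp}$ acts by \emph{within-cluster averaging}: for any $\rmm\in\Rs{n}$ and any $i\in\Lambda_\ell$,
\[
(\Pro_{\Scp}\rmm)_i \;=\; \frac{1}{n_\ell}\sum\nolimits_{j\in\Lambda_\ell} \rmm_j,
\]
where $n_\ell=|\Lambda_\ell|$. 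To reconcile this with the statement of the lemma, I would note that when $\eps=0$ each augmentation equals its cluster center, so the $n\times(kd)$ Jacobian evaluated at the augmented inputs has exactly $K$ distinct rows, one per cluster, and its column space in $\Rs{n}$ coincides with this block-constant subspace $\Scp$.

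With this characterization in hand, the bound reduces to a simple counting argument using the corruption model. The noise vector $\ym-\yms$ has $j$-th entry equal to $\ymi{j}-\ymsi{j}$, and by Definition~\ref{cdata}, within each cluster $\Lambda_\ell$ at most $\rho n_\ell$ indices have $\ymi{j}\neq\ymsi{j}$. Since all labels lie in $[-1,1]$, every nonzero entry of $\ym-\yms$ has magnitude at most $2$. Therefore, for any $i\in\Lambda_\ell$,
\[
\big|(\Pro_{\Scp}(\ym-\yms))_i\big| \;\leq\; \frac{1}{n_\ell}\sum\nolimits_{j\in\Lambda_\ell}|\ymi{j}-\ymsi{j}| \;\leq\; \frac{1}{n_\ell}\cdot 2\rho n_\ell \;=\; 2\rho,
\]
and taking the maximum over $i$ yields the claimed $\ell_\infty$ bound.

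There is no serious technical obstacle here; the whole argument is essentially bookkeeping once the averaging form of $\Pro_{\Scp}$ is identified. The only subtle point is conceptual: the $\ell_\infty$ norm on the projected vector converts the \emph{per-cluster} corruption fraction $\rho$ into a \emph{uniform} entrywise bound, and the factor of $2$ simply comes from the label range $[-1,1]$. This is exactly the property that Theorem~\ref{mainresults} exploits, since the refinement dynamics only see label noise through its projection onto the signal subspace $\Scp$, and this projection already aggregates corruption to order $\rho$ rather than $\order{1}$.
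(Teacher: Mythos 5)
Your proof is correct. Note that the paper does not actually prove this lemma itself—it imports it verbatim from \cite{li2020gradient}—but your argument is exactly the standard one used there: $\Scp$ is the block-constant subspace of Definition~\ref{supp space}, its orthogonal projection acts by within-cluster averaging, and since Definition~\ref{cdata} caps the corrupted fraction at $\rho$ \emph{per cluster} with labels confined to $[-1,1]$ (so each corrupted entry contributes at most $2$), the averaged noise in every coordinate is at most $2\rho$. The identification of the Jacobian's column space with the block-constant subspace is likewise consistent with Theorem~\ref{JLcor}(4), so there is no gap.
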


\begin{thm}\label{double pert2}~\cite{li2020gradient}
	Assume $\abs{\phi'},\abs{\phi''}\le \Gamma$ and $k \gtrsim d$. Suppose $\Wm_0 \sim \N(0,1)$. Let $\cmi{1},\dots,\cmi{K}$ be cluster centers. Then, with probability at least $1-2e^{-(k+d)}-Ke^{-100d}$ over $\Wmi{0}$, any matrix $\Wm$ satisfying $\fronorm{\Wm-\Wmi{0}}\lesssim\sqrt{k}$ satisfies the following. For all $1\leq i\leq K$,
	\begin{align*}
		\sup_{\norm{\xm-\cmi{i}},\norm{\xmt{} -\cmi{i}}\leq \eps}|f(\Wm,\xm)- f(\Wm,\xmt{})|\leq C\Gamma\eps(\opnorm{\Wm-\Wm_0}+\sqrt{d}).
	\end{align*}
\end{thm}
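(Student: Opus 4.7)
The plan is to reduce the claimed Lipschitz-type estimate to a uniform bound on the input-Jacobian $\bm{g}(\Wm,\xm'):=\Wm^\top[\vm\odot\phi'(\Wm\xm')]$ near the cluster center $\cmi{i}$. Setting $\xm_t:=\xmt{}+t(\xm-\xmt{})$ and integrating along the segment,
\[
f(\Wm,\xm)-f(\Wm,\xmt{})=\int_0^1 \bm{g}(\Wm,\xm_t)^\top(\xm-\xmt{})\,dt,
\]
so Cauchy-Schwarz together with $\norm{\xm-\xmt{}}\leq 2\eps$ reduces the problem to showing $\sup_{\xm'\in B(\cmi{i},\eps)}\norm{\bm{g}(\Wm,\xm')}\leq C\Gamma(\opnorm{\Wm-\Wm_0}+\sqrt{d})$.

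I would then split $\Wm=\Wm_0+\Dm$ with $\Dm:=\Wm-\Wm_0$ in both the leading $\Wm^\top$ and the inner $\Wm$ inside $\phi'$, and expand $\bm{g}(\Wm,\xm')$ into a "clean" piece $\bm{g}(\Wm_0,\xm')$ plus remainder terms each carrying at least one factor of $\Dm$. Using $\tn{\vm}=1$, $\normin{\phi'}\leq\Gamma$, the $\Gamma$-Lipschitzness of $\phi'$, and the standard Gaussian-matrix tail bound $\opnorm{\Wm_0}\leq C\sqrt{k+d}$ (holding with probability $\geq 1-2e^{-(k+d)}$ since $k\gtrsim d$), each remainder term collapses to $\order{\Gamma\opnorm{\Dm}}$: the $\sqrt{k}$ from $\opnorm{\Wm_0}$ cancels against the $1/\sqrt{k}$ introduced by $\tin{\vm}$, which is the first place the balanced $\pm 1/\sqrt{k}$ structure of $\vm$ is used.

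The main work is to control the clean piece $\bm{g}(\Wm_0,\xm')=\Wm_0^\top[\vm\odot\phi'(\Wm_0\xm')]$. At the base point $\xm'=\cmi{i}$, I would decompose each row as $(\Wm_0)_j=u_j\cmi{i}+\bm{r}_j$ with $u_j\sim\mathcal{N}(0,1)$ and $\bm{r}_j$ an independent Gaussian supported on $\cmi{i}^\perp$; each coordinate of $\bm{g}(\Wm_0,\cmi{i})$ then splits into a scalar $(\cmi{i})_l\sum_j v_j u_j\phi'(u_j)$ and a perpendicular piece $\sum_j v_j\phi'(u_j)(\bm{r}_j)_l$. The balanced signs of $\vm$ make the first sum a mean-zero subgaussian of order $\order{\Gamma}$, and conditional on $\{u_j\}$ the perpendicular piece assembles into a Gaussian vector in $\Rs{d}$ with covariance $\preceq \Gamma^2 I_d$, yielding $\norm{\bm{g}(\Wm_0,\cmi{i})}\leq C\Gamma\sqrt{d}$ with probability $\geq 1-e^{-100d}$ per cluster; a union bound over $1\leq i\leq K$ produces the $Ke^{-100d}$ failure term.

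Finally, to lift the bound from $\cmi{i}$ to the full ball $B(\cmi{i},\eps)$, I would apply the coordinatewise identity $\phi'(\Wm_0\xm')-\phi'(\Wm_0\cmi{i})=\phi''(\bm{\zeta})\odot\Wm_0(\xm'-\cmi{i})$ to obtain
\[
\bm{g}(\Wm_0,\xm')-\bm{g}(\Wm_0,\cmi{i})=\bm{M}_{\bm{\zeta}}(\xm'-\cmi{i}),\qquad \bm{M}_{\bm{\zeta}}=\sum_{j=1}^{k} v_j\phi''(\zeta_j)(\Wm_0)_j(\Wm_0)_j^\top.
\]
This is the main obstacle: a crude operator-norm estimate based only on $\tin{\vm}\leq\Gamma/\sqrt{k}$ gives $\opnorm{\bm{M}_{\bm{\zeta}}}=\order{\Gamma\sqrt{k}}$, which would leak a $\sqrt{k}\eps$ term absent from the target bound. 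The cancellation $\sum_j v_j=0$ must be exploited --- viewing $\bm{M}_{\bm{\zeta}}$ as a centered sum of $k$ independent rank-one matrices with bounded scalars $\pm\Gamma/\sqrt{k}$, applying matrix Bernstein, and combining with a standard $\eps$-net over the admissible range of $\bm{\zeta}$ (a ball of radius $\order{\sqrt{k}\eps}$ around $\Wm_0\cmi{i}$) --- to obtain $\opnorm{\bm{M}_{\bm{\zeta}}}\leq C\Gamma\sqrt{d}$ uniformly. Assembling the three contributions on the same probability event yields the advertised estimate.
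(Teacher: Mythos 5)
This theorem is not proved in the paper at all: it is imported verbatim from \cite{li2020gradient} (the citation is attached to the theorem statement and no argument is given), so there is no in-paper proof to compare against; I can only assess your plan on its merits. Your steps 1--3 are sound: the reduction to a uniform bound on the input-gradient $\bm{g}(\Wm,\xm')=\Wm^\top[\vm\odot\phi'(\Wm\xm')]$, the deterministic $\Oc{\Gamma\opnorm{\Wm-\Wmi{0}}}$ control of the perturbation terms on the event $\opnorm{\Wmi{0}}\le C\sqrt{k+d}$, and the $C\Gamma\sqrt{d}$ bound at the center via the decomposition $w_j=u_j\cmi{i}+\rmi{j}$ are all correct, and your probability bookkeeping matches the stated $1-2e^{-(k+d)}-Ke^{-100d}$.

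The gap is in step 4, and the proposed repair does not work. First, a uniform bound $\opnorm{\bm{M}_{\bm{\zeta}}}\le C\Gamma\sqrt{d}$ over the admissible $\bm{\zeta}$ is simply false: since only $|\phi''|\le\Gamma$ is assumed (no control on $\phi'''$ or on the modulus of continuity of $\phi''$), the configuration $\phi''(\zeta_j)=\sgn{v_j}\,\Gamma$ cannot be excluded inside the admissible windows, and it yields $\bm{M}_{\bm{\zeta}}=\frac{\Gamma}{\sqrt{k}}\Wmi{0}^\top\Wmi{0}$ with operator norm $\asymp\Gamma\sqrt{k}$ --- the cancellation $\sum_j v_j=0$ is destroyed by an adversarial sign pattern in $\phi''(\zeta_j)$, and the $\zeta_j$ are data-dependent, so matrix Bernstein cannot be applied with them frozen. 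Second, even for frozen independent $\zeta_j$, the $\eps$-net over $\bm{\zeta}$ lives in a $k$-dimensional set with $e^{\Theta(k)}$ points, and matrix concentration for a $d\times d$ operator norm at confidence $1-e^{-100d}$ already costs an extra $\sqrt{d}$, landing at $\Gamma d$ rather than $\Gamma\sqrt{d}$. The fix is to abandon the pointwise gradient bound over the ball and control the \emph{function increments} directly: for fixed $\xm,\xmt{}\in B(\cmi{i},\eps)$, the clean part $f(\Wmi{0},\xm)-f(\Wmi{0},\xmt{})=\sum_j v_j[\phi(w_j^\top\xm)-\phi(w_j^\top\xmt{})]$ is a mean-zero (by $\sum_j v_j=0$) sum of independent terms, each subgaussian with parameter $\Gamma\|\xm-\xmt{}\|_2/\sqrt{k}$, hence the process has subgaussian increments with parameter $\Gamma\|\xm-\xmt{}\|_2$; Dudley's entropy integral over $B(\cmi{i},\eps)\subset\Rs{d}$ then gives $\sup|f(\Wmi{0},\xm)-f(\Wmi{0},\xmt{})|\le C\Gamma\eps\sqrt{d}$ with probability $1-e^{-100d}$, and the perturbation part is exactly your step 2. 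This is where the balanced structure of $\vm$ must be exploited --- at the level of scalar sums, not matrix sums.
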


\begin{lem} [Perturbed Jacobian Distance]~\cite{li2020gradient} \label{pert dist lem}Let $\Xm=[\xmi{1}~\dots~\xmi{n}]^T$ be the input matrix obtained from Definition \ref{cdata}. Let $\Xmt$ be the noiseless inputs where $\xmti{i}$ is the cluster center corresponding to $\xmi{i}$. Let $\J(\Wm,\Xm)$ denote the neural net Jacobian defined in Definition~\ref{nnj def} and define $\J(\Wmi{1},\Wmi{2},\Xm)=\int_{0}^1\J(\alpha\Wmi{1}+(1-\alpha)\Wmi{2},\Xm)d\alpha$. Given weight matrices $\Wmi{1},\Wmi{2},\Wmti{1},\Wmti{2}$, we have that
	\[
	\|\J(\Wm, \Xm)-\J(\Wmt, \Xmt)\|\leq \Gamma\sqrt{n}\left(\frac{\normf{\Wmt-\Wm}}{\sqrt{k}}+\eps\right).
	\]
	and
	\[
	\|\J(\Wmi{1},\Wmi{2},\Xm)-\J(\Wmti{1},\Wmti{2},\Xmt)\|\leq \Gamma\sqrt{n}\left(\frac{\normf{\Wmti{1}-\Wmi{1}}+\normf{\Wmti{2}-\Wmi{2}}}{2\sqrt{k}}+\eps\right).
	\]
\end{lem}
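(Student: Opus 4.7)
\textbf{Proof proposal for Lemma~\ref{pert dist lem}.}

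The plan is to reduce both displayed inequalities to (i) the existing smoothness bound of Theorem~\ref{JLlem}(1), which already controls the weight-perturbation direction, plus (ii) a direct row-wise argument that handles the input-perturbation direction. Throughout I use $\|\J\|\le\|\J\|_F$ to convert the spectral-norm target into a sum over the $n$ rows of the Khatri--Rao representation in Definition~\ref{nnj def}, namely that the $i$-th row of $\J(\Wm,\Xm)$, viewed as a $k\times d$ matrix, is the outer product $\diagop(\vm)\,\phi'(\Wm\xmi{i})\,\xmi{i}^{\!\top}$.

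For the first inequality I would insert the intermediate point $\J(\Wmt,\Xm)$ and apply the triangle inequality, so that
\[
\|\J(\Wm,\Xm)-\J(\Wmt,\Xmt)\|\le\|\J(\Wm,\Xm)-\J(\Wmt,\Xm)\|+\|\J(\Wmt,\Xm)-\J(\Wmt,\Xmt)\|.
\]
The first term is bounded by $\tfrac{\Gamma}{\sqrt{k}}\|\Xm\|\,\|\Wmt-\Wm\|_F$ via Theorem~\ref{JLlem}(1); since every $\xmi{i}$ is unit norm, $\|\Xm\|\le\|\Xm\|_F\le\sqrt{n}$, delivering the $\tfrac{\Gamma\sqrt{n}}{\sqrt{k}}\|\Wmt-\Wm\|_F$ contribution. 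For the second term I pass to Frobenius norm and analyze each row separately. Writing the row-$i$ difference as $\diagop(\vm)\phi'(\Wmt\xmi{i})(\xmi{i}-\xmti{i})^{\!\top}+\diagop(\vm)\bigl(\phi'(\Wmt\xmi{i})-\phi'(\Wmt\xmti{i})\bigr)\xmti{i}^{\!\top}$ and using $|\vm_j|=1/\sqrt{k}$ together with $|\phi'|\le\Gamma$ yields $\|\diagop(\vm)\phi'(\Wmt\xmi{i})\|_2\le\Gamma$, so the first piece is at most $\Gamma\eps$; the second piece is controlled by the $\Gamma$-Lipschitz property of $\phi'$ (coming from $|\phi''|\le\Gamma$) combined with the cluster-center bound $\|\xmi{i}-\xmti{i}\|\le\eps$ from Definition~\ref{cdata}. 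Squaring and summing across the $n$ rows gives $\|\J(\Wmt,\Xm)-\J(\Wmt,\Xmt)\|_F\le\Gamma\sqrt{n}\,\eps$, completing the first bound.

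For the second inequality I exploit the linearity of the averaging operator:
\[
\J(\Wmi{1},\Wmi{2},\Xm)-\J(\Wmti{1},\Wmti{2},\Xmt)=\int_0^1\!\bigl[\J(\alpha\Wmi{1}+(1-\alpha)\Wmi{2},\Xm)-\J(\alpha\Wmti{1}+(1-\alpha)\Wmti{2},\Xmt)\bigr]\,d\alpha.
\]
Applying the first inequality inside the integral to the pair $(\alpha\Wmi{1}+(1-\alpha)\Wmi{2},\Xm)$ and $(\alpha\Wmti{1}+(1-\alpha)\Wmti{2},\Xmt)$ gives an integrand bounded by $\Gamma\sqrt{n}\bigl(\tfrac{\alpha\|\Wmti{1}-\Wmi{1}\|_F+(1-\alpha)\|\Wmti{2}-\Wmi{2}\|_F}{\sqrt{k}}+\eps\bigr)$; pulling the norm inside the integral via the triangle inequality and computing $\int_0^1\alpha\,d\alpha=\int_0^1(1-\alpha)\,d\alpha=\tfrac{1}{2}$ yields the claimed symmetric factor of $\tfrac{1}{2}$ in front of $\|\Wmti{1}-\Wmi{1}\|_F+\|\Wmti{2}-\Wmi{2}\|_F$.

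The main obstacle is the second, input-perturbation piece of the first bound: one must decompose the outer-product difference and control the cross term involving $\phi'(\Wmt\xmi{i})-\phi'(\Wmt\xmti{i})$ without letting a spurious factor of $\|\Wmt\|$ leak out. The clean way is to keep $\diagop(\vm)$ attached throughout and use $v_j^2=1/k$ so that $\sum_j v_j^2\bigl|\phi'(\wmt_j^{\!\top}\xmi{i})-\phi'(\wmt_j^{\!\top}\xmti{i})\bigr|^2$ either stays $O(\Gamma^2)$ through the boundedness of $\phi'$ or is absorbed against the trivial $\Gamma\eps$ term using $\|\xmi{i}\|=1$; everything else is bookkeeping. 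Once the per-row bound $O(\Gamma\eps)$ is in hand, summing over $i\in\{1,\dots,n\}$ and combining with the smoothness contribution completes the argument.
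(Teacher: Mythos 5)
The paper itself does not prove this lemma; it is imported verbatim from \cite{li2020gradient}, so your proposal can only be judged on its own merits. Your overall architecture (triangle inequality through the intermediate point $\J(\Wmt,\Xm)$, Theorem~\ref{JLlem}(1) for the weight direction, a row-wise Frobenius bound for the input direction, and averaging over $\alpha$ for the second display) is the right one and is essentially how the cited source proceeds. The weight-perturbation term and the $\int_0^1\alpha\,d\alpha=\tfrac12$ computation are fine.

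The gap is exactly at the point you flag as ``the main obstacle,'' and neither of the two resolutions you offer closes it. The cross term in row $i$ is $\diagop(\vm)\bigl(\phi'(\Wmt\xmi{i})-\phi'(\Wmt\xmti{i})\bigr)\xmti{i}^{\!\top}$, and you need its norm to be $O(\Gamma\eps)$. Your first option, bounding $\sum_j v_j^2\,\bigl|\phi'(\wmt_j^{\!\top}\xmi{i})-\phi'(\wmt_j^{\!\top}\xmti{i})\bigr|^2$ by $O(\Gamma^2)$ via $|\phi'|\le\Gamma$, gives a per-row contribution of $O(\Gamma)$ with no factor of $\eps$, hence a total of $O(\Gamma\sqrt{n})$ rather than $O(\Gamma\sqrt{n}\,\eps)$ --- useless for small $\eps$. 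Your second option (``absorbed against the trivial $\Gamma\eps$ term'') is not an argument: the $\Gamma\eps$ term is already fully spent on the piece $\diagop(\vm)\phi'(\Wmt\xmi{i})(\xmi{i}-\xmti{i})^{\!\top}$. The only route that produces a factor of $\eps$ is the Lipschitz bound $|\phi'(\wmt_j^{\!\top}\xmi{i})-\phi'(\wmt_j^{\!\top}\xmti{i})|\le\Gamma\,|\wmt_j^{\!\top}(\xmi{i}-\xmti{i})|$, which after summing against $v_j^2=1/k$ yields $\tfrac{\Gamma}{\sqrt{k}}\|\Wmt(\xmi{i}-\xmti{i})\|\le\tfrac{\Gamma\eps\,\opnorm{\Wmt}}{\sqrt{k}}$ --- the ``spurious factor of $\opnorm{\Wmt}$'' does leak out, and for arbitrary $\Wmt$ (e.g.\ $\Wmt$ a large multiple of a rank-one matrix aligned with $\xmi{i}-\xmti{i}$, with an oscillating $\phi'$) the per-row cross term genuinely saturates at $\Theta(\Gamma)$, so the stated inequality cannot hold for all weight matrices. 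To finish the proof you must add (or extract from the context in which the lemma is invoked) the hypothesis $\opnorm{\Wmt}\lesssim\sqrt{k}$, which holds with high probability for the Gaussian initialization when $k\gtrsim d$ and is preserved along the iterates since $\normf{\Wmti{t}-\Wmi{0}}$ stays bounded; with that in hand the Lipschitz route gives the desired $O(\Gamma\eps)$ per row and the rest of your argument goes through.
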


\subsection{Proof of Theorem~\ref{mainresults}}\label{proofofmain}
The subsection has four parts. In the first part, we introduce the proof roadmap in Appendix~\ref{Proofroadmap}. Then in the second part, we present several auxiliary theories in Appendix~\ref{AuxiliaryTheories}. Next, we prove our Theorem~\ref{mainresults} in Appendix~\ref{proofofmainust22}. Finally, we present all proof details of auxiliary theories in Appendix~\ref{AuxiliaryTheories}.

\subsubsection{Proof roadmap}\label{Proofroadmap}
Before proving Theorem~\ref{mainresults}, we first briefly introduce our main idea. In the \textbf{first step}, we  analyze the general model introduced in Appendix~\ref{generalmodel}. For the solution $\wmi{t}$ at the $t$-th iteration, Theorem~\ref{gradnoise} proves that (1) the distance of $\norm{\wmi{t} - \wmi{0}}$ can be upper bounded; (2) both residual $\norm{\Pro_{\Scp}( f(\wmi{t})-\ymbii{t}{})}$ and $\normin{f(\wmi{t})-\yms}$ can be upper bound.  Result (1) means that the gradient descent algorithm gives solutions in a ball around the initialization $\wmi{0}$, and  helps us verify our assumptions, e.g.~Assumptions~\ref{spert} and~\ref{lrank} and upper bound some variables in our analysis. Results (2) directly bound the label estimation error which plays key role in subsequent analysis.

In the \textbf{second step}, we prove Theorem \ref{mainthmrobust} for the perfectly clustered data ($\epsilon_0=0$) by using Theorem~\ref{gradnoise}. We consider $\epsilon_0\rightarrow 0$ which means that  the input data set is perfectly clean. In this setting,   let $\Xmt=[\xmti{1},\cdots,\xmti{n}]$ be the clean input sample matrix obtained by mapping $\xmi{i}$ to its associated cluster center, i.e. $\xmti{i}=\cmi{\ell}$ if $\xmi{i}$ belongs to the $\ell$-th cluster. In this way, we update network parameter $\Wmti{t}$ as follows:
\begin{align*}
	\Wmti{t+1}=\Wmti{t}-\nabla  \Ltt{t}(\Wmti{t})\quad\text{where}\quad \Ltt{t}(\Wmt)=\frac{1}{2}\sum_{i=1}^n (\ymi{t i}-f(\Wmt,\xmti{i}))^2
\end{align*}
Theorem \ref{mainthmrobust} shows that for neural networks,  our method still can  upper bound the distance $	\|\Wmti{t}-\Wmti{0}\|_F$ and the residuals    $\normin{f(\Wmti{t})-\ymt}$ if the network,   learning rate, the weight $\alphai{i}$  for refining label  satisfy certain conditions.

In the \textbf{third step}, we consider the realistic setting, where we update the parameters on the corrupted data $\Xm=[\xmi{1},\cdots,\xmi{n}]$ as follows:
\begin{align}
	\Wmi{t+1}=\Wmi{t}-\eta\nabla \Lt{t}(\Wmi{t})\quad\text{where}\quad \Lt{t}(\Wm)=\frac{1}{2}\sum_{i=1}^n (\ymi{t i}-f(\Wm,\xmi{i}))^2.
\end{align}
Then to upper bound  $\normin{f(\Wmi{t})-\ymt}$ which measures the error between the predicted label $f(\Wmi{t})$ and the ground truth label $\ymt$, we upper bound $\norm{f(\Wmi{t},\Xm) -f(\Wmti{t},\Xmt)}$ and $\normf{\Wmi{t}-\Wmti{t}}$. These results are formally stated in Theorem~\ref{robust path}.

In the \textbf{fourth step}, we combine the above results together. Specifically,  Theorem \ref{mainthmrobust}   upper bounds  the residuals    $\normin{f(\Wmti{t}, \Xmt)-\ymt}$ and  Theorem~\ref{robust path} upper bounds $\norm{f(\Wmi{t},\Xm) -f(\Wmti{t},\Xmt)}$. So combining these two results and other results in Theorem \ref{mainthmrobust}  \& \ref{robust path}, we can upper bound  $\normin{f(\Wmi{t}, \Xm)-\ymt}$ which is our desired results. At the same time, by using similar method, we can also bound the label estimation error by our self-labeling refinery, since $\norm{\ymbii{t}{} -\yms} = \norm{ (1-\alphai{t} ) \ym + \alphai{t} f(\wm) -\yms}\leq  (1-\alphai{t} )\norm{  \ym -\yms} + \alphai{t} \norm{f(\wm) -\yms}$. The term $\norm{  \ym -\yms}$ denotes the initial label error and can be bounded by a factor related to $\rho$, while the second term is well upper bounded by the above results.

It should be note that our proof framework follows  the recent works \cite{shortest,li2020gradient} which shows that gradient descent is robust to label corruptions. The main difference is that this work uses the label estimation $\ymbii{t}{} = \alphai{t} \ym + (1-\alphai{t}) f(\wm)$ and minimizes the squared loss, while both works  \cite{shortest,li2020gradient} use the corrupted label  $\ym$ and then minimize the squared loss.  By comparison, our method is much more complicated  and gives different proofs.

\subsubsection{Auxiliary Theories} \label{AuxiliaryTheories}
The following theorem is to analyze the general model introduced in Appendix~\ref{generalmodel}. It guarantees  that  the estimated label by our method is close to the ground truth label when the Jacobian mapping is exactly low-rank. By using this results, one can obtain Theorem \ref{mainthmrobust} for the perfectly clustered data ($\epsilon_0=0$) which will be  stated later.

\begin{thm} [Gradient descent with label corruption] \label{gradnoise} Consider a nonlinear least squares problem of the form $\Lt{t}(\wm)=\frac{1}{2} \twonorm{f(\wm)-\ymbii{t}{})}^2$ with the nonlinear mapping $f:\Rs{p}\rightarrow\Rs{n}$ obeying assumptions \ref{lrank} and \ref{spert} over a unit Euclidian ball of radius $\frac{4(1+\psi_1)\norm{f(\wmi{0})-\ym}}{\bn}$ around an initial point $\wmi{0}$ and $\ym=[\yi{1}~\dots~\yi{n}]\in\Rs{n}$ denoting the corrupted labels.  We also assume $\alphai{t} \geq 1- \frac{\alpha^2}{4\beta^2}$ and  $2 \sqrt{n} \lim_{t\rightarrow +\infty}\sum_{t=0}^{t} |\alphai{t} - \alphai{t+1} | \leq   \psi_1 \norm{f(\wmi{0})-\ymbii{0}{}}$. Also let $\yms=[\ymsi{1}~\dots~\ymsi{n}]\in\R^n$ denote the ground truth  labels and $\emm=\ym-\yms$ the corruption. Furthermore, suppose the initial residual $f(\wmi{0})-\ymt$ with respect to the uncorrupted labels obey $f(\wmi{0}) - \yms \in \Scp$.  Then, running gradient descent updates of the from \eqref{GD} with a learning rate $\eta\leq \frac{1}{2\bp^2}\min\left(1,\frac{\bn\beta}{L\norm{f(\wmi{0})-\ymbii{0}{}}}\right)$, all iterates obey
	\[
	\norm{\wmi{t}-\wmi{0}}\leq \frac{4 \norm{\rmi{0}}}{\bn} + 2 \sqrt{n} \lim_{t\rightarrow +\infty}\sum_{t=0}^{t} |\alphai{t} - \alphai{t+1} | \leq \frac{4 (1+\psi) \norm{f(\wmi{0}) - \ymbii{0}{}}}{\bn}.
	\]
	and
	\[
	\norm{\rmbi{t}}^2\le \left(1-\frac{\eta\alpha^2}{4}\right)^t\norm{\rmbi{0}}^2 +    2 \sqrt{n} \sum_{i=0}^{t-1}   \left(1- \frac{\eta\alpha^2}{4} \right)^{t-i} |\alphai{i} - \alphai{i+1} |,\]
	where $\rmi{t}=f(\wmi{t})-\ymbii{t}{}$ and let $\rmi{0}=f(\wmi{0})-\ymbii{0}{}$ be the initial residual, and  $\rmbi{t}=\Pro_{\Scp}(\rmi{t}).$
	Furthermore, assume $\nu>0$ is a precision level obeying $\nu\geq \tin{\Pro_{\Scp}(\emm)}$. Then, after $t\geq \frac{5}{\eta\bn^2}\log \left(\frac{\norm{f(\wmi{0})-\ymbii{0}{}}}{ (1-\alphamax) \nu}\right)$ iterations where $\alphamax=\max_t \alphai{t}$, $\wmi{t}$ achieves the following error bound with respect to the true labels
	\[
	\normin{f(\wmi{t})-\yms}\leq  2\nu + \frac{2 \sqrt{n}}{1-\alphai{t}} \sum_{i=0}^{t-1}   \left(1- \frac{\eta\alpha^2}{4} \right)^{t-i} |\alphai{i} - \alphai{i+1} |.
	\]
	Furthermore, if $\emm$ has at most $s$ nonzeros and $\Scp$ is $\zeta$ diffused per Definition \ref{diff scp}, then using $\nu=\tin{\Pro_{\Scp}(\emm)}$
	\begin{equation*}
		\begin{split}
			\normin{f(\wmi{t})-\yms} \!\leq &  2\tin{\Pro_{\Scp}(\emm)} + \frac{2 \sqrt{n}}{1-\alphai{t}}   \sum_{i=0}^{t-1}  \!  \left(1- \frac{\eta\alpha^2}{4} \right)^{t-i}  \!\!\!|\alphai{i} - \alphai{i+1} | \\
			\le & \frac{\zeta\sqrt{s}}{n}\norm{\emm} +  \frac{2 \sqrt{n}}{1-\alphai{t}} \sum_{t=0}^{t-1} \!\!  \left(1- \frac{\eta\alpha^2}{4} \right)^{t-i}  |\alphai{i} - \alphai{i+1} |,
		\end{split}
	\end{equation*}
	where $\Pro_{\Scp}(\emm)$ denotes projection of $\emm$ on $\Scp$.
\end{thm}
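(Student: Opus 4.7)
The plan is to prove Theorem~\ref{gradnoise} by a simultaneous induction on $t$ that maintains (a) the iterate bound $\|\wmi{t}-\wmi{0}\|\leq \tfrac{4(1+\psi_1)\|f(\wmi{0})-\ym\|}{\bn}$, so that Assumption~\ref{lrank} continues to apply on the trajectory, and (b) a geometric decay estimate on the projected residual $\rmbi{t}=\Pro_{\Scp}(f(\wmi{t})-\ymbii{t}{})$. The first step is to derive a clean recurrence for $\rmi{t}$. Starting from Lemma~\ref{lin res}, $\rmi{t+1}=(\Imm-\eta\Gm(\wmi{t}))\rmi{t}+(\ymbii{t}{}-\ymbii{t+1}{})$, I would expand the label change using $\ymbii{t}{}=(1-\alphai{t})\ym+\alphai{t}f(\wmi{t})$ together with the mean-value identity $f(\wmi{t+1})-f(\wmi{t})=\J(\wmi{t+1},\wmi{t})(\wmi{t+1}-\wmi{t})=-\eta\Gm(\wmi{t})\rmi{t}$; the $\alphai{t}f$ terms telescope against the $\eta\Gm$ factor to give
\[
\rmi{t+1}=\bigl(\Imm-\eta(1-\alphai{t})\Gm(\wmi{t})\bigr)\rmi{t}+(\alphai{t+1}-\alphai{t})\bigl(\ym-f(\wmi{t+1})\bigr).
\]

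Next I would project onto $\Scp$. Assumption~\ref{lrank} ensures $\rangeop(\J)\subset\Scp$, and Lemma~\ref{propertydefinitematrix} yields $\tfrac{\bn^2}{2}\Pro_{\Scp}\preceq \Gm(\wmi{t})\preceq\bp^2\Pro_{\Scp}$ provided the learning-rate and residual-monotonicity hypotheses of that lemma have been checked along the trajectory (verified by the inductive hypothesis together with $\eta\le 1/(2\bp^2)$). On $\Scp$ this produces a one-step estimate of the form $\|\rmbi{t+1}\|\le(1-\tfrac{\eta\bn^2}{4})\|\rmbi{t}\|+\sqrt{n}\,|\alphai{t+1}-\alphai{t}|$, once $\|\ym-f(\wmi{t+1})\|=\Oc{\sqrt{n}}$ is obtained from the uniform boundedness of labels and outputs. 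Unrolling this linear recursion yields the squared-norm bound in the statement, with the convolution $\sum_i(1-\eta\bn^2/4)^{t-i}|\alphai{i}-\alphai{i+1}|$ appearing naturally. To close the iterate bound I would use $\rangeop(\J)\subset\Scp$ to write $\|\wmi{t+1}-\wmi{t}\|\le\eta\bp\|\rmbi{t}\|$, telescope, and combine the geometric decay with the hypothesis $2\sqrt{n}\sum_i|\alphai{i}-\alphai{i+1}|\le\psi_1\|f(\wmi{0})-\ymbii{0}{}\|$, which gives the extra $(1+\psi_1)$ factor in the radius of $\D$; Lemma~\ref{next inside} then certifies that $\wmi{t+1}\in\D$, completing the induction.

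Finally, for the $\ell_\infty$ bound on $f(\wmi{t})-\yms$, the key geometric observation is that $f(\wmi{t})-f(\wmi{0})=-\eta\sum_{s<t}\Gm(\wmi{s})\rmi{s}\in\Scp$, so together with the hypothesis $f(\wmi{0})-\yms\in\Scp$ we obtain $f(\wmi{t})-\yms\in\Scp$. Using $\rmi{t}=(1-\alphai{t})(f(\wmi{t})-\ym)$ and $\emm=\ym-\yms$, this rearranges to
\[
f(\wmi{t})-\yms=\frac{\rmbi{t}}{1-\alphai{t}}+\Pro_{\Scp}(\emm).
\]
Choosing the number of iterations $t\ge\tfrac{5}{\eta\bn^2}\log(\|f(\wmi{0})-\ymbii{0}{}\|/((1-\alphamax)\nu))$ ensures that the exponentially decaying term in $\|\rmbi{t}\|$ is dominated by $(1-\alphai{t})\nu$; taking $\|\cdot\|_{\infty}$, using $\|\Pro_{\Scp}(\emm)\|_{\infty}\le\nu$, and invoking Lemma~\ref{lem sp proj} in the $s$-sparse case delivers both $\ell_\infty$ conclusions.

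The main obstacle I anticipate is keeping the additive drift $(\alphai{t+1}-\alphai{t})(\ym-f(\wmi{t+1}))$ under control throughout the unrolled recursion: it must be small enough both \emph{cumulatively} (to preserve the trajectory inside $\D$) and \emph{weighted by the contraction factor} (to preserve the geometric decay of $\|\rmbi{t}\|^{2}$). This is precisely what the two hypotheses on $\sum_t|\alphai{t}-\alphai{t+1}|$ in Assumption~\ref{assumption2} are tailored to provide, and it is also why the final $\ell_\infty$ estimate inherits the $1/(1-\alphai{t})$ factor.
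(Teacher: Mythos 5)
Your overall architecture coincides with the paper's: an induction that simultaneously controls $\norm{\wmi{t}-\wmi{0}}$ and $\norm{\rmbi{t}}$, the recursion from Lemma~\ref{lin res}, the contraction supplied by Lemma~\ref{propertydefinitematrix}, and---for the $\ell_\infty$ conclusion---the identity $\rmi{t}=(1-\alphai{t})(f(\wmi{t})-\ym)$ together with $f(\wmi{t})-\yms\in\Scp$, which is exactly how the paper extracts the $1/(1-\alphai{t})$ factor and the $2\nu$ term. Your reorganized recursion $\rmi{t+1}=(\Imm-\eta(1-\alphai{t})\Gm(\wmi{t}))\rmi{t}+(\alphai{t+1}-\alphai{t})(\ym-f(\wmi{t+1}))$ is algebraically correct and tidier than the paper's bookkeeping.

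However, two steps do not close as written. First, from your recursion the one-step contraction on $\Scp$ is $1-\eta(1-\alphai{t})\bn^2/2$, so the claimed factor $1-\eta\bn^2/4$ requires $\alphai{t}\le 1/2$; in the regime the theorem actually posits ($\alphai{t}\ge 1-\bn^2/(4\bp^2)$, hence $\alphai{t}$ near $1$), your rate collapses to $1-\eta(1-\alphai{t})\bn^2/2$, which is far weaker than stated. The paper instead keeps the full contraction $(\Imm-\eta\Gm(\wmi{t}))\rmbi{t}$ and treats $\alphai{t}\left(f(\wmi{t})-f(\wmi{t+1})\right)$ as an additive perturbation of size $\alphai{t}\eta\bp^2\norm{\rmbi{t}}$, absorbed under the condition $\alphai{t}\le\bn^2/(4\bp^2)$ (the paper's statement and proof are themselves inconsistent about the direction of this inequality, but the point stands: some explicit smallness condition on $\alphai{t}$, absent from your argument, is needed to retain the $1-\eta\bn^2/4$ rate). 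Second, telescoping $\norm{\wmi{t+1}-\wmi{t}}\le\eta\bp\norm{\rmbi{t}}$ against the geometric decay yields a radius of order $\bp\norm{\rmi{0}}/\bn^2$, not $4(1+\psi_1)\norm{\rmi{0}}/\bn$; since Assumptions~\ref{lrank} and~\ref{spert} are only guaranteed on the ball of the latter radius, this breaks the induction. The paper closes this by showing the coupled quantity $\frac{\bn}{4}\norm{\wmi{t}-\wmi{0}}+\norm{\rmbi{t}}$ is nonincreasing up to the drift terms, using the sharper estimate $\norm{(\Imm-\eta\Gm(\wmi{t}))\rmbi{t}}\le\norm{\rmbi{t}}-\frac{\eta}{2}\norm{\J^T(\wmi{t})\rmbi{t}}^2/\norm{\rmbi{t}}$ so that each increment of $\norm{\wmi{t}-\wmi{0}}$ is paid for by a decrement of $\norm{\rmbi{t}}$ at exchange rate $\bn/4$. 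You would need this (or an equivalent) device to obtain the stated radius.
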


See its proof in Appendix~\ref{proofofgradnoise}. This result shows that when the Jacobian of the nonlinear mapping is low-rank, our method  enjoys two good  properties.

For the solution $\wmi{t}$ at the $t$-th iteration, (1) the distance of $\norm{\wmi{t} - \wmi{0}}$ can be upper bounded; (2) both residual $\norm{\Pro_{\Scp}( f(\wmi{t})-\ymi{t})}$ and $\normin{f(\wmi{t})-\ymw}$ can be upper bound.  Result (1) means that the gradient descent algorithm gives solutions in a ball around the initialization $\wmi{0}$, and  helps us verify our assumptions, e.g.~Assumptions~\ref{spert} and~\ref{lrank} and upper bound some variables in our analysis. Results (2) directly bound the label estimation error which plays key role in subsequent analysis. This theorem is the key result that allows us to prove Theorem \ref{mainthmrobust} when the data points are perfectly clustered ($\epsilon_0=0$). Furthermore, this theorem when combined with a perturbation analysis allows us to deal with data that is not perfectly clustered ($\epsilon_0>0$) and to conclude the recovery ability of our method  (Theorem \ref{mainresults}).

When $\epsilon_0\rightarrow 0$ which means that  the input data set is perfectly clustered,  our method can be expected to exactly recover the ground truth label by using neural networks.

\begin{thm} [Training with perfectly clustered data] \label{mainthmrobust} Consider the setting and assumptions of Theorem \ref{gradnoise} with $\epsilon_0=0$. Starting from an initial weight matrix $\wmi{0}$ selected at random with i.i.d.~$\mathcal{N}(0,1)$ entries we run gradient descent updates of the form $\Wmi{t+1}=\Wmi{t}-\eta\nabla \Lt{t}(\Wmi{t})$ on the least-squares loss in the manuscript with step size $\eta\leq \frac{K}{2c_{up}n\Gamma^2\opnorm{\Cm}^2}$. Furthermore, assume the number of hidden nodes obey
	\begin{align*}
		k\ge C(1+\psi_1)^2\Gamma^4\frac{K\log(K)\|\Cm\|^2}{\lambda(\Cm)^2},
	\end{align*}
	with $\lambda(\Cm)$ is the minimum eigenvalue of $\Sigma(\Cm)$ in  Assumption~\ref{assumption2}. Then, with probability at least $1-2/K^{100}$ over randomly initialized $\Wmi{0}\distas\N(0,1)$, the iterates $\Wmi{t}$ obey the following properties.
	\begin{itemize}
		\item[(1)] The distance to initial point $\W_0$ is upper bounded by
		\[
		\|\Wmi{t}-\Wmi{0}\|_F\leq c\Gamma\sqrt{\frac{K\log K}{\lambda(\Cm)}}.
		\]
		\item[(2)] After $t\geq t_0:=  {\frac{c K}{\eta n\lambda(\Cm)}}\log \left(\frac{\Gamma\sqrt{n\log K}}{ (1-\alphamax) \rho}\right)$ iterations where $\alphamax=\max_{0\leq t \leq  t_0} \alphai{t}$, the entrywise predictions of the learned network with respect to the  {\em{ground truth labels}} $\{\ymsi{i}\}_{i=1}^n$ satisfy
		\[
		|f(\Wmi{t},\xmi{i})-\ymsi{i}|\leq 4\rho,
		\]
		for all $1\leq i\leq n$. Furthermore, if the noise level $\rho$ obeys $\rho\leq \delta/8$ the network predicts the correct label for all samples i.e.~
		\begin{align}
			\arg\min_{i:1\leq i\leq \bar{K}}|f(\Wmi{t},\xmi{i})-\gamma_i|=\ymsi{i}\quad\text{for}\quad i=1,2,\ldots,n.\label{pls satisfy this eq}
		\end{align}
	\end{itemize}
\end{thm}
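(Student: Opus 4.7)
\textbf{Proof plan for Theorem~\ref{mainthmrobust}.} The plan is to reduce this statement to a direct application of the abstract convergence result in Theorem~\ref{gradnoise}, with the nonlinear map $f(\Wm):=[f(\Wm,\xmi{1}),\dots,f(\Wm,\xmi{n})]^\top$, the signal subspace $\Scp$ given by the cluster support space (Definition~\ref{supp space}), and its complement $\Scn$. Since $\eps_0=0$, every augmentation $\xmi{i}$ coincides with its cluster center $\cmi{\ell}$, so $\Xm=\Xmt=[\text{copies of }\cmi{\ell}]$ and the whole analysis of Theorem~\ref{JLcor} applies at every weight matrix on the trajectory. The main work is to verify the three hypotheses of Theorem~\ref{gradnoise} — low-rank Jacobian, smoothness, and bounded initial misfit — and then to translate its conclusions into the two claimed bounds.

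First I would pin down the Jacobian constants. Theorem~\ref{JLcor}(3)--(4) gives $\mathrm{range}(\J(\Wmi{0},\Xm))\subset\Scp$ and $\sigma_{\min}(\J(\Wmi{0},\Xm),\Scp)\ge \alpha:=\sqrt{c_{low} n\lambda(\Cm)/(2K)}$ with high probability once $k\gtrsim \Gamma^2\log K\,\|\Cm\|^2/\lambda(\Cm)$; Theorem~\ref{JLcor}(2) gives $\opnorm{\J(\Wm,\Xm)}\le \beta:=\sqrt{c_{up} n/K}\,\Gamma\opnorm{\Cm}$ uniformly in $\Wm$; and Theorem~\ref{JLcor}(1) gives $L:=\Gamma\sqrt{c_{up} n/(kK)}\opnorm{\Cm}$. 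Because $\eps_0=0$ there is no ``noise subspace'' contribution and Assumption~\ref{lrank} is satisfied exactly with range equal to $\Scp$; Assumption~\ref{spert} is satisfied with the above $L$. The initial residual bound $\norm{f(\Wmi{0})-\ym}=\Oc{\Gamma\sqrt{n\log K}}$ follows from Lemma~\ref{upresz}, and the required condition $f(\Wmi{0})-\yms\in\Scp$ holds because in the clean setting both $f(\Wmi{0},\xmi{i})$ and $\ymsi{i}$ depend only on the cluster membership of $\xmi{i}$, so their difference is constant on each cluster, i.e.\ lies in $\Scp$. The width lower bound $k\ge C(1+\psi_1)^2\Gamma^4 K\log K\,\|\Cm\|^2/\lambda(\Cm)^2$ is chosen exactly to guarantee that $L\cdot\norm{f(\Wmi{0})-\ymbii{0}{}}/(\alpha\beta)\lesssim 1$ and that the trajectory stays in the radius $4(1+\psi_1)\norm{f(\Wmi{0})-\ymbii{0}{}}/\alpha$ ball on which the bimodal spectrum persists; Assumption~\ref{assumption2} ensures the $\alpha_t$-drift sums appearing in Theorem~\ref{gradnoise} are controlled.

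With the hypotheses verified, Theorem~\ref{gradnoise} directly yields
\[
\norm{\Wmi{t}-\Wmi{0}}_F\le \frac{4(1+\psi_1)\norm{f(\Wmi{0})-\ym}}{\alpha}\lesssim \Gamma\sqrt{\frac{K\log K}{\lambda(\Cm)}},
\]
which is claim (1). For claim (2), I plug the noise bound $\nu:=2\rho$ into Theorem~\ref{gradnoise}: Lemma~\ref{label noise lem} provides $\normin{\Pro_{\Scp}(\ym-\yms)}\le 2\rho$, so the $\ell_\infty$ conclusion of Theorem~\ref{gradnoise} gives $\normin{f(\Wmi{t},\Xm)-\yms}\le 4\rho$ whenever
\[
t\ge \frac{5}{\eta\alpha^2}\log\!\left(\frac{\norm{f(\Wmi{0})-\ymbii{0}{}}}{(1-\alphamax)\rho}\right)=\Theta\!\left(\frac{K}{\eta n\lambda(\Cm)}\log\!\frac{\Gamma\sqrt{n\log K}}{(1-\alphamax)\rho}\right)=:t_0,
\]
absorbing the residual $\alpha_t$-drift term into the leading $4\rho$ via Assumption~\ref{assumption2}. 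For the classification statement, under $\rho\le \delta/8$ we have $|f(\Wmi{t},\xmi{i})-\ymsi{i}|\le 4\rho\le \delta/2$, and since distinct class values in $\{\gamma_1,\ldots,\gamma_{\bar K}\}$ are $\delta$-separated, the nearest $\gamma_j$ to $f(\Wmi{t},\xmi{i})$ must equal $\ymsi{i}$, proving \eqref{pls satisfy this eq}.

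\textbf{Main obstacle.} The delicate step is not invoking Theorem~\ref{gradnoise} itself but checking that the bimodal Jacobian structure and smoothness constants remain valid uniformly along the entire gradient-descent path, not just at $\Wmi{0}$. Theorem~\ref{JLcor} bounds $\opnorm{\J(\Wm,\Xm)}$ globally and $\sigma_{\min}(\J(\Wmi{0},\Xm),\Scp)$ only at initialization; transferring the lower bound to every $\Wmi{t}$ requires combining the smoothness estimate with the radius bound from Theorem~\ref{gradnoise} in an inductive loop (the standard overparameterization trick). This in turn is what forces the particular quadratic dependence on $\Gamma^2 K\log K\,\|\Cm\|^2/\lambda(\Cm)^2$ in the width requirement, and it is also where the time-varying label weights $\{\alpha_t\}$ from the self-labeling refinery make the bookkeeping heavier than in the fixed-label analysis of \cite{shortest,li2020gradient} — the extra terms in Assumption~\ref{assumption2} are precisely what is needed to absorb the discrepancy $\ymbii{t}{}-\ymbii{t+1}{}$ appearing in Lemma~\ref{lin res}.
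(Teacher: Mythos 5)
Your proposal matches the paper's proof essentially step for step: both reduce the statement to Theorem~\ref{gradnoise} by instantiating $\bn,\bp,L$ from Theorem~\ref{JLcor}, bounding the initial misfit via Lemma~\ref{upresz}, transferring the minimum singular value from $\Wmi{0}$ to the whole trajectory via the smoothness--radius inequality $\sigma(\J(\Wm,\Xm),\Scp)\ge \sigma(\J(\Wmi{0},\Xm),\Scp)-LR$ (which is exactly what forces the stated width bound), and finishing with Lemma~\ref{label noise lem} to get $\normin{f(\Wmi{t})-\yms}\le 4\rho$ and the $\rho\le\delta/8$ classification claim. The approach and all key lemmas are the same as in the paper, so no further comparison is needed.
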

See its proof in Appendix~\ref{proofmainthmrobust}. This result shows that in the limit $\epsilon_0\rightarrow 0$ where the data points are perfectly clustered, if the width of network and the iterations satisfy $k\ge C(1+\psi_1)^2\Gamma^4\frac{K\log(K)\|\Cm\|^2}{\lambda(\Cm)^2}$ and $t\geq t_0:=  {\frac{c K}{\eta n\lambda(\Cm)}}\log \left(\frac{\Gamma\sqrt{n\log K}}{ \alphab \rho}\right)$, then our method can exactly recover the ground truth label. This result can be interpreted as ensuring that the network has enough capacity to fit the cluster centers $\{\cb_\ell\}_{\ell=1}^K$ and the associated true labels.

Then we consider the perturbed data $\Xm=[\xmi{1},\cdots,\xmi{n}]$ instead of the perfectly clustered data $\Xmt=[\xmti{1},\cdots,\xmti{n}]$   obtained by mapping $\xmi{i}$ to its associated cluster center, i.e. $\xmti{i}=\cmi{\ell}$ if $\xmi{i}$ belongs to the $\ell$-th cluster. In Theorem~\ref{robust path}, we upper bound the parameter distance and output distance under the two kinds of data $\Xm$ and $\Xmt$.

\begin{thm} [Robustness of gradient path to perturbation] \label{robust path} Generate samples $(\xmi{i},\ymi{i})_{i=1}^n$ according to $(\rho,\eps,\delta)$ corrupted dataset   and form the concatenated input/labels $\Xm\in\R^{d\times n},\ym\in\R^n$. Let $\Xmt$ be the clean input sample matrix obtained by mapping $\xmi{i}$ to its associated cluster center. Set learning rate $\eta\leq \frac{K}{2c_{up}n\Gamma^2\opnorm{\Cm}^2}$ and maximum iterations $t_0$ satisfying
	\[
	\eta t_0=C_1\frac{K}{ n\lambda(\Cm)}\log(\frac{\Gamma\sqrt{n\log K}}{\rho}).
	\]
	where $C_1\geq 1$ is a constant of our choice. Suppose input noise level $\eps$ and number of hidden nodes obey
	\[
	\eps\leq \order{\frac{\lambda(\Cm)}{\Gamma^2K\log(\frac{\Gamma\sqrt{n\log K}}{\rho})}}\quad\text{and}\quad k\geq \order{\Gamma^{10}{\frac{K^2\|\Cm\|^4}{\alphamax^2\lambda(\Cm)^4}}\log(\frac{\Gamma\sqrt{n\log K}}{\rho})^6}.
	\]
	where $\alphamax=\max_{1\leq t \leq t_0} \alphai{t}$.
	Assume $2 \sqrt{n} \sum_{i=0}^{t-1}   \left(1- \frac{\eta\alpha^2}{4} \right)^{t-i} |\alphai{i} - \alphai{i+1} |\leq \psi_2 \norm{\rmi{0}}^2 $ and $2 \sqrt{n} \sum_{i=0}^{t-1}   |\alphai{i} - \alphai{i+1} |\leq \psi_1  \norm{\rmi{0}} $.
	Set $\Wm_0\sim \N(0,1)$. Starting from $\Wm_0=\Wmt_0$ consider the gradient descent iterations over the losses
	\begin{align}
		&\Wmi{t+1}=\Wmi{t}-\eta\nabla \Lt{t}(\Wmi{t})\quad\text{where}\quad \Lt{t}(\Wm)=\frac{1}{2}\sum_{i=1}^n (\ymi{t i}-f(\Wm,\xmi{i}))^2\\
		&\Wmti{t+1}=\Wmti{t}-\nabla  \Ltt{t}(\Wmti{t})\quad\text{where}\quad \Ltt{t}(\Wmt)=\frac{1}{2}\sum_{i=1}^n (\ymi{t i}-f(\Wmt,\xmti{i}))^2
	\end{align}
	Then, for all gradient descent iterations satisfying $t\leq t_0$, we have that
	\[
	\norm{f(\Wmi{t},\Xm) -f(\Wmti{t},\Xmt)}\leq c_0\psi't\eta\eps \Gamma^3n^{3/2}\sqrt{\log K},
	\]
	and
	\[
	\normf{\Wmi{t}-\Wmti{t}} \leq \order{t\psi'\eta\eps \frac{\Gamma^4Kn}{\lambda(\Cm)}\log\left(\frac{\Gamma\sqrt{n\log K}}{\rho}\right)^2}.
	\]
	where $\psi'=1 + \frac{\psi_1}{2} +\sqrt{\psi_2}$.
\end{thm}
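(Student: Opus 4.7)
The plan is to prove both bounds simultaneously by a chained induction on $t\le t_0$, tracking the parameter discrepancy $\Delta_t:=\Wmi{t}-\Wmti{t}$ and the output discrepancy $d_t:=f(\Wmi{t},\Xm)-f(\Wmti{t},\Xmt)$, with $\Delta_0=\bm{0}$ and $\norm{d_0}=\order{\Gamma\sqrt{n}\,\eps}$ coming purely from the sample perturbation (via Lemma~\ref{pert dist lem} applied at $\Wmi{0}=\Wmti{0}$). The right-hand sides of the target bounds grow linearly in $t$, which signals that a telescoping recursion is the mechanism at work.

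First I would pin down the geometry along both trajectories. Theorem~\ref{mainthmrobust} guarantees $\normf{\Wmti{t}-\Wmti{0}}\le \order{\Gamma\sqrt{K\log K/\lambda(\Cm)}}$ for all $t\le t_0$, and Theorem~\ref{JLcor} then supplies spectral and smoothness control for $\J(\Wmti{t},\Xmt)$ on the clean path. The inductive hypothesis on $\normf{\Delta_t}$ should, under the prescribed overparametrization $k\gtrsim \Gamma^{10}K^2\|\Cm\|^4/(\alphamax^2\lambda(\Cm)^4)\cdot\log(\Gamma\sqrt{n\log K}/\rho)^6$, translate into $\normf{\Wmi{t}-\Wmi{0}}/\sqrt{k}\lesssim\eps$, so that Lemma~\ref{pert dist lem} applies uniformly along the perturbed path as well.

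Next, using $\nabla \Lt{t}(\Wm)=\J(\Wm,\Xm)^T(f(\Wm,\Xm)-\ymi{t})$ and the analogous identity for $\Ltt{t}$, I would decompose
\begin{align*}
\nabla \Lt{t}(\Wmi{t})-\nabla \Ltt{t}(\Wmti{t})=\bigl(\J(\Wmi{t},\Xm)-\J(\Wmti{t},\Xmt)\bigr)^T r_t+\J(\Wmti{t},\Xmt)^T(r_t-\widetilde{r}_t),
\end{align*}
where $r_t=f(\Wmi{t},\Xm)-\ymi{t}$ and $\widetilde{r}_t=f(\Wmti{t},\Xmt)-\ymi{t}$, so $r_t-\widetilde{r}_t=d_t$. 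Lemma~\ref{pert dist lem} bounds the Jacobian gap by $\Gamma\sqrt{n}(\normf{\Delta_t}/\sqrt{k}+\eps)$, while Theorem~\ref{JLcor}(2) bounds $\|\J(\Wmti{t},\Xmt)\|\le \sqrt{c_{up}n/K}\,\Gamma\|\Cm\|$. The residual magnitudes are controlled by Theorem~\ref{gradnoise}, which yields $\norm{\widetilde{r}_t}\le \order{\psi'\Gamma\sqrt{n\log K}}$ (the $\psi'$ factor absorbs the $\psi_1,\psi_2$-driven label-refinement drift), and the perturbed residual is then transferred through the induction hypothesis on $d_t$. A parallel decomposition for $d_{t+1}$ using the average-Jacobian identity of Definition~\ref{avg jacob} produces the twin recursion for the output discrepancy.

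The main obstacle is closing the induction without exponential blow-up, since the cross term $(\normf{\Delta_t}/\sqrt{k})\norm{r_t}$ feeds back into $\normf{\Delta_{t+1}}$. This is exactly where the aggressive width requirement pays off: with the prescribed $k$, that feedback contribution stays strictly dominated by the $\eps$-driven term, and the recursion collapses to $\norm{d_{t+1}}\le \norm{d_t}+C\psi'\eta\eps\Gamma^3 n^{3/2}\sqrt{\log K}$, with an analogous inequality for $\normf{\Delta_{t+1}}$ carrying an extra $K/(n\lambda(\Cm))$ factor that comes from bounding the accumulated action of $\J(\Wmti{t},\Xmt)^T$ via Lemma~\ref{propertydefinitematrix}. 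Telescoping these inequalities over $t\le t_0$ and substituting $\eta t_0=C_1 K\log(\Gamma\sqrt{n\log K}/\rho)/(n\lambda(\Cm))$ then delivers both claimed bounds. A secondary subtlety worth noting is that the refined labels $\ymbii{t}{}$ implicit in $\ymi{t}$ should be treated as a single externally given sequence shared by the two losses; if instead they are trajectory-dependent, an additional $\alphai{t}\norm{d_t}$ contribution appears and is absorbed into the same recursion without altering its structure, thanks to the assumption $\alphai{t}\le 1$.
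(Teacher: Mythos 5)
Your proposal is correct and follows essentially the same route as the paper's proof: a coupled induction on the parameter gap $\normf{\Wmi{t}-\Wmti{t}}$ and the output/residual gap, driven by the gradient decomposition into a Jacobian-perturbation term (Lemma~\ref{pert dist lem}) and a residual-difference term, with the clean-path control from Theorem~\ref{mainthmrobust}/Theorem~\ref{gradnoise} and the width condition used exactly to keep the $\normf{\Delta_t}/\sqrt{k}$ feedback dominated by the $\eps$-term so the recursion telescopes linearly in $t$. The only cosmetic difference is that the paper runs the induction on the residual discrepancy $\norm{\rmi{t}-\rmti{t}}$ and converts to the function-value discrepancy at the end (they differ by a $(1-\alphai{t})$ factor), whereas you track $f(\Wmi{t},\Xm)-f(\Wmti{t},\Xmt)$ directly and correctly flag the label-coupling subtlety that this factor encodes.
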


See its proof in Appendix~\ref{proofofrobustpath}.  Theorem \ref{mainresults} is obtained by combining the above results together.

\subsubsection{Proof of Theorem~\ref{mainresults}}\label{proofofmainust22}
\begin{proof}[Proof of Theorem~\ref{mainresults}]
	Here we prove our results by three steps. In these steps, each step proves one of the three results in our theory.  To begin with, we consider two parameter update settings with initialization as $\Wmi{0}$:
	\begin{align*}
		\Wmti{t+1}=&\Wmti{t}-\nabla  \Ltt{t}(\Wmti{t})\quad\text{where}\quad \Ltt{t}(\Wmt)=\frac{1}{2}\sum_{i=1}^n (\ymwii{t}{i}-f(\Wmt,\xmti{i}))^2,\\
		\Wmi{t+1}=&\Wmi{t}-\eta\nabla \Lt{t}(\Wmi{t})\quad\text{where}\quad \Lt{t}(\Wm)=\frac{1}{2}\sum_{i=1}^n (\ymbii{t}{i}-f(\Wm,\xmi{i}))^2,
	\end{align*}
	where $\ymwii{t}{i} = (1-\alphai{t}) \ym + \alphai{t} f(\Wmti{t},\xmti{i})$,  $\ymbii{t}{i} = (1-\alphai{t}) \ym + \alphai{t} f(\Wmi{t},\xmi{i})$, $\Xmt=[\xmti{1},\cdots,\xmti{n}]$ denotes the clean input sample matrix obtained by mapping $\xmi{i}$ to its associated cluster center, i.e. $\xmti{i}=\cmi{\ell}$ if $\xmi{i}$ belongs to the $\ell$-th cluster, and     $\Xm=[\xmi{1},\cdots,\xmi{n}]$  denotes corrupted data matrix.  Denote the prediction residual vectors of the noiseless and original problems  with respect true ground truth labels $\yms$ by $\rmti{t}=f(\Wmti{t},\Xmt)-\yms$ and $\rmi{t}=f(\Wmi{t},\Xm)-\yms$ respectively.
	
	Theorem \ref{mainthmrobust} shows that if number of iterations $t$ and network width receptively satisfy $t\geq t_0:=  {\frac{c K}{\eta n\lambda(\Cm)}}\log \left(\frac{\Gamma\sqrt{n\log K}}{ \alphab \rho}\right)$ and $k\ge C(1+\psi_1)^2\Gamma^4\frac{K\log(K)\|\Cm\|^2}{\lambda(\Cm)^2}$, then it holds
	\begin{equation*}
		\normin{\rmti{t}}=\normin{f(\Wmti{t},\Xmt)-\yms}  \leq 4\rho \quad \text{and} \quad 	\|\Wmti{t}-\Wmi{0}\|_F\leq c\Gamma\sqrt{\frac{K\log K}{\lambda(\Cm)}}.
	\end{equation*}

	Meanwhile, Theorems \ref{robust path} proves that if $	\eps\leq \order{\frac{\lambda(\Cm)}{\Gamma^2K\log(\frac{\Gamma\sqrt{n\log K}}{\rho})}}$ and $k\geq \order{\Gamma^{10}{\frac{K^2\|\Cm\|^4}{\alphamax^2\lambda(\Cm)^4}}\log(\frac{\Gamma\sqrt{n\log K}}{\rho})^6},$ then it holds
	\[
	\norm{\rmti{t}-\rmi{t}}
	\leq c\eps\frac{\psi'  K}{ n\lambda(\Cm)}\log(\frac{\Gamma\sqrt{n\log K}}{\rho})\Gamma^3n^{3/2}\sqrt{\log K}
	=c\frac{\psi' \eps\Gamma^3K\sqrt{n\log K}}{\lambda(\Cm)}\log(\frac{\Gamma\sqrt{n\log K}}{\rho})
	\]
	and
	\[
	\normf{\Wmi{t}-\Wmti{t}} \leq \order{t\psi'\eta\eps \frac{\Gamma^4Kn}{\lambda(\Cm)}\log\left(\frac{\Gamma\sqrt{n\log K}}{\rho}\right)^2}.
	\]
	where $\psi'=1 + \frac{\psi_1}{2} +\sqrt{\psi_2}$.
	
	\noindent {\bf{Step 1.}} By using the above two results, we have
	\begin{equation*}
		\frac{\norm{f(\Wmi{t},\Xm)-\ymt}}{\sqrt{n}} =\frac{1}{\sqrt{n}}\left(  \norm{\rmti{t}} + \norm{\rmi{t}-\rmti{t}}\right) \leq 4\rho+c\frac{\eps\psi'\Gamma^3K\sqrt{\log K}}{\lambda(\Cm)}\log\left(\frac{\Gamma\sqrt{n\log K}}{\rho}\right).
	\end{equation*}
	Moreover, we can also upper bound
	\begin{equation*}
		\begin{split}
			\frac{\norm{\ymbii{t}{}-\yms}}{\sqrt{n}} \leq & \frac{(1-\alphai{t})\norm{\ym-\yms}}{\sqrt{n}} +  \frac{\alphai{t}\norm{f(\Wmi{t},\Xm)-\yms}}{\sqrt{n}}  \\
			=& \frac{(1-\alphai{t})\norm{\ym-\yms}}{\sqrt{n}} + 4\alphai{t} \rho+c\alphai{t}\frac{\eps\psi'\Gamma^3K\sqrt{\log K}}{\lambda(\Cm)}\log\left(\frac{\Gamma\sqrt{n\log K}}{\rho}\right).
		\end{split}
	\end{equation*}

	\noindent{\bf{Step 2.}} Now we consider what cases that our method can exactly recover the ground truth label. Assume an input $\xm$ is within $\eps$-neighborhood of one of the cluster centers $\cm\in (\cmi{\ell})_{\ell=1}^K$. Then we try to upper bound $|f(\Wmi{t},\xm)-f(\Wmti{t},\cm)|$ where $f(\Wmti{t},\cm)$ corresponds to $f(\Wmti{t},\xmt{})$.  To begin with, we have
	\[
	|f(\Wmi{t},\xm)-f(\Wmti{t},\cm)|\leq |f(\Wmi{t},\xm)-f(\Wmti{t},{\xm})|+|f(\Wmti{t},{\xm})-f(\Wmti{t},\cm)|
	\]
	We upper bound the first term as follows:
	\begin{equation*}
		\begin{split}
			|f(\Wmi{t},\xm)-f(\Wmti{t},{\xm})|&=|\vm^T\phi(\Wmi{t}\xm)-\vm^T\phi(\Wmti{t}\xm)|\leq \norm{\vm}\norm{\phi(\Wmi{t}\xm)-\phi(\Wmti{t}\xm)}\\
			&\leq \Gamma \normf{\Wmi{t}-\Wmti{t}}\\
			&\leq \order{\eps\psi' \frac{\Gamma^5K^2}{\lambda(\Cm)^2}\log(\frac{\Gamma\sqrt{n\log K}}{\rho})^3}
		\end{split}
	\end{equation*}
	where we use the results $	\normf{\Wmi{t}-\Wmti{t}} \leq \order{t\psi'\eta\eps \frac{\Gamma^4Kn}{\lambda(\Cm)}\log\left(\frac{\Gamma\sqrt{n\log K}}{\rho}\right)^2}$
	with $\psi'=1 + \frac{\psi_1}{2} +\sqrt{\psi_2}$  in Theorem~\ref{robust path}, and $t=t_0$.
	Next, we need to bound
	\begin{align*}
		|f(\Wmti{t},{\xm})-f(\Wmti{t},\cm)|&\leq |\vm^T\phi(\Wmti{t}\xm)-\vm^T\phi(\Wmti{t}\cm)|.
	\end{align*}
	On the other hand, we have  $\normf{\Wmti{t}-\Wmi{0}}\leq \order{\Gamma\sqrt{\frac{K\log K}{\lambda(\Cm)}}}$ in Theorem~\ref{mainthmrobust}, $\norm{\xm-\cm}\leq \eps$ and $\Wmi{0}\sim \N(0,\Imm)$ in assumption. Moreover, using by assumption we have
	\[
	k\geq \order{\normf{\Wmti{t}-\Wmi{0}}^2}=\order{\Gamma^2\frac{K\log K}{\lambda(\Cm)}}.
	\]
	By using the above results, Theorem \ref{double pert2} guarantees that  with probability at $1-K\exp(-100d)$, for all inputs $\xm$ lying $\eps$ neighborhood of cluster centers, it holds that
	\begin{align}
		|f(\Wmi{t},{\xm})-f(\Wmti{t},\cm)|&\leq C'\Gamma \eps(\normf{\Wmti{t}-\Wmi{0}}+\sqrt{d}) \leq
		C\Gamma \eps\left(\Gamma\sqrt{\frac{K \log K}{\lambda(\Cm)}}+\sqrt{d}\right).
	\end{align}
	Combining the two bounds above we get
	\begin{align}
		|f(\Wmi{t},\xm)-f(\Wmti{t},\cm)|&\leq \eps\order{ \frac{\psi'\Gamma^5K^2}{\lambda(\Cm)^2}\log(\frac{\Gamma\sqrt{n\log K}}{\rho})^3+\Gamma (\Gamma\sqrt{\frac{K \log K}{\lambda(\Cm)}}+\sqrt{d})} \nn\\
		&\leq \eps\order{ \frac{\psi'\Gamma^5K^2}{\lambda(\Cm)^2}\log(\frac{\Gamma\sqrt{n\log K}}{\rho})^3}. \nn
	\end{align}
	Hence, if $\eps\leq c'\delta\min\left(\frac{ \lambda(\Cm)^2}{{\psi' \Gamma^5K^2}\log(\frac{\Gamma\sqrt{n\log K}}{\rho})^3},\frac{1}{\Gamma\sqrt{d}}\right)$, we obtain that, for all $\xm$, the associated cluster $\cm$ and true label assigned to cluster $\yms=\yms(\cm)$, we have that
	\[
	|f(\Wmi{t},\xm)-\yms|< |f(\Wmti{t},\cm)-f(\Wmi{t},\xm)|+|f(\Wmti{t},\cm)-\yms|\leq 4\rho + \frac{\delta}{8}.
	\]
	Meanwhile, we can upper bound
	\begin{equation*}
		\begin{split}
			|\ymbii{t}{\xm} - \ymsi{\xm}  |\leq  (1-\alphai{t}) | \ymi{\xm}-\ymsi{\xm}| + \alphai{t} 	|f(\Wmi{t},\xm)-\yms| \leq (1-\alphai{t}) | \ymi{\xm}-\ymsi{\xm}| + \alphai{t}(4\rho + \frac{\delta}{8}).
		\end{split}
	\end{equation*}
	where $\ymbii{t}{\xm}= (1-\alphai{t}) \ymi{\xm} + \alphai{t} f(\Wmi{t},\xm)$ and $ \ymsi{\xm}$ receptively denote the estimated label by our label refinery and the ground truth label of sample $\xm$. Since $| \ymi{\xm}-\ymsi{\xm}| < 1$, by setting $1\geq \alphai{t}\geq 1-  \frac{3}{4}\delta$ and $\rho\leq \delta/32$, we have
	\begin{equation*}
		\begin{split}
			|\ymbii{t}{\xm} - \ymsi{\xm}  |< \frac{\delta}{2}
		\end{split}
	\end{equation*}
	This means that for any sample $\xmi{i}$, we have $|\ymbii{t}{i}-\ymsi{i}|< \delta/2$. Therefore, our label refinery gives the correct estimated labels for all samples.
	By using the same setting,  we obtain
	\[
	|f(\Wmi{t},\xm)-\yms|< \delta/2.
	\]
	This means that for any sample $\xmi{i}$, we have $|f(\Wmi{t},\xmi{i})-\ymsi{i}|< \delta/2$. Therefore, $\Wmi{t}$ gives the correct estimated labels for all samples. This competes all proofs.
	
\end{proof}

\subsubsection{Proofs of Auxiliary  Theories in Appendix~\ref{proofofmain}}~\label{ProofsofAuxiliaryLemmasandTheories}

\subsubsection{Proof of Theorem~\ref{gradnoise}}\label{proofofgradnoise}

\begin{proof}
	The proof will be done inductively over the properties of gradient descent iterates and is inspired from the recent work \cite{shortest,li2020gradient}. The main difference is that this work uses the label estimation $\ymbii{t}{} = (1-\alphai{t} )\ym + \alphai{t} f(\wmi{t})$ and minimizes the squared loss, while both  \cite{shortest,li2020gradient} use the corrupted label  $\ym$ and then minimize the squared loss.  By comparison, our method is much more complicated  and gives different proofs. Let us introduce the notation related to the residual. Set $\rmi{t}=f(\wmi{t})-\ymbii{t}{}$ and let $\rmi{0}=f(\wmi{0})-\ymbii{0}{}$ be the initial residual. We keep track of the growth of the residual by partitioning the residual as $\rmi{t}=\rmbi{t}+\embi{t}$ where
	\[
	\embi{t}=\Pro_{\Scn}(\rmi{t})\quad,\quad \rmbi{t}=\Pro_{\Scp}(\rmi{t}).
	\]
	We claim that for all iterations $t\geq 0$, the following conditions hold.
	\begin{align}
		\norm{\embi{t}}\leq& \norm{\embi{0} }+ \sqrt{n} \sum_{i=0}^{t} |\alphai{i} - \alphai{i+1} | \leq  \norm{\embi{0} } +  \frac{\psi_1}{2} \norm{\rmi{0} } ,\\
		\norm{\rmbi{t}}^2\le&\left(1-\frac{\eta\alpha^2}{4}\right)^t\norm{\rmbi{0}}^2 +    2 \sqrt{n} \sum_{t=0}^{t-1}   \left(1- \frac{\eta\alpha^2}{4} \right)^{t-t} |\alphai{t} - \alphai{t+1} | ,\label{err}\\
		\frac{\alpha}{4}\norm{\wmi{t}-\wmi{0}}+\norm{\rmbi{t}}\le&\norm{\rmbi{0}} +   2 \sqrt{n} \sum_{i=0}^{t} |\alphai{i} - \alphai{i+1} | \leq \norm{\rmi{0}} +   2 \sqrt{n} \sum_{i=0}^{t} |\alphai{i} - \alphai{i+1} |\notag\\
		\leq& (1+ \phi) \norm{\rmi{0}},\label{close}
	\end{align}
	where the last line uses the assumption that $2 \sqrt{n} \lim_{t\rightarrow +\infty}\sum_{i=0}^{t} |\alphai{i} - \alphai{i+1} | \leq   \psi_1 \norm{\rmi{0}}$.
	Assuming these conditions hold till some $t>0$, inductively, we focus on iteration $t+1$. First, note that these conditions imply that for all $t\geq i\geq 0$, $\wmi{i}\in \D$ where $\D=\big\{\wm\in\Rs{p} \ \big| \ \norm{\wm - \wmi{0}} \leq  \frac{4(1+\psi_1)\norm{\rmi{0}}}{\bn} \big\}$ is the Euclidian ball around $\wmi{0}$ of radius $\frac{4(1+\psi_1)\norm{\rmi{0}}}{\bn}$. This directly follows from \eqref{close} induction hypothesis. Next, we claim that $\wmi{t+1}$ is still within the set $\D$. From Lemma~\ref{next inside}, we have that if the results in Eqn.~\eqref{close} holds, then it holds that
	\begin{equation*}
		\wmi{t+1}\in\D=\Big\{\wm\in\Rs{p} \ \Big| \ \norm{\wm - \wmi{0}} \leq  \frac{4(1+\psi_1)\norm{\rmi{0}}}{\bn} \Big\}.
	\end{equation*}
	In this way, we can directly use the results in previous lemmas and assumptions.  Then we will prove that \eqref{err} and \eqref{close} hold for $t+1$ as well. Note that, following Lemma \ref{lin res}, gradient descent iterate can be written as
	\[
	\rmi{t+1}=(\Imm-\eta\Gm(\wmi{t}))\rmi{t} + \ymbii{t}{} - \ymbii{t+1}{} .
	\]
	Since both column and row space of $\Gm(\wmi{t})$ is subset of $\Scp$, we have that
	\begin{align}
		\embi{t+1}&=\Pro_{\Scn}((\Imm-\eta\Gm(\wmi{t}))\rmi{t} +\ymbii{t}{} - \ymbii{t+1}{})\\
		&=\Pro_{\Scn}(\rmi{t} ) +\Pro_{\Scn}( \ymbii{t}{} - \ymbii{t+1}{})\\
		&=\embi{t} + \Pro_{\Scn}( \ymbii{t}{} - \ymbii{t+1}{}) \\
		&=\embi{t} + \Pro_{\Scn}( (\alphai{t+1} - \alphai{t}) \ym)\\
		&=\embi{0} + \sum_{t=0}^{t}\Pro_{\Scn}( (\alphai{t+1} - \alphai{t}) \ym)\\
	\end{align}
	So we can upper bound
	\begin{align}
		\norm{\embi{t}}\leq \norm{\embi{0} }+ 2\sqrt{n} \sum_{i=0}^{t} |\alphai{i} - \alphai{i+1} | \leq  \norm{\embi{0} } + \psi_1  \norm{\rmi{0} }.
	\end{align}
	
	This shows the first statement of the induction. Next, over $\Scp$, we have
	\begin{align}
		\rmbi{t+1}&=\Pro_{\Scp}((\Imm-\eta \Gm(\wmi{t}))\rmi{t} +  \ymbii{t}{} - \ymbii{t+1}{})\\
		&=\Pro_{\Scp}((\Imm-\eta\Gm(\wmi{t}))\rmbi{t})+\Pro_{\Scp}((\Imm-\eta\Gm(\wmi{t}))\embi{t}) + \Pro_{\Scp}(\ymbii{t}{} - \ymbii{t+1}{})\\
		&=\Pro_{\Scp}((\Imm-\eta\Gm(\wmi{t}))\rmbi{t}) + \Pro_{\Scp}(\ymbii{t}{} - \ymbii{t+1}{})\\
		&=(\Imm-\eta\Gm(\wmi{t}))\rmbi{t} + \ymbii{t}{} - \ymbii{t+1}{}\label{r tau+1}
	\end{align}
	where the second line uses the fact that $\embi{t}\in\Scn$ and last line uses the fact that $\rmbi{t}\in\Scp$, in the last line, we let $\ymbi{t}= \Pro_{\Scp}(\ymbii{t}{})$.  Then we can rewrite $\ymbii{t}{} -\ymbii{t+1}{}$ as
	\begin{equation*}
		\begin{split}
			\ymbi{t} -\ymbi{t+1} = &  (1-\alphai{t} )\ym + \alphai{t}  f(\wmi{t}) -  (1-\alphai{t+1}) \ym + \alphai{t+1}  f(\wmi{t+1})\\
			= &  (\alphai{t+1} - \alphai{t})  \ym +  \alphai{t} (f(\wmi{t}) - f(\wmi{t+1})) - (  \alphai{t+1}  -  \alphai{t} )   f(\wmi{t+1}).
		\end{split}
	\end{equation*}
	%
	At the same time, we can upper bound
	\begin{equation*}
		\begin{split}
			\|\wmi{t+1} - \wmi{t}\|_F = \eta \norm{ \J(\wmi{t})^T \rmi{t}} \led{172} \eta \norm{ \J(\wmi{t})^T \rmbi{t}}\leq \eta \beta \norm{\rmbi{t}}.
		\end{split}
	\end{equation*}
	In this way, we can obtain
	\begin{equation*}
		\begin{split}
			& \norm{\rmbi{t+1}}\\
			\leq & \norm{(\Imm-\eta\Gm(\wmi{t}))\rmbi{t}} + \norm{(\alphai{t} - \alphai{t+1})  \ym} + \alphai{t}   \norm{f(\wmi{t}) - f(\wmi{t+1})} + \norm{ (  \alphai{t+1}  -  \alphai{t} )   f(\wmi{t+1})}\\
			\led{172} & \left(1- \frac{\eta\alpha^2}{2} \right) \norm{\rmbi{t}} + \alphai{t} \beta\norm{ \wmi{t} -\wmi{t+1}}  +  2 \sqrt{n} \cdot |\alphai{t} - \alphai{t+1} | \\
			\leq & \left(1- \frac{\eta\alpha^2}{2} \right) \norm{\rmbi{t}} +\alphai{t} \beta^2 \eta  \norm{ \rmbi{t}}  +  2 \sqrt{n} \cdot |\alphai{t} - \alphai{t+1} | \\
			\led{173} & \left(1- \frac{\eta\alpha^2}{4} \right) \norm{\rmbi{t}} +    2 \sqrt{n} \cdot |\alphai{t} - \alphai{t+1} | \\
		\end{split}
	\end{equation*}
	where \ding{172} uses  in Lemma~\ref{propertydefinitematrix}, $\norm{\ym}\leq \sqrt{n}$ and $\norm{ f(\wmi{t+1})}\leq \sqrt{n}$, \ding{173} uses $\alphai{t} \leq \frac{\alpha^2}{4\beta^2}$.  This result further yields
	\begin{equation*}
		\begin{split}
			\norm{\rmbi{t}} \leq  \left(1- \frac{\eta\alpha^2}{4} \right)^{t} \norm{\rmbi{0}} +    2 \sqrt{n} \sum_{t=0}^{t-1}   \left(1- \frac{\eta\alpha^2}{4} \right)^{t-t} |\alphai{t} - \alphai{t+1} | \\
		\end{split}
	\end{equation*}
	On the other hand, we have
	\begin{equation*}
		\begin{split}
			\norm{(\Imm-\eta\Gm(\wmi{t}))\rmbi{t}}^2 \leq &  \norm{\rmbi{t}}^2 - 2 \eta \rmbi{t}^T\Gm(\wmi{t})\rmbi{t} + \eta^2 \rmbi{t}^T\Gm^T(\wmi{t})\Gm(\wmi{t})\rmbi{t}\\
			\leq &  \norm{\rmbi{t}}^2 - 2 \eta \rmbi{t}^T \J(\wmi{t})  \J^T(\wmi{t})\rmbi{t} + \eta^2\beta^2  \rmbi{t}^T\J(\wmi{t})\J^T(\wmi{t})\rmbi{t}\\
			= &  \norm{\rmbi{t}}^2 -  \eta (2- \eta\beta^2)    \norm{ \J^T(\wmi{t})\rmbi{t} }^2\\
			\leq &  \norm{\rmbi{t}}^2 -  \eta    \norm{ \J^T(\wmi{t})\rmbi{t} }^2,
		\end{split}
	\end{equation*}
	where the last line use $\eta \leq \frac{1}{\beta^2}$. This further gives
	\begin{equation*}
		\begin{split}
			\norm{(\Imm-\eta\Gm(\wmi{t}))\rmbi{t}}  \leq  \sqrt{ \norm{\rmbi{t}}^2 -  \eta    \norm{ \J^T(\wmi{t})\rmbi{t} }^2} \leq \norm{\rmbi{t}}  -  \frac{\eta}{2}  \frac{  \norm{ \J^T(\wmi{t})\rmbi{t} }^2}{\norm{\rmbi{t}}} .
		\end{split}
	\end{equation*}
	Therefore, we can upper bound $\norm{\rmbi{t}} $ in another way which can help to bound $\norm{\wmi{t+1} -\wmi{0}}$:
	\begin{equation*}
		\begin{split}
			& \norm{\rmbi{t+1}} \\
			\leq & \norm{(\Imm-\eta\Gm(\wmi{t}))\rmbi{t}} + \norm{(\alphai{t} - \alphai{t+1})  \ym} + (1-\alphai{t} ) \norm{f(\wmi{t}) - f(\wmi{t+1})} + \norm{ (  \alphai{t+1}  -  \alphai{t} )   f(\wmi{t+1})}\\
			\leq &  \norm{(\Imm-\eta\Gm(\wmi{t}))\rmbi{t}}+ (1-\alphai{t} )\beta\norm{ \wmi{t} -\wmi{t+1}}  +  2 \sqrt{n} \cdot |\alphai{t} - \alphai{t+1} | \\
			= &  \norm{(\Imm-\eta\Gm(\wmi{t}))\rmbi{t}}+ (1-\alphai{t} )\beta \eta \norm{ \J^T(\wmi{t}) \rmi{t}}  +  2 \sqrt{n} \cdot |\alphai{t} - \alphai{t+1} | \\
			\leq & \norm{\rmbi{t}}  -  \frac{\eta}{2}  \frac{  \norm{ \J^T(\wmi{t})\rmbi{t} }^2}{\norm{\rmbi{t}}} + (1-\alphai{t} )\beta \eta \norm{ \J^T(\wmi{t}) \rmi{t}}  +  2 \sqrt{n} \cdot |\alphai{t} - \alphai{t+1} |.
		\end{split}
	\end{equation*}
	Since the distance of $\wmi{t+1}$ to initial point satisfies :
	\begin{equation*}
		\begin{split}
			\norm{\wmi{t+1} - \wmi{0}} \leq \norm{\wmi{t+1} - \wmi{t}} + \norm{\wmi{t}-\wmi{0}} \leq \norm{\wmi{t}-\wmi{0}}  +  \eta \norm{ \J^T(\wmi{t}) \rmi{t}} ,
		\end{split}
	\end{equation*}
	we can further bound
	\begin{equation*}
		\begin{split}
			& \frac{\alpha}{4}\norm{\wmi{t+1} - \wmi{0}} + \norm{\rmbi{t+1}} \\
			\leq & \frac{\alpha}{4}\left( \norm{\wmi{t}-\wmi{0}}  +  \eta \norm{ \J^T(\wmi{t}) \rmi{t}} \right) +   \norm{\rmbi{t}}  -  \frac{\eta}{2}  \frac{  \norm{ \J^T(\wmi{t})\rmbi{t} }^2}{\norm{\rmbi{t}}} \\
			&+ (1-\alphai{t} )\beta \eta \norm{ \J^T(\wmi{t}) \rmi{t}}  +  2 \sqrt{n} \cdot |\alphai{t} - \alphai{t+1} |\\
			\leq & \frac{\alpha}{4} \norm{\wmi{t}-\wmi{0}}  +  \norm{\rmbi{t}}  +  \frac{\eta}{4} \norm{ \J^T(\wmi{t}) \rmi{t}} \left( \alpha +   4(1-\alphai{t} )\beta -  2\frac{  \norm{ \J^T(\wmi{t})\rmbi{t} }}{\norm{\rmbi{t}}} \right)+  2 \sqrt{n} \cdot |\alphai{t} - \alphai{t+1} |\\
			\led{172} & \frac{\alpha}{4} \norm{\wmi{t}-\wmi{0}}  +  \norm{\rmbi{t}}  +   2 \sqrt{n} \cdot |\alphai{t} - \alphai{t+1} |\\
			\leq& \norm{\rmbi{0}}  +   2 \sqrt{n} \sum_{i=0}^{t} |\alphai{i} - \alphai{i+1} | \leq \norm{\rmi{0}}  +   2 \sqrt{n} \sum_{i=0}^{t} |\alphai{i} - \alphai{i+1} |,
		\end{split}
	\end{equation*}
	where \ding{172} uses $\frac{  \norm{ \J^T(\wmi{t})\rmbi{t} }}{\norm{\rmbi{t}}} \geq \alpha$ and $\alphai{t} \leq  \frac{\alpha}{4\beta}$.
	
	By setting $t\geq \frac{5}{\eta \alpha^2} \log\left( \frac{\norm{\rmi{0}}}{(1-\alphamax) \nu}\right)$ and $\frac{\eta\alpha^2}{4} \leq \frac{\eta\beta^2}{4} \leq \frac{1}{8}$ where $\alphamax=\max_t \alphai{t}$, then we have $\log\frac{1}{1-\frac{\eta\alpha^2}{4}} \geq \log\left(1+ \frac{\eta\alpha^2}{4} \right) \geq  \frac{\eta\alpha^2}{5}$ and thus
	\begin{equation*}
		\begin{split}
			\left(1- \frac{\eta\alpha^2}{4} \right)^{t} \norm{\rmbi{0}} \leq & \left(1- \frac{\eta\alpha^2}{4} \right)^{t} \norm{\rmi{0}} \leq  \left(1- \frac{\eta\alpha^2}{4} \right)^{t} \norm{\rmi{0}} \leq (1-\alphamax) \nu.
		\end{split}
	\end{equation*}
	In this way, we can further obtain
	\begin{equation*}
		\begin{split}
			\normin{\rmbi{t}}  \leq  \norm{\rmbi{t}} \leq (1-\alphamax) \nu + 2 \sqrt{n} \sum_{i=0}^{t-1}   \left(1- \frac{\eta\alpha^2}{4} \right)^{t-i} |\alphai{i} - \alphai{i+1} |
		\end{split}
	\end{equation*}
	and
	\begin{equation*}
		\begin{split}
			(1-\alphai{t}) \normin{\Pro_{\Scp}(f(\wmi{t})- \ym ) }=  & \normin{\Pro_{\Scp}(f(\wmi{t})- (1-\alphai{t} )\ym -  \alphai{t} f(\wmi{t})) }  =  \normin{\rmbi{t}}  \leq  \norm{\rmbi{t}}\\
			\leq &  (1-\alphamax) \nu + 2 \sqrt{n} \sum_{t=0}^{t-1}   \left(1- \frac{\eta\alpha^2}{4} \right)^{t-t} |\alphai{t} - \alphai{t+1} |
		\end{split}
	\end{equation*}
	Finally, we can obtain the desired results:
	\begin{equation*}
		\begin{split}
			\normin{f(\wmi{t})- \yms} \lee{172}  & \normin{\Pro_{\Scp}(f(\wmi{t}))-\Pro_{\Scp}(\yms)} \\
			\leq &  \normin{\Pro_{\Scp}(f(\wmi{t})- \ym)}  + \normin{\Pro_{\Scp}(\ym- \yms)} \\
			\leq & 2\nu + \frac{2 \sqrt{n}}{1-\alphai{t}} \sum_{i=0}^{t-1}   \left(1- \frac{\eta\alpha^2}{4} \right)^{t-i} |\alphai{i} - \alphai{i+1} |,
		\end{split}
	\end{equation*}
	where \ding{172} holds since $f(\wmi{t}) - \yms \in \Scp$ and  $\normin{\Pro_{\Scp}(f(\wmi{t})- \ym)}  = \normin{\Pro_{\Scp}(f(\wmi{t})- \ym)}$.
	If $\emm$ is $s$ sparse and $\Scp$ is diffused, applying Definition~\ref{diff scp} we have
	\[
	\tin{\Pro_{\Scp}(\emm)}\leq \frac{\gamma\sqrt{s}}{n}\normin{\emm}.
	\]
	The proof is completed.
\end{proof}


\subsubsection{Proof of Theorem~\ref{mainthmrobust}} \label{proofmainthmrobust}

\begin{proof} The proof is based on the meta Theorem \ref{gradnoise}, hence we need to verify its Assumptions \ref{lrank} and \ref{spert} with proper values and apply Lemma \ref{label noise lem} to get $\tin{\Pro_{\Scp}(\emm)}$. We will also make significant use of Corollary \ref{JLcor}.
	
	
	Using Corollary \ref{JLcor}, Assumption \ref{spert} holds with $L=\Gamma\sqrt{\frac{c_{up} n}{{kK}}}\opnorm{\Cm}$ where $L$ is the Lipschitz constant of Jacobian spectrum. Denote 
	Using Lemma \ref{upresz} with probability $1-K^{-100}$, we have that $\norm{\rmi{0}}=\norm{\ymbii{0}{}-f(\Wmi{0})}=\norm{\ym-f(\Wmi{0})}\le \Gamma{\sqrt{c_0n\log K/128}}$ for some $c_0>0$. Corollary \ref{JLcor} guarantees a uniform bound for $\bp$, hence in Assumption \ref{lrank}, we pick
	\[
	\bp\leq \sqrt{\frac{c_{up}n}{K}}\Gamma\opnorm{\Cm}.
	\]
	We shall also pick the minimum singular value over $\Scp$ to be
	\[
	\bn=\frac{\alpha'}{2}\quad\text{where}\quad \alpha'=  \sqrt{\frac{c_{low}n\lambda(\Cm)}{2K}},
	\]
	We wish to verify Assumption \ref{lrank} over the radius of
	\[
	R=\frac{4\norm{f(\Wmi{0})-\ym}}{\bn}\leq\frac{ \Gamma{\sqrt{c_0n\log K/8}}}{\bn}=\Gamma\sqrt{\frac{{{c_0n\log K/2}}}{{\frac{c_{low}n\lambda(\Cm)}{2K}}}}=\Gamma\sqrt{\frac{c_0K\log K}{c_{low}\lambda(\Cm)}},
	\]
	neighborhood of $\Wmi{0}$. What remains is ensuring that Jacobian over $\Scp$ is lower bounded by $\bn$. Our choice of $k$ guarantees that at the initialization, with probability $1-K^{-100}$, we have
	\[
	\sigma(\J(\Wmi{0},\Xm),\Scp)\geq \bn'.
	\]
	Suppose $LR\leq \bn=\bn'/2$ which can be achieved by using large $k$. Using triangle inequality on Jacobian spectrum, for any $\Wm\in\D$, using $\normf{\Wm-\Wmi{0}}\leq R$, we would have
	\[
	\sigma(\J(\Wm,\Xm),\Scp)\geq \sigma({\J(\Wmi{0},\Xm),\Scp})-LR\geq \bn'-\bn=\bn.
	\]
	Now, observe that
	\begin{align}
		LR= & (1+\psi_1) \Gamma\sqrt{\frac{c_{up} n}{{kK}}}\opnorm{\Cm} \Gamma\sqrt{\frac{c_0K\log(K)}{c_{low}\lambda(\Cm)}}=(1+\psi_1)\Gamma^2\|\Cm\|\sqrt{\frac{c_{up}c_0n\log K}{c_{low}k\lambda(\Cm)}}\\
		\leq & \frac{\bn'}{2}= \sqrt{\frac{c_{low}n\lambda(\Cm)}{8K}},
	\end{align}
	as $k$ satisfies
	\[
	k\geq \order{(1+\psi_1)^2\Gamma^4\|\Cm\|^2\frac{c_{up}K\log(K)}{c_{low}^2\lambda(\Cm)^2}}\ge  \order{\frac{(1+\psi_1)^2\Gamma^4{K\log (K)\opnorm{\Cm}^2}}{\lambda(\Cm)^2}}.
	\]
	Finally, since $LR=4(1+\psi_1)L\norm{\rmi{0}}/\alpha\leq \bn$, the learning rate is
	\[
	\eta\leq \frac{1}{2\bp^2}\min(1,\frac{\bn\beta}{L\norm{\rmi{0}}})=\frac{1}{2\bp^2}=\frac{K}{2c_{up}n\Gamma^2\opnorm{\Cm}^2}.
	\]
	Overall, the assumptions of Theorem \ref{gradnoise} holds with stated $\bn,\bp,L$ with probability $1-2K^{-100}$ (union bounding initial residual and minimum singular value events). This implies for all $t>0$ the distance of current iterate to initial obeys
	\[
	\normf{\Wmi{t}-\Wmi{0}}\leq R.
	\]
	The final step is the properties of the label corruption. Using Lemma \ref{label noise lem}, we find that
	\[
	\tin{\Pro_{\Scp}(\yms-\ym)}\leq 2\rho.
	\]
	Substituting the values corresponding to $\bn,\bp,L$ yields that, for all gradient iterations with
	\[
	\frac{5}{\eta\bn^2}\log \left(\frac{\norm{\rmi{0}}}{2 (1-\alphamax) \rho}\right)\leq\frac{5}{\eta\bn^2}\log \left(\frac{ \Gamma{\sqrt{c_0n\log K/32}}}{2  (1-\alphamax) \rho}\right)=\order{{\frac{K}{\eta n\lambda(\Cm)}}\log \left(\frac{\Gamma\sqrt{n\log K}}{ (1-\alphamax) \rho}\right)}\leq t,
	\]
	denoting the clean labels by $\ymt$  and applying Theorem \ref{gradnoise}, we have that, the infinity norm of the residual obeys (using $\normin{\Pro_{\Scp}(\emm)}=\normin{\Pro_{\Scp}(\ym-\yms)}\leq 2\rho$)
	\[
	\normin{f(\Wm)-\yms}\leq4\rho.
	\]
	This implies that if $\rho\leq \delta/8$, the network will miss the correct label by at most $\delta/2$, hence all labels (including noisy ones) will be correctly classified.
\end{proof}

\subsubsection{Proof of Theorem~\ref{robust path}} \label{proofofrobustpath}

\begin{proof} Since $\Wmti{t}$ are the noiseless iterations, with probability $1-2K^{-100}$, the statements of Theorem \ref{mainthmrobust} hold on $\Wmti{t}$.  To proceed with proof, we first introduce short hand notations. We use
	\begin{align}
		& \rmi{i}=f(\Wmi{i},\Xm)-\ymbii{i}{},~\rmti{i}=f(\Wmti{i},\Xmti{i})-\ymwii{i}{}\\
		&\Ji{i}=\J(\Wmi{i},\Xm),~\Ji{i+1,i}=\J(\Wmi{i+1},\Wmi{i},\Xm),~\Jti{i}=\J(\Wmti{i},\Xmt),~\Jti{i+1,i}=\J(\Wmti{i+1},\Wmti{i},\Xmt)\\
		&d_i=\normf{\Wmi{i}-\Wmti{i}},~p_i=\norm{\rmi{i}-\rmti{i}},~\beta=\Gamma\|\Cm\|\sqrt{c_{up}n/K},~L=\Gamma\|\Cm\|\sqrt{c_{up}n/Kk}.
	\end{align}
	Here $\beta$ is the upper bound on the Jacobian spectrum and $L$ is the spectral norm Lipschitz constant as in Theorem \ref{JLcor}. Applying Lemma \ref{pert dist lem}, note that
	\begin{align}
		&\|\J(\Wmi{t},\Xm)-\J(\Wmti{t},\Xmt)\|\leq L{\norm{\Wmti{t}-\Wmi{t}}}+\Gamma\sqrt{n}\eps\leq Ld_t+\Gamma\sqrt{n}\eps\\
		&\|\J(\Wmi{t+1},\Wmi{t},\Xm)-\J(\Wmti{t+1}, \Wmti{t},\Xmt)\|\leq L(d_t+d_{t+1})/2+\Gamma\sqrt{n}\eps.
	\end{align}
	By defining
	\[
	\embi{t}=\Pro_{\Scn}(\rmti{t})\quad,\quad \rmbi{t}=\Pro_{\Scp}(\rmti{t}),
	\]
	then we can use Theorem~\ref{gradnoise} and the assumption that $2 \sqrt{n} \sum_{i=0}^{t-1}   \left(1- \frac{\eta\alpha^2}{4} \right)^{t-i} |\alphai{i} - \alphai{i+1} |\leq \psi_2 \norm{\rmi{0}}^2 $ to obtain
	\begin{align}
		\norm{\embi{t}}\leq& \norm{\embi{0} }+ \sqrt{n} \sum_{i=0}^{t} |\alphai{i} - \alphai{i+1} | \leq  \norm{\embi{0} } +  \frac{\psi}{2} \norm{\rmi{0} } ,\\
		\norm{\rmbi{t}}^2\le&\left(1-\frac{\eta\alpha^2}{4}\right)^t\norm{\rmbi{0}}^2 +    2 \sqrt{n} \sum_{i=0}^{t-1}   \left(1- \frac{\eta\alpha^2}{4} \right)^{t-i} |\alphai{i} - \alphai{i+1} | \leq    \norm{\rmbi{0}}^2+ \psi_2  \norm{\rmi{0}}^2.
	\end{align}
	Therefore, we can upper bound
	\begin{align}
		\norm{\rmti{t}} = \norm{\embi{t}} + \norm{\rmbi{t}}\leq    \norm{\embi{0} } +  \frac{\psi}{2} \norm{\rmi{0} }  + \norm{\rmbi{0}}+ \sqrt{\psi_2}  \norm{\rmi{0}} = \left(1 + \frac{\psi}{2} +\sqrt{\psi_2} \right) \norm{\rmi{0}}.
	\end{align}
	Following this and setting $\norm{\rmti{t}}\leq \psi' \norm{\rmi{0}}$, note that parameter satisfies
	\begin{align}
		&\Wmi{i+1}=\Wmi{i}-\eta \Ji{i}\rmi{i}\quad,\quad \Wmti{i+1}=\Wmti{i} - \eta \Jti{i}^T\rmti{i}\\
		&\normf{\Wmi{i+1}-\Wmti{i+1}}\leq \normf{\Wmi{i}-\Wmti{i}}+\eta \|\Ji{i} - \Jti{i}\|\normf{\rmi{i}}+\eta \|\Ji{i}\|\norm{\rmi{i}-\rmti{i}}\\
		&d_{i+1}\leq d_i+\eta (\psi' (L d_i+\Gamma\sqrt{n}\eps)\norm{\rmi{0}}+\beta p_i),\label{dd rec}
	\end{align}
	and residual satisfies (using $\Imm \succeq \Jti{i+1,i} \Jti{i}^T/\beta^2\succeq0$)
	\begin{align}
		\rmi{i+1}&=\rmi{i}-\eta \Ji{i+1,i}\Ji{i}^T\rmi{i}\implies\\
		\rmi{i+1}-\rmti{i+1}&\\
		=(\rmi{i}-\rmti{i})&-\eta( \Ji{i+1,i}-\Jti{i+1,i})\Ji{i}^T\rmi{i}-\eta\Jti{i+1,i}(\Ji{i}^T-\Jti{i}^T)\rmi{i}-\eta \Jti{i+1,i} \Jti{i}^T(\rmi{i}-\rmti{i}).\\
		\rmi{i+1}-\rmti{i+1}&=(\Imm-\eta\Jti{i+1,i} \Jti{i}^T)(\rmi{i}-\rmti{i})-\eta( \Ji{i+1,i}-\Jti{i+1,i})\Ji{i}^T\rmi{i} - \eta \Jti{i+1,i}(\Ji{i}^T-\Jti{i}^T)\rmi{i}.\\
		\norm{\rmi{i+1}-\rmti{i+1}}&\leq \norm{\rmi{i}-\rmti{i}}+\eta\beta\norm{\rmi{i}}( L(3d_t+d_{t+1})/2+2\Gamma\sqrt{n}\eps).\\
		\norm{\rmi{i+1}-\rmti{i+1}}&\leq \norm{\rmi{i}-\rmti{i}}+\eta\beta(\norm{\rmti{0}}+p_i)( L(3d_t+d_{t+1})/2+2\Gamma\sqrt{n}\eps).\label{pp rec}
	\end{align}
	where we used $\norm{\rmi{i}}\leq p_i+\psi'\norm{\rmi{0}}$ and $\norm{(\Imm-\eta\Jti{i+1,i}\Jti{i}^T)\vm}\leq \norm{\vm}$ which follows from Lemma~\ref{propertydefinitematrix}. This implies
	\begin{align}
		p_{i+1}\leq p_i+\eta\beta(\psi'\norm{\rmi{0}}+p_i)( L(3d_t+d_{t+1})/2+2\Gamma\sqrt{n}\eps).\label{perturbme}
	\end{align}
	\noindent {\bf{Finalizing proof:}} Next, using Lemma \ref{upresz}, we have $\norm{\rmi{0}}\leq \Theta:=C_0\Gamma \sqrt{n\log K}$. We claim that if
	\begin{align}
		\boxed{\eps\leq  \order{\frac{1}{t_0\eta\Gamma^2n}}\leq\frac{1}{8t_0\eta\beta\Gamma\sqrt{n}}\quad\text{and}\quad L\leq\frac{2}{5t_0\eta\Theta(1+8\eta t_0\beta^2)}\leq\frac{1}{30(t_0\eta\beta)^2\Theta},}
	\end{align}
	(where we used $\eta t_0\beta^2\geq 1$), for all $t\leq t_0$, we have that
	\begin{align}
		p_t\leq 8t(1+\psi')\eta\Gamma\sqrt{n}\eps\Theta\beta \leq \Theta\quad,\quad d_t\leq 2t\eta \Gamma\sqrt{n}\eps\Theta( \psi'+8\eta t_0\beta^2).\label{induct induct}
	\end{align}
	The proof is by induction. Suppose it holds until $t\leq t_0-1$. At $t+1$, via \eqref{dd rec} we have that
	\[
	\frac{d_{t+1}-d_t}{\eta}\leq \psi'(Ld_t\Theta+\Gamma\sqrt{n}\eps\Theta)+ 8t_0\eta\beta^2\Gamma\sqrt{n}\eps\Theta\overset{?}{\leq}2 \Gamma\sqrt{n}\eps\Theta( \psi'+8\eta t_0\beta^2).
	\]
	Right hand side holds since $L\leq \frac{1}{2\eta t_0\Theta}$. This establishes the induction for $d_{t+1}$.
	
	Next, we show the induction on $p_t$. Observe that $3d_t+d_{t+1}\leq 10t_0\eta \Gamma\sqrt{n}\eps\Theta( \psi'+8\eta t_0\beta^2) $. Following \eqref{perturbme} and using $p_t\leq \Theta$, we need
	\begin{align}
		\frac{p_{t+1}-p_t}{\eta}\leq \beta(1+\psi')\Theta( L(3d_t+d_{t+1})+4\Gamma\sqrt{n}\eps)& \overset{?}{\leq}\frac{8}{\alphamax}(1+\psi')\Gamma\sqrt{n}\eps\Theta\beta\iff\\
		L(3d_t+d_{t+1})+4\Gamma\sqrt{n}\eps&\overset{?}{\leq}\frac{8}{\alphamax}\Gamma\sqrt{n}\eps\iff\\
		L(3d_t+d_{t+1})&\overset{?}{\leq}\frac{4}{\alphamax}\Gamma\sqrt{n}\eps\iff\\
		10\alphamax Lt_0\eta(1+8\eta t_0\beta^2)\Theta&\overset{?}{\leq}4\iff\\
		L&\overset{?}{\leq}\frac{2}{5t_0\alphamax\eta(1+8\eta t_0\beta^2)\Theta},
	\end{align}
	where $\alphamax=\max_{1\leq t \leq t_0} \alphai{t}$.  Concluding the induction since $L$ satisfies the final line. Consequently, for all $0\leq t\leq t_0$, we have that
	\begin{equation*}
		\begin{split}
			p_t =& \norm{\rmi{i} -\rmti{i}}  = \norm{f(\Wmi{i},\Xm)-\ymbii{i}{}- f(\Wmti{i},\Xmti{i}) + \ymwii{i}{}}\\
			\led{172}& \alphamax\norm{f(\Wmi{i},\Xm) -  f(\Wmti{i},\Xmti{i}) }\\
			\leq&  8t (1+\psi')\eta\Gamma\sqrt{n}\eps\Theta\beta=c_0t(1+\psi')\eta\eps \Gamma^3n^{3/2}\sqrt{\log K}.
		\end{split}
	\end{equation*}
	where \ding{172} uses the definition of  $\ymbii{i}{}= (1-\alphai{i})  \ym + \alphai{i} f(\Wmi{i},\Xm)$
	and $\ymwii{i}=(1-\alphai{i})  \ym +\alphai{i}   f(\Wmti{i},\Xmt)$. In this way, we can obtain
	\begin{equation*}
		\begin{split}
			\norm{f(\Wmi{i},\Xm) -  f(\Wmti{i},\Xmti{i}) } \leq c_0t(1+\psi')\eta\eps \Gamma^3n^{3/2}\sqrt{\log K}.
		\end{split}
	\end{equation*}
	
	Next, note that, condition on $L$ is implied by
	\begin{align}
		k&\geq 1000\Gamma^2n(t_0\eta\beta)^4\Theta^2/\alphamax^2\\
		&=\order{\Gamma^4n{\frac{K^4}{\alphamax^2 n^4\lambda(\Cm)^4}}\log(\frac{\Gamma\sqrt{n\log K}}{\rho})^4(\|\Cm\|\Gamma\sqrt{n/K})^4(\Gamma \sqrt{n\log K})^2}\\
		&=\order{\Gamma^{10}{\frac{K^2\|\Cm\|^4}{\alphamax^2\lambda(\Cm)^4}}\log(\frac{\Gamma\sqrt{n\log K}}{\rho})^4\log^2(K)}
	\end{align}
	which is implied by $k\geq \order{\Gamma^{10}{\frac{K^2\|\Cm\|^4}{\alphamax^2 \lambda(\Cm)^4}}\log(\frac{\Gamma\sqrt{n\log K}}{\rho})^6}$.
	
	Finally, following \eqref{induct induct}, distance satisfies
	\[
	d_t\leq 20t \psi'\eta^2t_0 \Gamma\sqrt{n}\eps\Theta\beta^2\leq \order{t\psi'\eta\eps \frac{\Gamma^4Kn}{\lambda(\Cm)}\log(\frac{\Gamma\sqrt{n\log K}}{\rho})^2}.
	\]
	The proof is completed.
\end{proof}

\end{document}